\documentclass{article}
\usepackage{arxiv}
\usepackage[utf8]{inputenc}
\usepackage[T1]{fontenc}
\usepackage{url}
\usepackage{booktabs}
\usepackage{amsfonts}
\usepackage{nicefrac}
\usepackage{microtype}
\usepackage{lipsum}
\usepackage{graphicx}
\usepackage[hidelinks]{hyperref}
\usepackage{cite}
\usepackage{amsmath,amssymb}
\usepackage{amsthm}
\usepackage{algorithmic}
\usepackage{xcolor}
\usepackage{multirow}
\usepackage{subcaption}
\usepackage{balance}
\usepackage{algorithm}
\usepackage{tabularx}
\usepackage{appendix}
\usepackage{amssymb}
\usepackage{bm}

\newtheorem{theorem}{Theorem}
\newtheorem{corollary}{Corollary}[theorem]

\graphicspath{ {./} }

\title{SIGMA: Symmetry-aware, Intelligent, Geometric, Multi-objective Adaptive Control for Robust, Dependable Traffic Management}

\author{{\hspace{1mm}Pratham Payra} \\
	SQC \& OR\\
	Indian Statistical Institute\\
	\And
    {\hspace{1mm}Jagadish B} \\
	SQC \& OR\\
	Indian Statistical Institute\\
    \And
    {\hspace{1mm}Tanmay Sen} \\
	SQC \& OR\\
	Indian Statistical Institute\\
    \And
	{\hspace{1mm}Tanujit Chakraborty} \\
	Sorbonne Center for Artificial Intelligence\\
	Sorbonne University Abu Dhabi\\
	\texttt{tanujit.chakraborty@sorbonne.ae} \\}

\begin{document}
\maketitle
\begin{abstract}
Traffic signal control is a challenging sequential decision-making problem that requires reliable and timely adaptation while balancing competing objectives, including traffic throughput, fairness in vehicle delays, signal stability, predictable switching, and emergency vehicle prioritization. Existing reinforcement learning (RL) approaches typically optimize predefined objectives, provide limited support for dynamically changing operational priorities, and may generalize poorly across geometrically equivalent intersections. We propose SIGMA (Symmetry-aware, Intelligent, Geometric, Multi-objective Adaptive traffic control), a reliability-aware RL framework augmented with a large language model (LLM) for dynamic objective adaptation and orientation-invariant policy learning. SIGMA translates natural-language emergency instructions into priority vectors that guide a multi-objective actor-critic controller, avoiding manual reward redesign. Symmetry-aware rotational augmentation improves transferability across homogeneous four-way intersections, while an offline-to-online learning strategy provides stable initialization and subsequent adaptation to evolving traffic conditions. To assess system reliability, we establish structural properties characterizing emergency service levels, graceful degradation under LLM failure, and sensitivity to traffic-demand perturbations, complemented by bootstrap-based statistical validation. We evaluate SIGMA using SUMO on four urban intersections modeled after traffic scenarios in Kolkata, India, against fixed-time, actuated, and Deep Q-Network controllers. SIGMA reduces average and emergency waiting times and queue lengths while improving traffic throughput. Empirical reliability and ablation analyses further demonstrate robustness to component failure and geometric orientation. Overall, SIGMA provides a reliable and adaptive intelligent traffic-control framework that integrates language-guided decision-making, multi-objective learning, and statistical reliability assessment.
\end{abstract}

\section{Introduction}
\label{sec:intro}

Traditional traffic signal control strategies \cite{manikandan2026systematic}, including fixed-time, adaptive control techniques, and actuated controllers, rely on predetermined timing plans or local sensor measurements. Although these methods are computationally efficient and easy to deploy, they cannot effectively adapt to rapidly changing traffic conditions or coordinate multiple, often conflicting, operational objectives. To overcome these limitations, metaheuristic optimization techniques such as Genetic Algorithms (GA), Particle Swarm Optimization (PSO), and Ant Colony Optimization (ACO) have been widely investigated for optimizing signal timing plans \cite{shaikh2020review, shirke2022metaheuristic}. These approaches improve traffic performance by searching for near-optimal signal schedules under predefined traffic conditions. However, they generally require repeated optimization, careful parameter tuning, and explicit traffic models, making real-time adaptation difficult in highly dynamic traffic environments \cite{cascetta2006models, colson2007bilevel, zhang2022datadriven, zhang1997traffic}.

Reinforcement learning (RL) has emerged as an effective framework for adaptive traffic signal control \cite{b2,b3,b10}. 
In RL based systems, traffic conditions are represented as the state, signal phases constitute the actions, and the controller learns a policy that maximizes long term traffic performance through interaction with the environment \cite{b6}. Recent advances in deep reinforcement learning, graph neural networks, and multi-agent coordination have significantly improved the scalability and effectiveness of learned traffic controllers \cite{b10,b11,b13,b14,b19,b20,b33}. Several important challenges remain. Existing RL based controllers generally assume that optimization objectives are specified before training and remain fixed during deployment. In practice, however, traffic management priorities may change dynamically. For example, an approaching ambulance may require immediate signal priority, while traffic authorities may temporarily prioritize a major arterial road following an accident or public event. Incorporating such high level operational decisions typically requires manually redesigning reward functions or introducing application specific rules, limiting the flexibility of existing approaches \cite{b5,b27}. Second, most RL controllers rely on local traffic descriptors such as queue lengths or waiting times. Although effective for intersection level optimization, these representations provide only a partial view of the traffic state and often fail to capture traffic pressure, downstream congestion, spillback effects, or the interactions among competing optimization objectives. Consequently, balancing traffic efficiency, fairness, signal stability, predictability, and emergency response remains difficult within a unified learning framework \cite{b10,b20,b45}. Third, learned policies often generalize poorly across intersections with different geometric orientations. Two homogeneous four-way intersections may be identical except for a $90^\circ$ rotation, yet conventional RL policies treat them as different environments because traffic semantics become tied to absolute cardinal directions rather than relative traffic patterns. As a result, policies frequently require costly retraining for each deployment \cite{b14,b19}.

Despite these advances, optimizing traffic signal control remains fundamentally a multi-objective optimization problem. Practical traffic management requires simultaneously improving traffic throughput, reducing vehicle waiting times, maintaining fairness among competing traffic streams, ensuring smooth and predictable signal transitions, and rapidly responding to emergency vehicles. These objectives are often conflicting, and optimizing one objective may adversely affect the others. To address this challenge, recent studies have incorporated multi-objective reinforcement learning (MORL) into traffic signal control. Existing MORL approaches typically formulate multiple performance measures as weighted reward functions or Pareto optimization objectives. Representative examples include cooperative multi-objective reinforcement learning for jointly optimizing traffic efficiency and carbon emissions, hierarchical reinforcement learning for scalable urban traffic management, and evolutionary multi-objective optimization combined with RL \cite{raith2014traffic, schmaranzer2021multiobjective, moghdani2025metaheuristic, saezaguado2009fixed, puri2006maxmin, binsfeld2025green, gonzalezvelarde2008multistop, mahmoodi2025uav, hamdan2023airtraffic, stefanello2017traffic, zhou2020promotion, chen2024traveltime}. These methods have demonstrated that simultaneously optimizing multiple objectives can produce more balanced traffic control policies than conventional single-objective reinforcement learning.

Although these studies represent an important step toward practical traffic signal optimization, several important limitations remain. First, most existing methods assume that the optimization objectives and their associated reward weights are predefined before training and remain fixed throughout deployment. In practice, however, traffic management priorities frequently change. For example, emergency vehicles may require immediate signal priority, traffic authorities may temporarily prioritize a major arterial road following an accident, or congestion mitigation may become more important during peak hours. Adapting existing controllers to such changing operational requirements often requires manually redesigning reward functions or retraining the reinforcement learning policy. Second, most existing approaches focus primarily on optimizing traffic efficiency and a limited number of additional objectives, while overlooking the interactions among traffic pressure, waiting-time fairness, signal transition stability, predictable signal switching, and emergency responsiveness within a unified optimization framework. Consequently, balancing these competing objectives remains a challenging problem. Third, learned policies generally exhibit limited transferability across intersections with different geometric orientations. Policies trained for one intersection frequently require retraining when deployed at another geometrically equivalent intersection whose approaches are simply rotated, increasing deployment cost and reducing practical applicability \cite{xu2026multicriteria, smartjunction2024}.

Recent advances in large language models (LLMs) provide a new opportunity to address these limitations. Unlike conventional reinforcement learning, which relies on predefined numerical reward functions, LLMs are capable of interpreting high-level instructions expressed in natural language and converting them into structured representations suitable for downstream decision making. Rather than replacing reinforcement learning, an LLM can serve as an interface between human operators and the traffic controller, enabling operational priorities to be modified dynamically without manually redesigning reward functions. This capability is particularly attractive for intelligent transportation systems, where emergency situations and changing traffic management policies require rapid adaptation that is difficult to encode using static optimization objectives alone \cite{kosanoglu2024deep, lee2024approximate}.

To address these limitations, we propose \textbf{SIGMA} (\textbf{S}ymmetry-aware, \textbf{I}ntelligent, \textbf{G}eometric, \textbf{M}ulti-objective \textbf{A}daptive Traffic Control), an LLM guided reinforcement learning framework for adaptive and orientation invariant traffic signal control. Rather than redesigning reward functions whenever operational priorities change, SIGMA enables traffic operators to express high-level directives in natural language. These directives are interpreted by a large language model (LLM) and translated into priority vectors that guide the RL controller during decision making. To improve policy transferability across homogeneous four-way intersections, SIGMA employs symmetry-aware rotational augmentation to learn orientation-invariant policies without per-site retraining. The controller jointly optimizes traffic throughput, waiting-time fairness, signal stability, predictable signal switching, and emergency responsiveness, while an offline pretraining stage followed by online policy refinement enables stable learning and adaptation to real-time traffic conditions.

The primary contributions of this work are summarized as follows.

\begin{itemize}

\item We introduce an LLM based interface that converts natural language traffic management instructions into priority vectors, allowing operational priorities to be modified during deployment without manually redesigning reward functions.

\item We develop a rotational augmentation strategy that learns orientation invariant policies for homogeneous four-way intersections, enabling policy transfer without per-site retraining.

\item We formulate traffic signal control as a differentiable multi-objective optimization problem that jointly considers traffic throughput, waiting time fairness, signal stability, predictable signal switching, and emergency responsiveness.

\item We combine supervised offline pretraining with online reinforcement learning to obtain stable initialization while allowing continuous adaptation to evolving traffic conditions.

\end{itemize}

We evaluate SIGMA in the SUMO traffic simulator using realistic four-way intersection models derived from urban traffic scenarios in Kolkata, India, and compare it with fixed-time, actuated, and Deep Q-Network (DQN) controllers. Experimental results demonstrate that SIGMA reduces average waiting time, emergency waiting time, and queue length while improving traffic throughput, showing that emergency vehicle prioritization can be achieved without sacrificing overall traffic efficiency.

The remainder of this paper is organized as follows. Section~\ref{sec:relatedworks} reviews related work. Section~\ref{sec:prob_for} formulates the traffic signal control problem. Section~\ref{sec:res_met} presents the proposed SIGMA framework. Section~\ref{sec:reliability} develops the theoretical analysis. Section~\ref{sec:rnd} reports the experimental results. Section~\ref{sec:limfurwork} discusses limitations and future directions. Finally, Section~\ref{sec:conclusion} concludes the paper.

\section{Related Work} \label{sec:relatedworks}

The traffic signal control has evolved from rule-based approaches to learning based paradigms. Early methods are generally divided into two types. The pre-timed control \cite{b45,b46,b47} is based on fixed green time obtained from historical data, while the vehicle-actuated control \cite{b48,b49} reacts to the detectors but depends on handcrafted rules without anticipation of the future and coordination across the network. These limitations have encouraged the use of more intelligent strategies.

\paragraph{Traffic Control Adaptation Using Reinforcement Learning : }

Reinforcement learning (RL) enables agents to maximize long-term rewards directly from the interaction with the environment as reduced delays \cite{b50,b51,b52}. Early tabular Q-learning \cite{b53,b54} was limited to small discrete states, but deep RL \cite{b56,b57,b59} overcame this limitation by employing neural networks capable of handling queue lengths, delays, and image-based vehicle positions. IntelliLight \cite{b10} applied deep RL on real world surveillance data and outperformed the baselines with phase gated networks and memory palaces. However these single intersection methods lack of network-level coordination and downstream congestion awareness.

\paragraph{Pressure Based and Max Pressure Based Control Methods : }

Max-pressure (MP) control \cite{b37,b38} theoretically achieves throughput optimality by balancing the number of incoming and outgoing vehicles. In \cite{b9}, MP was integrated with deep RL for arterial coordination and was able to sense supply-demand imbalances that are not observable by queue lengths. However, these approaches focus only on efficiency, and do not consider emergency prioritization and natural language interfaces. Our framework fills these gaps by combining pressure-based states with LLM-guided emergency handling.

\paragraph{Traffic Signal Control with Large Language Models : }

LLMs have made it possible to understand natural language for traffic management. LLMLight \cite{b39} utilizes LLMs as direct controllers with knowledgeable prompting using LightGPT, and TrafficGPT \cite{b42} provides conversational management by integrating LLM-TFM. The two approaches are critically different in philosophy: TrafficGPT instructs a human controller, introducing approval latency; LLMLight outputs decisions directly but reasons narrowly about queues, ignoring other operational factors of traffic control. Conversely, our framework generates a sparse priority vector from the LLM which is fed into a pre-trained actor-critic network that separates semantic interpretation from tactical execution while jointly optimizing five domain-informed objectives.

\paragraph{Research gap and Motivation : }

Existing formulations are based on previous work and do not take into account key operational constraints such as transition smoothness (to avoid confusion for the driver), signal sequence consistency (to maintain cyclic patterns), emergency prioritization (dynamic preemption without hard-coded rules), queue pressure (imbalances at the network level) and maximum waiting time fairness (to avoid starvation on low pressure approaches). Moreover, existing systems cannot handle dynamic emergency instructions such as "make way for an ambulance from the north", are based on rigid rules, assume standard orientations without symmetry-aware generalization, and lack hierarchical separation between high-level priorities and low-level timing, leading to unnecessary disruption during critical incidents. These gaps motivate us to propose \textbf{SIGMA}, a unified framework that integrates LLM-driven understanding, pressure-based states, and multi-objective RL to address the full spectrum of operational needs.

\section{Problem Formulation}
\label{sec:prob_for}

We consider a four-legged intersection with homogeneous approaches. Each approach has the same lane configuration, turning permissions, and operational characteristics. The intersection degree is fixed at $d_v = 4$.

At each discrete time step $t$, the local state is defined as
\[
\mathbf{s}^{(t)} = \left[ \mathbf{p}^{t-1}, \mathbf{M}_q^{t}, \boldsymbol{\Gamma}^{t}, \boldsymbol{\theta}^{t} \right] \in \mathcal{S},
\]
where $a^t \in \tilde{\mathcal{A}}$ denotes the signal phase executed at time step $t$. The components of the state are described below.

The net pressure mask $\mathbf{M}_q^t \in \mathbb{R}^{16}$ represents the pressure associated with each possible movement. Let
\[
\mathbf{L}_{\mathrm{in}}^t = \left( \frac{l_e^{\mathrm{in}}}{l_{e(\mathrm{max})}^{\mathrm{in}}}, \frac{l_n^{\mathrm{in}}}{l_{n(\mathrm{max})}^{\mathrm{in}}}, \frac{l_w^{\mathrm{in}}}{l_{w(\mathrm{max})}^{\mathrm{in}}}, \frac{l_s^{\mathrm{in}}}{l_{s(\mathrm{max})}^{\mathrm{in}}} \right)^{\top} \in \mathbb{R}^{4}
\]
denote the normalized incoming queue pressures from the east, north, west, and south approaches, respectively. Similarly, let
\[
\mathbf{L}_{\mathrm{out}}^t = \left( \frac{l_e^{\mathrm{out}}}{l_{e(\mathrm{max})}^{\mathrm{out}}}, \frac{l_n^{\mathrm{out}}}{l_{n(\mathrm{max})}^{\mathrm{out}}}, \frac{l_w^{\mathrm{out}}}{l_{w(\mathrm{max})}^{\mathrm{out}}}, \frac{l_s^{\mathrm{out}}}{l_{s(\mathrm{max})}^{\mathrm{out}}} \right)^{\top} \in \mathbb{R}^{4}
\]
denote the normalized outgoing queue pressures toward the four approaches.

The incoming pressure mask is constructed by repeating the pressure of each approach across its four possible outgoing movements:
\[
\mathbf{M}_{q(\mathrm{in})}^{t} = \left( \underbrace{ \frac{l_e^{\mathrm{in}}}{l_{e(\mathrm{max})}^{\mathrm{in}}}, \ldots, \frac{l_e^{\mathrm{in}}}{l_{e(\mathrm{max})}^{\mathrm{in}}} }_{4}, \underbrace{ \frac{l_n^{\mathrm{in}}}{l_{n(\mathrm{max})}^{\mathrm{in}}}, \ldots, \frac{l_n^{\mathrm{in}}}{l_{n(\mathrm{max})}^{\mathrm{in}}} }_{4}, \underbrace{ \frac{l_w^{\mathrm{in}}}{l_{w(\mathrm{max})}^{\mathrm{in}}}, \ldots, \frac{l_w^{\mathrm{in}}}{l_{w(\mathrm{max})}^{\mathrm{in}}} }_{4}, \underbrace{ \frac{l_s^{\mathrm{in}}}{l_{s(\mathrm{max})}^{\mathrm{in}}}, \ldots, \frac{l_s^{\mathrm{in}}}{l_{s(\mathrm{max})}^{\mathrm{in}}} }_{4} \right)^{\top}.
\]

The outgoing pressure mask is constructed in the same way:
\[
\mathbf{M}_{q(\mathrm{out})}^{t} = \left( \underbrace{ \frac{l_e^{\mathrm{out}}}{l_{e(\mathrm{max})}^{\mathrm{out}}}, \ldots, \frac{l_e^{\mathrm{out}}}{l_{e(\mathrm{max})}^{\mathrm{out}}} }_{4}, \underbrace{ \frac{l_n^{\mathrm{out}}}{l_{n(\mathrm{max})}^{\mathrm{out}}}, \ldots, \frac{l_n^{\mathrm{out}}}{l_{n(\mathrm{max})}^{\mathrm{out}}} }_{4}, \underbrace{ \frac{l_w^{\mathrm{out}}}{l_{w(\mathrm{max})}^{\mathrm{out}}}, \ldots, \frac{l_w^{\mathrm{out}}}{l_{w(\mathrm{max})}^{\mathrm{out}}} }_{4}, \underbrace{ \frac{l_s^{\mathrm{out}}}{l_{s(\mathrm{max})}^{\mathrm{out}}}, \ldots, \frac{l_s^{\mathrm{out}}}{l_{s(\mathrm{max})}^{\mathrm{out}}} }_{4} \right)^{\top} \in \mathbb{R}^{16}.
\]

The raw pressure difference is calculated as $\mathbf{M'}_q{}^{t} = \mathbf{M}_{q(\mathrm{in})}^{t} - \mathbf{M}_{q(\mathrm{out})}^{t}$. Because the pressure difference may contain negative values, we apply the following shift transformation: $\mathbf{M}_q^{t} = \mathbf{M'}_q{}^{t} - \min\left(\mathbf{M'}_q{}^{t}\right)$. This transformation produces a non-negative representation and supports stable optimization. Unlike scalar queue-length measures or approach-level aggregate measures, the resulting 16-dimensional mask represents the pressure difference for each individual movement. The incoming and outgoing queue definitions are illustrated in Figure~\ref{fig:Inc_otg_li}.

\begin{figure*}[!t]
    \centering
    \includegraphics[width=\linewidth]{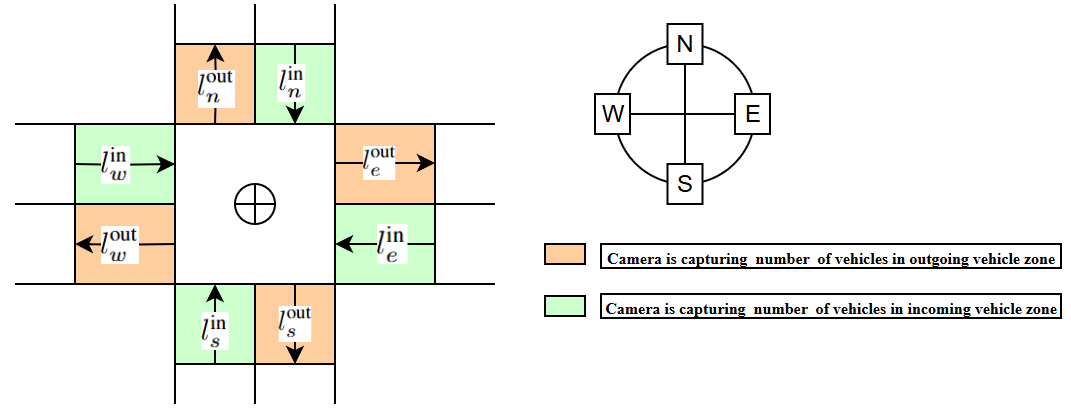}
    \caption{Incoming and outgoing queue lengths $l_i$ for $i \in \{E,N,W,S\}$.}
    \label{fig:Inc_otg_li}
\end{figure*}

The phase-history vector $\mathbf{p}^{t-1} \in \mathbb{R}^{16}$ is a one-hot representation of the signal phase executed at the previous time step. It is obtained using the valid transformation matrix $\mathbf{A}_{\mathrm{TR}}$: $\mathbf{p}^{t-1} = a^{t-1}\mathbf{A}_{\mathrm{TR}}$.

The waiting-time mask $\mathbf{M}_w^t \in \mathbb{R}^{16}$ represents the maximum waiting time observed on each approach. Let $\boldsymbol{\Gamma}^{t} = \left( \tau_e, \tau_n, \tau_w, \tau_s \right)^{\top} \in \mathbb{R}^{4}$ denote the maximum waiting times for the east, north, west, and south approaches. Each value is repeated across the four outgoing movements associated with the corresponding approach:
\[
\mathbf{M}_w^t = \left( \underbrace{\tau_e,\ldots,\tau_e}_{4}, \underbrace{\tau_n,\ldots,\tau_n}_{4}, \underbrace{\tau_w,\ldots,\tau_w}_{4}, \underbrace{\tau_s,\ldots,\tau_s}_{4} \right)^{\top} \in \mathbb{R}^{16}.
\]

This representation is used to evaluate waiting-time fairness, as described in Table~\ref{tab:lossreward}.

The emergency-priority vector $\boldsymbol{\theta}^t \in \{0,1\}^{16}$ is a sparse binary vector that identifies movements requiring emergency priority. (These instructions are generated by the LLM-based reasoning module described in ~\ref{par:llm}.)

The signal phases are organized into three groups, as shown in Figure~\ref{fig:overview}. Type 1 phases, denoted by $G_1$, are single-approach phases. In each phase, all permitted movements from one incoming approach receive the right of way, resulting in four phases.Type 2 phases, denoted by $G_2$, allow straight-through and left-turn movements from a pair of opposite approaches. This group contains two phases: one for the north-south pair and one for the east-west pair.Type 3 phases, denoted by $G_3$, allow right-turn and U-turn movements from a pair of opposite approaches. This group also contains two phases. Therefore, the signal plan contains eight admissible phases in total.

\begin{figure*}[!t]
    \centering
    \includegraphics[width=\linewidth]{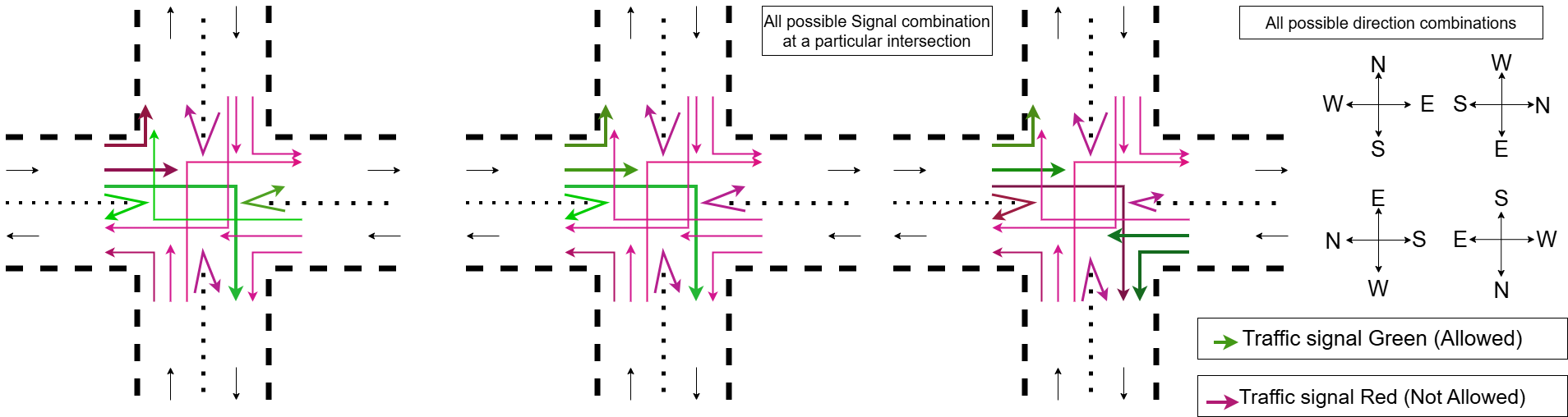}
    \caption{Signal phases organized into three pattern groups. Type 1 phases ($G_1$) serve a single incoming approach. Type 2 phases ($G_2$) allow straight-through and left-turn movements from opposite approaches. Type 3 phases ($G_3$) allow right-turn and U-turn movements from opposite approaches. In total, eight admissible phases are defined.}
    \label{fig:overview}
\end{figure*}

Given the state $\mathbf{s}^t$, the controller selects a phase $a^t \in \mathcal{A}$ according to the following six control objectives:

(1) Prioritise emergency movements while minimizing disruption to general traffic; (2) reduce the average travel time; (3) limit the maximum individual waiting time to promote fair service; (4) support smooth traffic progression by reducing unnecessary phase changes; (5) preserve the established phase cycle during emergency-priority operations; and (6) reduce network-level pressure imbalances by prioritizing movements with high net pressure and limiting downstream spillback.

\section{Research Methodology}
\label{sec:res_met}

SIGMA consists of two phases: offline pretraining and online execution. The complete architecture is shown in Figure~\ref{fig:arc}.

\begin{figure*}[!t]
    \centering
    \includegraphics[width=\linewidth]{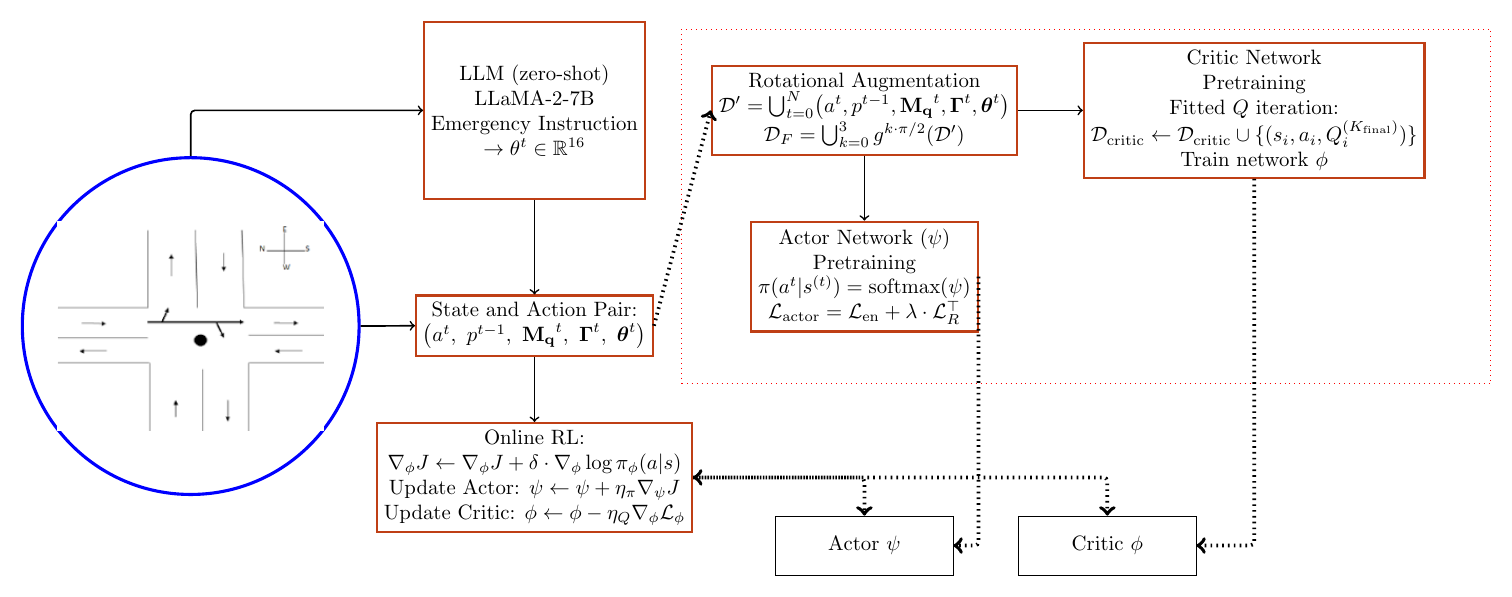}
    \caption{Overview of the SIGMA architecture, including the offline pretraining and online execution phases.}
    \label{fig:arc}
\end{figure*}

During offline pretraining, actor and critic networks are trained using synthetic traffic trajectories. These trajectories include LLM-generated emergency-priority parameters and pressure-based state representations. The pressure mask is defined as
$\mathbf{M}_q^t = \mathbf{M}_{q(\text{in})}^t - \mathbf{M}_{q(\text{out})}^t$ and shifted to ensure non-negative values. It represents the difference between upstream demand and downstream supply for each movement. This information helps the policy learn both normal traffic efficiency and emergency responsiveness before deployment.

During online execution, the LLM generates priority parameters for each intersection from a natural-language emergency instruction. These parameters are combined with real-time pressure measurements and included in the local state. The pretrained policy then adapts to current traffic conditions while giving greater priority to emergency movements and movements with high positive net pressure.

\textbf{Action space:}\label{ssec:actionspace}
At time step $t$, a signal configuration is represented by a $4 \times 4$ binary matrix
$\mathcal{A}^t = ((A^t_{ij}))_{4 \times 4}$, where
$i,j \in \{\text{E},\text{N},\text{W},\text{S}\}$. Here, $A^t_{ij}=1$ means that vehicles may move from incoming direction $i$ to outgoing direction $j$.

Under standard traffic-signal constraints, eight phase patterns are allowed. These include four single-pivot dominant phases, two phases that combine straight and left-turn movements for the N--S and E--W directions, and two phases that combine right-turn and U-turn movements for the N--S and E--W directions. The set of valid phases is denoted by $\mathcal{A}_{AC-S}$.

Each phase $\mathbf{p}_i \in \mathcal{A}_{AC-S}$ is represented by an 8-dimensional one-hot vector $\mathbf{a}_i \in \{0,1\}^8$. The corresponding movement matrix is obtained using the fixed transition matrix $A_{TR} \in \{0,1\}^{8 \times 16}$:

\[
A_i = \mathbf{a}_i A_{TR}.
\]

The construction of $A_{TR}$ is given in Appendix~\ref{subsec:rotation-operators}.

\textbf{Emergency-priority generation.}\label{par:llm}
For a four-legged intersection, emergency priorities are represented by a matrix
$\Theta=(\theta_{ij})_{4 \times 4}$, where
$\theta_{ij}\in\{0,1\}$ indicates whether movement $i \rightarrow j$ should receive priority. The matrix is flattened using the operator $\mathcal{RT}(\cdot)$ to obtain a sparse 16-dimensional vector
$\boldsymbol{\theta}^t \in \{0,1\}^{16}$. Only movements that are necessary for the emergency route are assigned a value of one. This limits the effect of emergency handling on regular traffic.

The priority vector follows one of three rules. For an exact path, only the specified movement is assigned priority. For an incoming-only instruction, all movements from the specified incoming direction are assigned priority. For an outgoing-only instruction, all movements directed toward the specified outgoing direction are assigned priority.

For each intersection $v$ and emergency instruction $m$, SIGMA constructs the prompt
\[
\mathcal{PR}_v =
S \oplus \mathcal{ID} \oplus \mathcal{AC} \oplus \mathcal{OF}
\oplus \mathcal{EX} \oplus m \oplus \mathcal{GR},
\]

where $S$ defines the role of the LLM, $\mathcal{ID}$ describes the intersection, $\mathcal{AC}$ lists the eight valid phase patterns, $\mathcal{OF}$ defines the required 16-dimensional binary output, $\mathcal{EX}$ provides examples, and $\mathcal{GR}$ specifies sparsity and safety requirements.

The LLM generates the priority vector as

\[
\boldsymbol{\theta}^t = \mathcal{LLM}(\mathcal{PR}_v).
\]

The resulting vector is added to the local state, allowing the pretrained policy to respond to emergency movements.

If the LLM is unavailable because of a timeout, API failure, or network problem, SIGMA sets
$\boldsymbol{\theta}^t=\mathbf{0}$. The controller then operates as SIGMA-QW, which uses queue pressure and waiting time but does not use emergency-priority information. This fallback mechanism prevents the failure of the LLM component from causing a complete system failure. The corresponding stability analysis is provided in Section~\ref{sec:reliability}.

The LLM is not required by the controller architecture. It is one possible implementation of a priority-extraction module. Any method that produces an equivalent vector
$\boldsymbol{\theta}^t \in \{0,1\}^{16}$ can be used instead. When
$\boldsymbol{\theta}^t=\mathbf{0}$, the model reduces to SIGMA-QW. Although we use a lightweight LLM because it can handle different instruction formats, keyword matching or a smaller specialized model may also be used when lower latency is required.

\textbf{Utility functions:}
SIGMA uses five utility functions to guide actor-critic pretraining and online learning. These functions represent important traffic-control objectives and are listed in Table~\ref{tab:lossreward}.

\begin{table*}[t]
\centering
\caption{Utility functions and their loss and reward components. The actor outputs the distribution $\pi^{(t)}$, and the action is $a^t=\operatorname{softmax}(\pi^{(t)})$.}
\label{tab:lossreward}
\scriptsize{
\begin{tabularx}{\textwidth}{p{1.9cm} p{6.0cm} p{3.9cm} X}
\toprule
\textbf{Objective} & \textbf{Utility Function} & \textbf{Actor Loss} & \textbf{Reward} \\
\midrule
Markovian consistency & 
$\mathcal{U}_{\text{M}} = \|\pi^{(t)}\mathcal{Q} - \pi^{(t+1)\top}\|_2^2$ &
$\mathcal{L}_{\text{M}} = \frac{1}{|\mathcal{D}|-1}\sum_{t=1}^{T-1} \mathcal{U}_{\text{M}}^{(t)}$ &
$r_{\text{M}}^{(i)} = -\mathcal{U}_{\text{M}}^{(i-1)}$ \\
\midrule
Action smoothness & 
$\mathcal{U}_{\text{S}} = \|(\pi^{(t+1)} - \pi^{(t)})A_{\text{TR}}\|_2^2$ &
$\mathcal{L}_{\text{S}} = \frac{1}{|\mathcal{D}|-1}\sum_{t=1}^{T-1} \mathcal{U}_{\text{S}}^{(t)}$ &
$r_{\text{S}}^{(i)} = -\mathcal{U}_{\text{S}}^{(i-1)}$ \\
\midrule
Queue pressure reduction &\begin{minipage}{0.3\textwidth}
$\mathbf{M}_q^t = \mathbf{M}_{q(\text{in})}^t-\mathbf{M}_{q(\text{out})}^t$ (shifted to be non-negative); 
$\mathcal{U}_{\text{Q}} =
\|\frac{1}{4}\pi^{(t+1)\top}A_{\text{TR}}\mathbf{M}_q^t
-\max(\mathbf{M}_q^t)\|_2^2$
\end{minipage}&
$\mathcal{L}_{\text{Q}} = \frac{1}{|\mathcal{D}|-1}\sum_{t=1}^{T-1} \mathcal{U}_{\text{Q}}^{(t)}$ &
$r_{\text{Q}}^{(i)} = -\mathcal{U}_{\text{Q}}^{(i-1)}$ \\
\midrule
Waiting time fairness & 
$\mathcal{U}_{\text{W}} =
\|\frac{1}{4}\pi^{(t+1)\top}A_{\text{TR}}\mathbf{M}_w^t
-\max(\boldsymbol{\Gamma}^t)\|_2^2$ &
$\mathcal{L}_{\text{W}} = \frac{1}{|\mathcal{D}|-1}\sum_{t=1}^{T-1} \mathcal{U}_{\text{W}}^{(t)}$ &
$r_{\text{W}}^{(i)} = -\mathcal{U}_{\text{W}}^{(i-1)}$ \\
\midrule
Emergency alignment & 
$\mathcal{U}_{\text{E}} =
\|(\pi^{(t)\top}A_{\text{TR}}-\boldsymbol{\theta}^{t\top})\|_2^2$ &
$\mathcal{L}_{\text{E}} = \frac{1}{|\mathcal{D}|}\sum_{t=1}^{T} \mathcal{U}_{\text{E}}^{(t)}$ &
$r_{\text{E}}^{(i)} = -\mathcal{U}_{\text{E}}^{(i)}$ \\
\bottomrule
\end{tabularx}}
\end{table*}

The Markovian-consistency utility
$(\mathcal{U}_{\text{M}})$ encourages predictable signal transitions. The prior matrix
$\mathcal{Q}\in\mathbb{R}^{8\times 8}$ describes the expected relationship between consecutive phases. The action-smoothness utility,
$(\mathcal{U}_{\text{S}})$, penalizes large changes between consecutive action distributions. Smooth transitions reduce unnecessary switching and provide more predictable behavior for drivers. An example is shown in Figure~\ref{fig:ac_smooth}.

\begin{figure*}[!t]
    \centering
    \includegraphics[width=\linewidth]{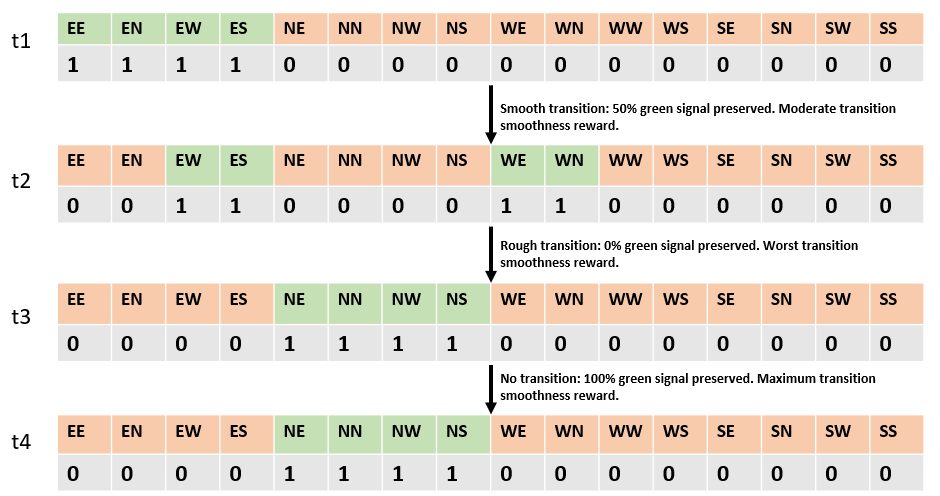}
    \caption{Examples of different types of action smoothness.}
    \label{fig:ac_smooth}
\end{figure*}

The queue-pressure utility
$(\mathcal{U}_{\text{Q}})$ is based on the max-pressure principle. Where,
$\mathbf{L}_{\text{in}}^t,\mathbf{L}_{\text{out}}^t\in\mathbb{R}^4$ denote the incoming and outgoing queue vectors, with corresponding masks
$\mathbf{M}_{q(\text{in})}^t$ and $\mathbf{M}_{q(\text{out})}^t$. Their difference,
$\mathbf{M}_q^t$, measures the imbalance between upstream demand and downstream supply. The utility penalises actions that provide insufficient service to movements with high pressure and therefore helps reduce downstream spillback.The waiting-time utility
$(\mathcal{U}_{\text{W}})$ improves fairness. A controller that uses only pressure may repeatedly delay individual approaches. This utility penalizes differences from the largest waiting time
$\boldsymbol{\Gamma}^t\in\mathbb{R}^4$ and encourages more balanced service.The emergency-alignment utility
$(\mathcal{U}_{\text{E}})$ encourages the policy to select movements that match the LLM-generated priority vector
$\boldsymbol{\theta}^t$. This allows the controller to respond to emergencies without using fixed, hard-coded preemption rules.For each utility, the corresponding actor loss is the average utility over the training data, while the reward is its negative value. The five utilities therefore provide a differentiable and computationally efficient multi-objective learning framework.

\textbf{Rotation-based data augmentation:}
SIGMA uses rotational augmentation during offline pretraining to improve orientation invariance. The method assumes that the four approaches have the same geometric structure, including the same lane and turning arrangements, while traffic demand may differ across directions.Without augmentation, a model trained mainly on east-heavy traffic may perform poorly when traffic is concentrated in the north. To address this problem, each scenario is rotated by $0^\circ$, $90^\circ$, $180^\circ$, and $270^\circ$. This allows the policy to reuse knowledge across all cardinal directions.

The original dataset $\mathcal{D}'$ contains transitions
$(a^t,p^{t-1},\mathbf{M}_q^t,\boldsymbol{\Gamma}^t,\boldsymbol{\theta}^t)$. Separate operators are used for the 8-dimensional action vector ($\rho_a$), the 16-dimensional priority and phase-history vectors ($\rho$), and the 4-dimensional queue and waiting-time vectors ($\rho'$). The operator definitions and direction mappings are provided in Appendix~\ref{subsec:rotation-operators}.

The augmented dataset is

\[
\mathcal{D}_F =
\bigcup_{k=0}^{3} g^{k\pi/2}(\mathcal{D}'),
\]

where $g^{k\pi/2}(\cdot)$ applies the appropriate rotation to every state component. This procedure increases the effective training diversity by a factor of four while preserving the geometry of each scenario.

\textbf{Actor pretraining:}
Pretraining gives the actor a useful initial policy and reduces the instability associated with random initialisation. The actor $\pi_\psi(\mathbf{s}^{(t)})$ maps the state to an action distribution using feed-forward layers, ReLU activations, and a softmax output. Its parameters $\psi$ are trained using expert actions from $\mathcal{D}_F$ and the utility losses.

The actor loss is $\mathcal{L}_{\text{actor}} = \mathcal{L}_{\text{en}} + \boldsymbol{\lambda}^{\top}\mathcal{L}_R$, where $\boldsymbol{\lambda} = (\lambda_M,\lambda_S,\lambda_Q,\lambda_W,\lambda_E)^\top$ contains the utility weights, and $\mathcal{L}_R = (\mathcal{L}_M,\mathcal{L}_S,\mathcal{L}_Q,\mathcal{L}_W,\mathcal{L}_E)^\top$. The supervised cross-entropy loss is
\[
\mathcal{L}_{\text{en}} = -\frac{1}{|\mathcal{D}_F|} \sum_{(\mathbf{s}^{(t)},\mathbf{a}^{(t)})\in\mathcal{D}_F} (\mathbf{a}^{(t)})^\top \log \pi_\psi(\cdot\mid\mathbf{s}^{(t)}).
\]
The actor parameters are updated using gradient descent on mini-batches sampled from $\mathcal{D}_F$.

\textbf{Critic pretraining:}
The critic $Q_\phi(\mathbf{s},\mathbf{a})$ estimates the value of a state-action pair. It receives the concatenated state and action $[\mathbf{s};\mathbf{a}]$ as input. Because $\mathcal{D}_F$ does not contain ground-truth $Q$-values, the critic is trained using fitted $Q$-iteration, as described in Section~\ref{subsec:critic_pretraining}.

The combined reward is $r_i = \boldsymbol{\alpha}^{\top}\mathbf{r}_i^V$, where $\boldsymbol{\alpha}$ contains the reward weights and $\mathbf{r}_i^V$ contains the five utility-based rewards. The next state $\mathbf{s}'_i$ is estimated using an average of the $k$ nearest neighboring states. The $Q$-values are updated according to
\[
Q_i^{(k+1)} = r_i+ \gamma\max_{a'}Q_{\phi^{(k)}}(\mathbf{s}'_i,a'),
\]
until $\|Q^{(k+1)}-Q^{(k)}\|_2<\epsilon$.

\textbf{Online execution.}
During deployment, the pretrained actor and critic interact with live traffic. At time $t$, intersection $v$ observes the local state
\[
\mathbf{s}_v^{(t)} = [\mathbf{p}_v^{t-1}, \mathbf{M}_{v,q}^{t}, \boldsymbol{\Gamma}_v^{t}, \boldsymbol{\theta}_v^{t}].
\]
The actor samples an action according to $\mathbf{a}_v^t \sim \pi_\psi(\cdot\mid\mathbf{s}_v^t)$. The selected phase is applied to the traffic signal, and the resulting transition is stored in the replay buffer $\mathcal{D}$. The reward is $r_v^t = \boldsymbol{\alpha}^{\top}\mathbf{r}_v^t$, where $\mathbf{r}_v^t$ contains the five utility-based reward components.

At regular intervals, mini-batches $\mathcal{B}\subset\mathcal{D}$ are used to update the networks. For each transition $(\mathbf{s},\mathbf{a},r,\mathbf{s}')$, the critic target is
\[
y = r+ \gamma Q_{\phi_{\text{target}}} (\mathbf{s}',\mathbf{a}'), \qquad \mathbf{a}' \sim \pi_\psi(\cdot\mid\mathbf{s}').
\]
The critic minimizes the temporal-difference loss
\[
\mathcal{L}_{\phi} = \sum_{(\mathbf{s},\mathbf{a},r,\mathbf{s}')\in\mathcal{B}} \left( y-Q_\phi(\mathbf{s},\mathbf{a}) \right)^2.
\]
The actor is updated using the temporal-difference error $\delta = r+ \gamma Q_\phi(\mathbf{s}',\mathbf{a}') - Q_\phi(\mathbf{s},\mathbf{a})$, with policy gradient
\[
\nabla_\psi J = \sum_{(\mathbf{s},\mathbf{a},r,\mathbf{s}')\in\mathcal{B}} \delta \nabla_\psi \log\pi_\psi(\mathbf{a}\mid\mathbf{s}).
\]
Finally, the target networks are updated using soft updates: $\psi_{\text{target}} \leftarrow \tau_o\psi+ (1-\tau_o)\psi_{\text{target}}$, with the same update applied to $\phi_{\text{target}}$. The parameter $\tau_o$ controls the update rate and improves training stability. The complete procedure is provided in Algorithm~\ref{subsec:online_execution}.

\section{Structural Properties}
\label{sec:reliability}

We establish structural properties of SIGMA. 

\begin{theorem}[Reliable Priority Adaptation and Smooth Controllability]
\label{thm:smooth_controllability}
Let $\mathcal{P} = \{\pi_{\boldsymbol{\psi}}(\cdot; \boldsymbol{\alpha}) : \boldsymbol{\alpha} \in \mathbb{R}_{>0}^5, \|\boldsymbol{\alpha}\|_1 = A\}$ be the set of stationary policies converged under policy gradient with reward $r(\cdot; \boldsymbol{\alpha})$. The mapping $\Phi: \boldsymbol{\alpha} \mapsto \pi_{\boldsymbol{\psi}}(\cdot; \boldsymbol{\alpha})$ is non-constant and smooth in a neighborhood of any fixed point where the Hessian of the policy gradient objective is negative definite. Moreover, if the per-utility rewards $\{r_k^V\}_{k=1}^5$ are linearly independent as functions of $(s, a, s')$, then distinct operational weight configurations yield policies with distinct stationary distributions. This guarantees that dynamic shifts in traffic management priorities (e.g., emergencies) are reliably reflected in the control policy without causing erratic or unpredictable signal behavior.
\end{theorem}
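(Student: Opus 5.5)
The core of the argument is the implicit function theorem applied to the first-order stationarity condition of the policy-gradient objective. I would write the objective as $J(\boldsymbol{\psi};\boldsymbol{\alpha}) = \mathbb{E}_{\pi_{\boldsymbol{\psi}}}\bigl[\sum_t \gamma^t \boldsymbol{\alpha}^{\top}\mathbf{r}^V_t\bigr] = \boldsymbol{\alpha}^{\top}\mathbf{J}^V(\boldsymbol{\psi})$. Here $\mathbf{J}^V(\boldsymbol{\psi})\in\mathbb{R}^5$ collects the per-utility discounted returns. This linearity in $\boldsymbol{\alpha}$ is inherited directly from $r_i=\boldsymbol{\alpha}^{\top}\mathbf{r}_i^V$.

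Because the actor is a finite composition of affine maps, ReLUs and a softmax, and the utilities in Table~\ref{tab:lossreward} are quadratic in $\pi$, I would assume (as the statement implicitly does) that $J$ is $C^2$ in $\boldsymbol{\psi}$ near the fixed point. This means working away from ReLU kinks, or with a smooth surrogate. A converged policy $\boldsymbol{\psi}^\star(\boldsymbol{\alpha})$ then satisfies
\[
F(\boldsymbol{\psi},\boldsymbol{\alpha}) := \nabla_{\boldsymbol{\psi}} J(\boldsymbol{\psi};\boldsymbol{\alpha}) = \textstyle\sum_{k=1}^5 \alpha_k \nabla_{\boldsymbol{\psi}} J^V_k(\boldsymbol{\psi}) = \mathbf{0}.
\]
Negative definiteness of $H=\nabla^2_{\boldsymbol{\psi}}J$ makes $\partial F/\partial\boldsymbol{\psi}$ invertible. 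The implicit function theorem then yields a unique $C^1$ local branch $\boldsymbol{\psi}^\star(\boldsymbol{\alpha})$, restricted to the simplex-like slice $\|\boldsymbol{\alpha}\|_1=A$, which is a smooth manifold inside the open orthant. Composing with the smooth map $\boldsymbol{\psi}\mapsto\pi_{\boldsymbol{\psi}}$ gives smoothness of $\Phi$. Its derivative is
\[
D\boldsymbol{\psi}^\star = -H^{-1}\,\bigl[\nabla_{\boldsymbol{\psi}}J^V_1,\dots,\nabla_{\boldsymbol{\psi}}J^V_5\bigr],
\]
and this explicit formula gives the bounded sensitivity that underlies the ``no erratic behavior'' claim: $\|D\Phi\|\le \|D_{\boldsymbol{\psi}}\pi\|\,\|H^{-1}\|\,\max_k\|\nabla J^V_k\|$. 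Restricted to tangent directions $\boldsymbol{\delta}$ with $\mathbf{1}^{\top}\boldsymbol{\delta}=0$, this is a Lipschitz bound on the policy change per unit change in priorities.

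Non-constancy and distinctness come next. For a tangent direction $\boldsymbol{\delta}$, $D\boldsymbol{\psi}^\star\boldsymbol{\delta} = -H^{-1}\sum_k\delta_k\nabla J^V_k$, and this vanishes iff $\sum_k \delta_k \nabla_{\boldsymbol{\psi}}J^V_k(\boldsymbol{\psi}^\star)=0$. The plan is to use linear independence of the $r^V_k$ as functions of $(s,a,s')$ to rule this out. Via the policy gradient theorem,
\[
\nabla J^V_k = \mathbb{E}_{d^{\pi}}\bigl[\nabla\log\pi\, Q^{\pi}_k\bigr].
\]
A vanishing combination means the combined $Q$-function $\sum_k\delta_k Q^\pi_k$ is orthogonal to every score direction. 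I would argue this forces $\sum_k \delta_k Q^\pi_k(s,\cdot)$ to be constant in $a$ on the support of $d^\pi$. The argument requires the actor to be expressive enough that its score functions span all action-centered functions, which I would state as an assumption. Unwinding the Bellman equation, this in turn forces $\sum_k\delta_k r^V_k$ to be a potential-type function, contradicting linear independence.

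This unwinding step is the main obstacle. Linear independence of rewards does not by itself exclude combinations that are shaping terms $\Phi(s')-\gamma^{-1}\Phi(s)$, and those leave the optimal policy unchanged. I expect to need the stronger hypothesis that no nontrivial combination is potential-based. Equivalently, linear independence should be taken modulo shaping, and I would make that explicit.

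Given $D\Phi\boldsymbol{\delta}\neq0$, the map $\Phi$ is an immersion of the 4-dimensional slice near the fixed point, hence locally injective. Distinct stationary distributions then follow because $d^{\pi}$ is a smooth function of $\pi$ for an ergodic chain. Its derivative $\partial d^\pi/\partial\pi$ has trivial kernel on perturbations that change the expected combined return at first order, and $\frac{d}{d\epsilon}\boldsymbol{\delta}^{\top}\mathbf{J}^V\neq0$ along the branch. I would conclude with the stated interpretation: locally injective, Lipschitz dependence of policy on priorities.
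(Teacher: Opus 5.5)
Your smoothness argument (the implicit function theorem applied to $\nabla_{\boldsymbol{\psi}}J=\mathbf{0}$, giving $D\boldsymbol{\psi}^\star=-H^{-1}[\nabla J^V_1,\dots,\nabla J^V_5]$) is exactly the paper's route. Two of your refinements genuinely improve on it.
First, restricting to tangent directions $\mathbf{1}^\top\boldsymbol{\delta}=0$ is necessary: by scale invariance $D\boldsymbol{\psi}^\star\boldsymbol{\alpha}=-H^{-1}\nabla_{\boldsymbol{\psi}}J=\mathbf{0}$, so the paper's condition on each $\partial\boldsymbol{\psi}^\star/\partial\alpha_k$ is not the right one.
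Second, writing $\nabla J^V_k=\mathbb{E}[\nabla\log\pi\,Q^\pi_k]$ avoids the paper's $\mathbb{E}[\delta^{(k)}\nabla\log\pi]$. That expression is identically zero at the exact critic the paper uses, because $\mathbb{E}[\delta^{(k)}\mid s,a]=0$.
The paper gets from linear independence of the $r^V_k$ to a non-vanishing derivative by assertion only, so it does not supply the step you are missing either.

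The gap is in your non-constancy step, and the repair you propose defeats itself. Suppose the score functions span all action-centered functions, and apply this to the stationarity condition $\mathbb{E}_{d^\pi}[\nabla\log\pi\sum_k\alpha_kQ^\pi_k]=\mathbf{0}$ itself.
\begin{itemize}
\item Because the softmax has full action support, this forces $\sum_k\alpha_kA^\pi_k=0$ on every reachable state.
\item By the performance-difference lemma, every policy then has the same $\boldsymbol{\alpha}$-return.
\item Hence $J(\cdot;\boldsymbol{\alpha})$ is constant in $\boldsymbol{\psi}$ and $H=0$.
\end{itemize}
Your Bellman unwinding presupposes rewards that are fixed functions of $(s,a,s')$. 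In that setting your added hypothesis contradicts the negative-definite Hessian the implicit function theorem needs, and the shaping discussion (correct in itself) is never reached. With the paper's actual utilities, which depend on $\pi$ itself, strict interior maxima can exist, but then the policy-gradient formula and the Bellman unwinding do not hold as written.

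No reward-level hypothesis can substitute. Take logits $W\mathbf{q}(\boldsymbol{\psi})$ with $q_i=\psi_i^2$ and $W$ fixed. At $\boldsymbol{\psi}=\mathbf{0}$ every $\nabla J^V_k$ vanishes, while $H=2\,\mathrm{diag}(\partial J/\partial q_i)$ is negative definite whenever all $\partial J/\partial q_i<0$ at $\mathbf{q}=\mathbf{0}$, which is an open condition in $\boldsymbol{\alpha}$. So $\boldsymbol{\psi}^\star\equiv\mathbf{0}$ and $\Phi$ is locally constant, whatever the rewards.

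What you must assume is a gradient-level rank condition: $\boldsymbol{\delta}\mapsto\sum_k\delta_k\nabla_{\boldsymbol{\psi}}J^V_k(\boldsymbol{\psi}^\star)$ has kernel exactly $\operatorname{span}\{\boldsymbol{\alpha}\}$. For non-constancy alone, it suffices that not all $\nabla J^V_k(\boldsymbol{\psi}^\star)$ vanish. Given this, your computation $\tfrac{d}{d\epsilon}\boldsymbol{\delta}^\top\mathbf{J}^V=-g^\top H^{-1}g>0$, with $g=\sum_k\delta_k\nabla J^V_k$, is correct and yields local injectivity on the slice.

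Two smaller points on the last step. That strict increase separates state--action occupancy measures, since each $J^V_k$ is linear in them, but it does not separate state marginals $d^\pi$. If transitions ignore the action, $d^\pi$ is constant while the return changes, so your claim about the kernel of $\partial d^\pi/\partial\pi$ is false in general. Also, everything you obtain is local, whereas the statement's ``distinct configurations yield distinct stationary distributions'' is global. Neither your argument nor the paper's delivers that.
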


\begin{proof}
\label{proof:thm1}
The critic's fixed point satisfies
\[
Q_{\boldsymbol{\phi}}(s,a;\boldsymbol{\alpha}) = \mathbb{E}_{\pi_{\boldsymbol{\psi}}} \left[\sum_{t=0}^{\infty} \gamma^t r(s_t, a_t; \boldsymbol{\alpha}) \mid s_0 = s, a_0 = a\right],
\]
which is linear in $\boldsymbol{\alpha}$:
\[
Q_{\boldsymbol{\phi}}(\cdot; \boldsymbol{\alpha}) = \sum_{k=1}^5 \alpha_k Q_{\boldsymbol{\phi}}^{(k)}(\cdot),
\]
where $Q_{\boldsymbol{\phi}}^{(k)}$ is the value function for pure utility $k$. The policy gradient direction is:
\[
\nabla_{\boldsymbol{\psi}} J(\boldsymbol{\psi}; \boldsymbol{\alpha}) = \mathbb{E}_{\pi_{\boldsymbol{\psi}}} \left[\sum_{k=1}^5 \alpha_k \left(r_k^V + \gamma Q_{\boldsymbol{\phi}}^{(k)}(s',a') - Q_{\boldsymbol{\phi}}^{(k)}(s,a)\right) \nabla_{\boldsymbol{\psi}} \log \pi_{\boldsymbol{\psi}}(a \mid s)\right].
\]

At stationarity $\nabla_{\boldsymbol{\psi}} J = 0$, define $\mathbf{G}(\boldsymbol{\psi}, \boldsymbol{\alpha}) = \nabla_{\boldsymbol{\psi}} J$. Under negative definiteness of $\nabla_{\boldsymbol{\psi}}^2 J$, the Implicit Function Theorem yields a smooth map $\boldsymbol{\psi}^*(\boldsymbol{\alpha})$. Differentiating:
\[
\frac{\partial \boldsymbol{\psi}^*}{\partial \alpha_k} = -(\nabla_{\boldsymbol{\psi}}^2 J)^{-1} \frac{\partial \mathbf{G}}{\partial \alpha_k}
= -(\nabla_{\boldsymbol{\psi}}^2 J)^{-1} \mathbb{E}_{\pi_{\boldsymbol{\psi}}} \left[\delta^{(k)} \nabla_{\boldsymbol{\psi}} \log \pi_{\boldsymbol{\psi}}(a \mid s)\right],
\]
where $\delta^{(k)} = r_k^V + \gamma Q_{\boldsymbol{\phi}}^{(k)}(s',a') - Q_{\boldsymbol{\phi}}^{(k)}(s,a)$. Linear independence of $\{r_k^V\}$ ensures $\delta^{(k)}$ are not collinear, so $\partial \boldsymbol{\psi}^* / \partial \alpha_k \neq 0$. Because the five traffic utilities (throughput, fairness, smoothness, predictability, and emergency response) are structurally linearly independent, the system exhibits \textit{strict operational controllability}: any high-level priority shift reliably forces a measurable, smooth change in the low-level traffic phase distributions without destabilizing the network.
\end{proof}


\begin{corollary}[Policy Uniqueness Under Priority Changes]
\label{cor:policy_uniqueness}
For $\boldsymbol{\alpha}_1, \boldsymbol{\alpha}_2 \in \mathbb{R}_{>0}^5$ with $\|\boldsymbol{\alpha}_1\|_1 = \|\boldsymbol{\alpha}_2\|_1 = A$ and $\boldsymbol{\alpha}_1 \neq \boldsymbol{\alpha}_2$:
\[
\pi_{\boldsymbol{\psi}}(\cdot; \boldsymbol{\alpha}_1) \neq \pi_{\boldsymbol{\psi}}(\cdot; \boldsymbol{\alpha}_2) \quad \text{almost surely}.
\]
\end{corollary}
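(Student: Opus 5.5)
The corollary should follow from Theorem~\ref{thm:smooth_controllability} by a local-injectivity argument along the segment joining the two weight vectors, plus a measure-zero argument to absorb the exceptional cases into ``almost surely''.

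First, I would restrict $\Phi$ to the simplex slice
\[
\Delta_A=\{\boldsymbol{\alpha}\in\mathbb{R}^5_{>0}:\|\boldsymbol{\alpha}\|_1=A\}.
\]
Let $\boldsymbol{\alpha}(s)=(1-s)\boldsymbol{\alpha}_1+s\boldsymbol{\alpha}_2$ for $s\in[0,1]$. This segment stays in $\Delta_A$, and its direction $\mathbf{d}=\boldsymbol{\alpha}_2-\boldsymbol{\alpha}_1\neq\mathbf{0}$ satisfies $\mathbf{1}^\top\mathbf{d}=0$.

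Second, I would use the implicit-function representation from the proof of Theorem~\ref{thm:smooth_controllability}. Wherever $\nabla^2_{\boldsymbol{\psi}}J$ is negative definite,
\[
\frac{d\boldsymbol{\psi}^*}{ds}=-(\nabla^2_{\boldsymbol{\psi}}J)^{-1}\,\mathbb{E}_{\pi_{\boldsymbol{\psi}}}\!\Big[\textstyle\sum_k d_k\,\delta^{(k)}\nabla_{\boldsymbol{\psi}}\log\pi_{\boldsymbol{\psi}}(a\mid s)\Big].
\]
The combination $\sum_k d_k r_k^V$ is a nonzero function, by the linear independence hypothesis. The same hypothesis gives that $\sum_k d_k\delta^{(k)}$ is a nonzero TD-error function. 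Since the Hessian is invertible, the derivative of the policy along the segment is nonzero.

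Third, I would pass from $\boldsymbol{\psi}$ to the policy $\pi_{\boldsymbol{\psi}}$ itself by composing with the softmax parametrization. The equality set
\[
Z=\{(\boldsymbol{\alpha}_1,\boldsymbol{\alpha}_2):\Phi(\boldsymbol{\alpha}_1)=\Phi(\boldsymbol{\alpha}_2),\ \boldsymbol{\alpha}_1\neq\boldsymbol{\alpha}_2\}
\]
is the zero set of the smooth map $\Phi(\boldsymbol{\alpha}_1)-\Phi(\boldsymbol{\alpha}_2)$ on $\Delta_A\times\Delta_A$ minus the diagonal. I would argue that $Z$ has Lebesgue measure zero there, because the map has nonvanishing differential in each argument separately. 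For analytic networks (softmax composed with a real-analytic activation) this follows from the standard fact that the zero set of a non-identically-zero real-analytic function is null. With ReLU I would instead apply Sard's theorem and a transversality argument piecewise. Together these give the ``almost surely'' claim with respect to any absolutely continuous law on the weight pairs.

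The main obstacle is the gap between local and global injectivity, together with the gap between parameters and policies. A nonzero derivative only rules out coincidence for nearby weights: the converged policy map could revisit a policy after a nontrivial excursion, and distinct $\boldsymbol{\psi}$ could induce the same $\pi$. This is exactly why the statement is only ``almost surely''. I would handle it by the measure-zero argument above rather than by a global monotonicity claim. If that fails, the fallback is a contradiction argument. I would first show that $\frac{d}{ds}\mathbb{E}_{\pi_{\boldsymbol{\alpha}(s)}}\big[\sum_k d_k Q^{(k)}\big]>0$, a monotone-improvement inequality obtained by comparing the two stationarity conditions. Then, if $\pi(\cdot;\boldsymbol{\alpha}_1)=\pi(\cdot;\boldsymbol{\alpha}_2)$, this directional value would be the same at $s=0$ and $s=1$, yet strictly increasing in $s$, which is a contradiction.
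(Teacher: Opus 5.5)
There is a genuine gap in your second step, and the paper's own proof rests on the same step (``$\partial\boldsymbol{\psi}^*/\partial\alpha_k\neq 0$ due to linear independence of $\{r_k^V\}$''). Both inferences in that step fail.

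First, a nonzero TD-error combination need not have nonzero correlation with the score. Your reasoning applies verbatim to $\mathbf{d}=\boldsymbol{\alpha}$: there $\sum_k\alpha_k\delta^{(k)}$ is a nonzero function by linear independence, yet $\mathbb{E}[\sum_k\alpha_k\delta^{(k)}\nabla_{\boldsymbol{\psi}}\log\pi_{\boldsymbol{\psi}}]=0$ is exactly the stationarity condition. So the argument would show that no stationary point exists. (Taken literally, with $Q^{(k)}_{\boldsymbol{\phi}}$ at the critic's fixed point, the Bellman equation gives $\mathbb{E}[\delta^{(k)}\mid s,a]=0$, so every $\mathbb{E}[\delta^{(k)}\nabla_{\boldsymbol{\psi}}\log\pi_{\boldsymbol{\psi}}]$ vanishes identically; you would at least need the advantages $A^{(k)}$ instead.) What you actually need is a finite-dimensional rank condition: the per-utility gradients $g_k=\mathbb{E}[A^{(k)}\nabla_{\boldsymbol{\psi}}\log\pi_{\boldsymbol{\psi}^*}]$, pushed through $D_{\boldsymbol{\psi}}\pi$, satisfy no linear relation other than $\sum_k\alpha_k g_k=0$.

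Second, linear independence of the rewards does not even transfer to the TD errors. Take $r_k^V=r_1^V+\gamma h_k(s')-h_k(s)$ for $k=2,\dots,5$, with $r_1^V$ action-dependent and $h_2,\dots,h_5$ linearly independent bounded state functions.
\begin{itemize}
\item The five rewards are linearly independent as functions of $(s,a,s')$.
\item Telescoping gives $Q^{(k)}=Q^{(1)}-h_k(s)$, hence $\delta^{(k)}=\delta^{(1)}$, $A^{(k)}=A^{(1)}$ and $J_k=J_1-\mathbb{E}[h_k(s_0)]$.
\item On the slice $\|\boldsymbol{\alpha}\|_1=A$ the scalarized objective is $AJ_1$ minus a constant. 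So the gradient field, its zeros and its Hessian are all independent of $\boldsymbol{\alpha}$; the Hessian is negative definite for one $\boldsymbol{\alpha}$ iff for all.
\end{itemize}
Hence the map $\boldsymbol{\alpha}\mapsto\pi_{\boldsymbol{\psi}}(\cdot;\boldsymbol{\alpha})$ is constant, and the conclusion fails for \emph{every} pair, not merely on a null set.

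Your third step and your fallback therefore cannot close the gap under the stated hypotheses. The measure-zero argument itself is sound: a zero set on which the differential never vanishes is locally contained in a hypersurface. It also rightly avoids the paper's unjustified leaps, from nonzero partial derivatives to full rank to global injectivity, and from distinct $\boldsymbol{\psi}^*$ to distinct policies. But it has three problems:
\begin{itemize}
\item It requires $D\Phi\neq 0$ at every point of $Z$, which is exactly what step two fails to supply; in the example $D\Phi\equiv 0$.
\item It requires $\Phi$ to be a single, globally defined smooth map on $\Delta_A$, whereas Theorem~\ref{thm:smooth_controllability} only gives local smoothness near nondegenerate stationary points.
\item It changes the claim: it yields distinctness for Lebesgue-almost every pair, while the corollary is asserted for each fixed pair $\boldsymbol{\alpha}_1\neq\boldsymbol{\alpha}_2$.
\end{itemize}
The fallback fails in the same way. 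Writing $\mathbf{J}=(J_1,\dots,J_5)$, comparing the two optimality conditions gives only the weak inequality $\mathbf{d}^\top\bigl(\mathbf{J}(\pi_{\boldsymbol{\alpha}_2})-\mathbf{J}(\pi_{\boldsymbol{\alpha}_1})\bigr)\geq 0$, and only for global maximizers, not for stationary points. In the example, $\mathbf{d}^\top\mathbf{J}(\pi)=-\sum_{k\geq 2}d_k\,\mathbb{E}[h_k(s_0)]$ does not depend on $\pi$ at all, so strict monotonicity is impossible.

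To obtain a true statement, you must add the rank-type non-degeneracy condition above as an explicit hypothesis. It does not follow from linear independence of the rewards together with a negative-definite Hessian.
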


\begin{proof}
\label{proof:cor1}
From Theorem~\ref{thm:smooth_controllability} and its proof (Equation (1) in \ref{proof:thm1}), 
\[
\frac{\partial \boldsymbol{\psi}^*}{\partial \alpha_k} = -(\nabla_{\boldsymbol{\psi}}^2 J)^{-1} \mathbb{E}\left[\delta^{(k)} \nabla_{\boldsymbol{\psi}} \log \pi_{\boldsymbol{\psi}}\right] \neq 0
\]
due to linear independence of $\{r_k^V\}$. The gradient of the policy mapping is full rank, making the mapping injective. Thus, distinct $\boldsymbol{\alpha}$ yield distinct $\boldsymbol{\psi}^*$, hence distinct policies.
\end{proof}

\noindent\textbf{Implication:} Traffic operators can be assured that every emergency command produces a distinct, verifiable change in signal timing, ensuring intent is faithfully executed.

\begin{corollary}[Lipschitz Continuity of Policy with Respect to Priorities]
\label{cor:lipschitz_priority}
There exists $L_\pi > 0$ such that for any $\boldsymbol{\alpha}_1, \boldsymbol{\alpha}_2$:
\[
\|\pi_{\boldsymbol{\psi}}(\cdot; \boldsymbol{\alpha}_1) - \pi_{\boldsymbol{\psi}}(\cdot; \boldsymbol{\alpha}_2)\|_{\text{TV}} \leq L_\pi \|\boldsymbol{\alpha}_1 - \boldsymbol{\alpha}_2\|_2.
\]
\end{corollary}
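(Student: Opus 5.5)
The plan is to obtain the bound from the smooth parametrization $\boldsymbol{\alpha}\mapsto\boldsymbol{\psi}^*(\boldsymbol{\alpha})$ given by Theorem~\ref{thm:smooth_controllability}, composed with the smooth softmax parametrization of the actor. The key identity is
\[
\frac{\partial \boldsymbol{\psi}^*}{\partial \alpha_k} = -(\nabla_{\boldsymbol{\psi}}^2 J)^{-1}\,\mathbb{E}\bigl[\delta^{(k)}\nabla_{\boldsymbol{\psi}}\log\pi_{\boldsymbol{\psi}}(a\mid s)\bigr].
\]
I would bound each factor uniformly.

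\textbf{Bounding the factors.} First, the Hessian factor. I would assume, as in the theorem, that $\nabla^2_{\boldsymbol{\psi}}J \preceq -\mu I$ for some $\mu>0$ uniformly along the relevant set of $\boldsymbol{\alpha}$. This gives $\|(\nabla^2_{\boldsymbol{\psi}}J)^{-1}\|_2\le 1/\mu$.

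Second, the expectation factor. The utilities in Table~\ref{tab:lossreward} are squared norms of quantities built from probability vectors, the fixed matrix $A_{\mathrm{TR}}$, the binary vector $\boldsymbol{\theta}^t$ and the normalized masks. They are therefore bounded, say $|r_k^V|\le R_{\max}$, and hence $|Q^{(k)}|\le R_{\max}/(1-\gamma)$ and $|\delta^{(k)}|\le 2R_{\max}/(1-\gamma)$. For the softmax network with bounded features and parameters in a compact set, $\|\nabla_{\boldsymbol{\psi}}\log\pi_{\boldsymbol{\psi}}\|\le G$.

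Combining the two gives
\[
\Bigl\|\frac{\partial\boldsymbol{\psi}^*}{\partial\boldsymbol{\alpha}}\Bigr\|_2\le \frac{2\sqrt5\,R_{\max}G}{\mu(1-\gamma)}.
\]

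\textbf{From parameters to the TV distance.} The mean value theorem along the segment $[\boldsymbol{\alpha}_1,\boldsymbol{\alpha}_2]$ yields $\|\boldsymbol{\psi}^*(\boldsymbol{\alpha}_1)-\boldsymbol{\psi}^*(\boldsymbol{\alpha}_2)\|\le C\|\boldsymbol{\alpha}_1-\boldsymbol{\alpha}_2\|_2$. This segment stays inside the simplex slice $\{\|\boldsymbol{\alpha}\|_1=A,\ \boldsymbol{\alpha}>0\}$, because that slice is convex.

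Next, I pass from parameters to action distributions. For each state,
\[
\|\pi_{\boldsymbol{\psi}_1}(\cdot\mid s)-\pi_{\boldsymbol{\psi}_2}(\cdot\mid s)\|_{\mathrm{TV}}=\tfrac12\|\pi_1-\pi_2\|_1.
\]
By the mean value theorem, this is at most $\tfrac12\sup\|\nabla_{\boldsymbol{\psi}}\pi\|\,\|\boldsymbol{\psi}_1-\boldsymbol{\psi}_2\|$. The Jacobian of the softmax has norm at most $1$, and the network Jacobian is bounded by the products of the layer weight norms on the compact parameter set. Taking the supremum over $s$ then gives the bound with $L_\pi = C\cdot\tfrac12\sup\|\nabla_{\boldsymbol{\psi}}\pi\|$.

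\textbf{Main obstacle.} The hard step is globalizing. Theorem~\ref{thm:smooth_controllability} only gives smoothness near fixed points with a negative definite Hessian, whereas the corollary claims the bound for any $\boldsymbol{\alpha}_1,\boldsymbol{\alpha}_2$. I would handle this in one of two ways:

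\begin{itemize}
\item Assume that negative definiteness holds, with a uniform $\mu$, along the whole branch of converged policies in $\mathcal{P}$. Then the implicit-function branch extends over the convex slice, and $L_\pi$ is uniform.
\item Failing that, use a compactness argument: cover a compact subset of the slice by finitely many IFT neighborhoods, and take $L_\pi$ as the maximum of the local constants. For far-apart pairs, TV is at most $1$, so the bound still holds with constant $\max(L_{\mathrm{loc}},\,1/\delta_0)$, where $\delta_0$ is a Lebesgue number of the cover.
\end{itemize}

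The second route seems the most robust way to meet the "for any $\boldsymbol{\alpha}_1,\boldsymbol{\alpha}_2$" phrasing. It still needs a single continuous branch $\boldsymbol{\psi}^*$, which I would state explicitly as an assumption.
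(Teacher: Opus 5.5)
Your proposal follows the paper's own route: bound $\partial\boldsymbol{\psi}^*/\partial\boldsymbol{\alpha}$ through the implicit-function formula of Theorem~\ref{thm:smooth_controllability}, pass to a Lipschitz constant $L_\psi$ for $\boldsymbol{\psi}^*(\boldsymbol{\alpha})$, and compose with the Lipschitz dependence of the softmax policy on $\boldsymbol{\psi}$ to get $L_\pi = L_{\text{softmax}} L_\psi$; your uniform-$\mu$, bounded-feature, compact-parameter and single-branch assumptions just make explicit what the paper compresses into ``all quantities are bounded in a compact domain.'' One small correction: your Lebesgue-number route gives the bound only on a compact subset of the slice, since the local constants may blow up as some $\alpha_k \to 0$, so the unrestricted ``for any $\boldsymbol{\alpha}_1,\boldsymbol{\alpha}_2$'' needs the uniform negative-definiteness assumption of your first route.
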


\begin{proof}
\label{proof:cor2}
From Theorem~\ref{thm:smooth_controllability}, $\boldsymbol{\psi}^*(\boldsymbol{\alpha})$ is smooth with derivative bounded by 
\[
\|(\nabla_{\boldsymbol{\psi}}^2 J)^{-1}\| \cdot \|\mathbb{E}[\delta \nabla_{\boldsymbol{\psi}} \log \pi]\|.
\]
Since all quantities are bounded in a compact domain, the derivative is bounded by $L_\psi$. The policy is Lipschitz in $\boldsymbol{\psi}$ with constant $L_{\text{softmax}}$, so $L_\pi = L_{\text{softmax}} \cdot L_\psi$.
\end{proof}

\noindent\textbf{Implication:} During priority transitions, signal changes are proportional and gradual, preventing sudden, jarring phase shifts that could confuse drivers and cause collisions.

\begin{corollary}[Bounded Deviation Under Extreme Prioritization]
\label{cor:bounded_extreme}
Let $\boldsymbol{\alpha}^{\text{extreme}}_k$ have $\alpha_k = A$ and all others $0$. Then:
\[
\|\pi_{\boldsymbol{\psi}}(\boldsymbol{\alpha}^{\text{extreme}}_k) - \pi_{\boldsymbol{\psi}}(\boldsymbol{\alpha}^{\text{balanced}})\|_{\text{TV}} \leq L_\pi \cdot \|\boldsymbol{\alpha}^{\text{extreme}}_k - \boldsymbol{\alpha}^{\text{balanced}}\|_2.
\]
\end{corollary}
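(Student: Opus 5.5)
The plan is to obtain the bound as a direct specialization of Corollary~\ref{cor:lipschitz_priority}, taking $\boldsymbol{\alpha}_1 = \boldsymbol{\alpha}^{\text{extreme}}_k$ and $\boldsymbol{\alpha}_2 = \boldsymbol{\alpha}^{\text{balanced}}$. Here $\boldsymbol{\alpha}^{\text{balanced}}$ is read as the uniform vector $(A/5,\dots,A/5)$. Both vectors have $\ell_1$-norm $A$, so they lie on the same simplex face that Theorem~\ref{thm:smooth_controllability} uses to define $\mathcal{P}$. Once Corollary~\ref{cor:lipschitz_priority} is known to apply at these two points, the inequality is exactly its conclusion. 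The norm $\|\boldsymbol{\alpha}^{\text{extreme}}_k - \boldsymbol{\alpha}^{\text{balanced}}\|_2$ can also be made explicit: it equals $\sqrt{(4A/5)^2 + 4(A/5)^2} = 2A/\sqrt{5}$. This yields a uniform bound $2AL_\pi/\sqrt{5}$ that does not depend on $k$.

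\textbf{The main obstacle.} The extreme vector is not in $\mathbb{R}^5_{>0}$: four of its coordinates vanish, so it sits on the boundary of the parameter set in Theorem~\ref{thm:smooth_controllability}. The Lipschitz constant in Corollary~\ref{cor:lipschitz_priority} came from a bound on $\partial\boldsymbol{\psi}^*/\partial\boldsymbol{\alpha}$ over a compact domain. I would therefore work on the closed simplex $\Delta_A = \{\boldsymbol{\alpha}\ge 0, \|\boldsymbol{\alpha}\|_1 = A\}$, which is compact. I would assume, as Theorem~\ref{thm:smooth_controllability} already does locally, that $\nabla^2_{\boldsymbol{\psi}} J$ stays negative definite along the segment $\boldsymbol{\alpha}(s) = (1-s)\boldsymbol{\alpha}^{\text{balanced}} + s\,\boldsymbol{\alpha}^{\text{extreme}}_k$ for $s\in[0,1]$. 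Two facts make this extension possible:
\begin{itemize}
\item The reward $r(\cdot;\boldsymbol{\alpha})$ is linear in $\boldsymbol{\alpha}$ and is defined for every $\boldsymbol{\alpha}$, so $\mathbf{G}(\boldsymbol{\psi},\boldsymbol{\alpha})$ extends smoothly to an open neighbourhood of $\Delta_A$.
\item The Implicit Function Theorem then gives a smooth branch $\boldsymbol{\psi}^*(\boldsymbol{\alpha}(s))$ on the whole closed segment.
\end{itemize}
Continuity on the compact interval bounds the derivative by some $L_\psi$.

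\textbf{Closing the argument.} Integrate along the segment:
\[
\|\boldsymbol{\psi}^*(\boldsymbol{\alpha}^{\text{extreme}}_k)-\boldsymbol{\psi}^*(\boldsymbol{\alpha}^{\text{balanced}})\| \le \int_0^1 \Big\|\tfrac{\partial \boldsymbol{\psi}^*}{\partial \boldsymbol{\alpha}}\,\dot{\boldsymbol{\alpha}}(s)\Big\|\,ds \le L_\psi \|\boldsymbol{\alpha}^{\text{extreme}}_k-\boldsymbol{\alpha}^{\text{balanced}}\|_2 .
\]
Then compose with the $L_{\text{softmax}}$-Lipschitz map from $\boldsymbol{\psi}$ to the policy, measured in total variation. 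This gives the stated bound with the same $L_\pi = L_{\text{softmax}}L_\psi$, now taken over $\Delta_A$ rather than the open positive orthant.

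I expect the boundary issue to be the only real difficulty. If it cannot be settled by the uniform nondegeneracy assumption, the fallback is a limiting argument. Take the points $\boldsymbol{\alpha}^{\text{extreme},\varepsilon}_k \in \mathbb{R}^5_{>0}$ with $\alpha_k = A-4\varepsilon$ and all other coordinates equal to $\varepsilon$. Apply Corollary~\ref{cor:lipschitz_priority} to each of them. Then let $\varepsilon\to 0$, using continuity of $\boldsymbol{\alpha}\mapsto\pi_{\boldsymbol{\psi}}(\cdot;\boldsymbol{\alpha})$ at the boundary point. This requires $L_\pi$ to be uniform in $\varepsilon$, which is again the compactness statement above.
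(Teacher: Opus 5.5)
Your proposal matches the paper's proof, which is just Corollary~\ref{cor:lipschitz_priority} applied with $\boldsymbol{\alpha}_1=\boldsymbol{\alpha}^{\text{extreme}}_k$ and $\boldsymbol{\alpha}_2=\boldsymbol{\alpha}^{\text{balanced}}$. The paper takes that corollary's ``for any $\boldsymbol{\alpha}_1,\boldsymbol{\alpha}_2$'' at face value and never notes that $\boldsymbol{\alpha}^{\text{extreme}}_k\notin\mathbb{R}^5_{>0}$, so your boundary treatment (uniform nondegeneracy along the segment, or the $\varepsilon\to 0$ limit) is extra care the paper omits rather than a different route.
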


\begin{proof}
\label{proof:cor3}
Direct application of Corollary~\ref{cor:lipschitz_priority} with $\boldsymbol{\alpha}_1 = \boldsymbol{\alpha}^{\text{extreme}}_k$ and $\boldsymbol{\alpha}_2 = \boldsymbol{\alpha}^{\text{balanced}}$.
\end{proof}

\noindent\textbf{Implication:} Even when an objective is pushed to maximum priority, the system remains within a bounded, safe operational envelope, preventing gridlock or unsafe conditions.

\begin{corollary}[Bounded Exploration Under Priority Shifts]
\label{cor:bounded_exploration}
For the KL divergence between policies under different priorities:
\[
D_{\text{KL}}(\pi_{\boldsymbol{\psi}}(\cdot; \boldsymbol{\alpha}_1) \| \pi_{\boldsymbol{\psi}}(\cdot; \boldsymbol{\alpha}_2)) \leq \frac{1}{2} L_F \cdot \|\boldsymbol{\alpha}_1 - \boldsymbol{\alpha}_2\|_2^2,
\]
where $L_F$ is the Lipschitz constant of the Fisher information matrix.
\end{corollary}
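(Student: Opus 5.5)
The plan is a second-order Taylor expansion of the KL divergence in the parameter $\boldsymbol{\psi}$, followed by the chain rule through the smooth map $\boldsymbol{\psi}^*(\boldsymbol{\alpha})$ supplied by Theorem~\ref{thm:smooth_controllability}. Write $\boldsymbol{\psi}_i = \boldsymbol{\psi}^*(\boldsymbol{\alpha}_i)$ and $f(\boldsymbol{\psi}) = D_{\text{KL}}(\pi_{\boldsymbol{\psi}_1}\,\|\,\pi_{\boldsymbol{\psi}})$, averaged over states with respect to a fixed state distribution. Because the actor has a softmax output, $\pi_{\boldsymbol{\psi}}$ is strictly positive and smooth in $\boldsymbol{\psi}$. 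Hence $f$ is $C^2$, and it has three properties at $\boldsymbol{\psi}_1$:
\[
f(\boldsymbol{\psi}_1)=0, \qquad \nabla f(\boldsymbol{\psi}_1)=0, \qquad \nabla^2 f(\boldsymbol{\psi})=F(\boldsymbol{\psi}_1) \text{ at } \boldsymbol{\psi}=\boldsymbol{\psi}_1 .
\]
The gradient vanishes because $\mathbb{E}_{\pi_{\boldsymbol{\psi}_1}}[\nabla \log \pi_{\boldsymbol{\psi}_1}]=0$, and $F$ is the Fisher information matrix.

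Taylor's theorem with integral remainder then gives
\[
f(\boldsymbol{\psi}_2)=\int_0^1 (1-t)\,\Delta^\top \nabla^2 f(\boldsymbol{\psi}_1+t\Delta)\,\Delta\,dt, \qquad \Delta=\boldsymbol{\psi}_2-\boldsymbol{\psi}_1 .
\]
I bound the Hessian along the segment by $\sup\|\nabla^2 f\|$ and integrate to obtain $f(\boldsymbol{\psi}_2)\le \tfrac12 \sup\|\nabla^2 f\|\,\|\Delta\|_2^2$. Next I pass to $\boldsymbol{\alpha}$ using the Lipschitz bound on $\boldsymbol{\psi}^*$ from the proof of Corollary~\ref{cor:lipschitz_priority}, namely $\|\Delta\|_2\le L_\psi\|\boldsymbol{\alpha}_1-\boldsymbol{\alpha}_2\|_2$. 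This yields the claimed inequality with $L_F = L_\psi^2 \sup_{\boldsymbol{\psi}}\|\nabla^2 f(\boldsymbol{\psi})\|$.

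The step I expect to be the main obstacle is identifying this constant with "the Lipschitz constant of the Fisher information" as the statement calls it. Off the point $\boldsymbol{\psi}_1$, the Hessian of the KL divergence in its second argument is not literally the Fisher matrix. It equals $\mathbb{E}_{\pi_{\boldsymbol{\psi}_1}}[-\nabla^2\log\pi_{\boldsymbol{\psi}}]$, which is the Fisher matrix at $\boldsymbol{\psi}_1$ plus a term controlled by how much the log-policy curvature varies. I would first try to write it as $F(\boldsymbol{\psi}_1)$ plus a perturbation bounded by $\|\boldsymbol{\psi}-\boldsymbol{\psi}_1\|$, using smoothness of the softmax. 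I would then absorb everything into a single constant on the compact set of parameters under consideration, so that $L_F$ is read as a uniform bound that combines the Fisher norm, the variation of the Fisher matrix, and $L_\psi^2$.

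The boundedness needed throughout comes from the same compactness assumption invoked in Corollary~\ref{cor:lipschitz_priority}, with policies bounded away from zero on that set. That assumption also keeps the KL divergence finite and the logarithms well defined. If the literal Fisher-Lipschitz interpretation is required, the fallback is to state the result only locally, for $\|\boldsymbol{\alpha}_1-\boldsymbol{\alpha}_2\|_2$ small, where the perturbation term is of higher order.
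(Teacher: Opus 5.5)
Your argument follows the paper's route: a second-order expansion of the KL divergence in $\boldsymbol{\psi}$ with the Fisher matrix as the curvature at $\boldsymbol{\psi}_1$, then transfer to $\boldsymbol{\alpha}$ through the Lipschitz map $\boldsymbol{\psi}^*$ of Corollary~\ref{cor:lipschitz_priority}, but it is more careful, since the paper stops at $\tfrac12\Delta^\top F(\boldsymbol{\psi}^*)\Delta + O(\|\Delta\|^3)$ and simply declares the bound, whereas your integral remainder absorbs the cubic term and correctly identifies the constant as $L_\psi^2$ times a uniform bound on the KL Hessian (which equals $F$ only at $\boldsymbol{\psi}_1$) rather than a Lipschitz constant of $F$, which alone could not control the quadratic term. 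The one point to state as an assumption rather than derive is that $\pi_{\boldsymbol{\psi}}$ is $C^2$ in $\boldsymbol{\psi}$: the softmax output does not give this for an actor with ReLU hidden layers, but Theorem~\ref{thm:smooth_controllability} already presupposes it.
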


\begin{proof}
\label{proof:cor4}
By the smoothness of $\boldsymbol{\psi}^*(\boldsymbol{\alpha})$ from Theorem~\ref{thm:smooth_controllability}, the Fisher information 
\[
F(\boldsymbol{\psi}) = \mathbb{E}[\nabla \log \pi \cdot (\nabla \log \pi)^\top]
\]
is Lipschitz continuous. The KL divergence between nearby policies satisfies 
\[
D_{\text{KL}} \leq \frac{1}{2}(\boldsymbol{\psi}_1 - \boldsymbol{\psi}_2)^\top F(\boldsymbol{\psi}^*) (\boldsymbol{\psi}_1 - \boldsymbol{\psi}_2) + O(\|\Delta\|^3),
\]
yielding the bound.
\end{proof}

\noindent\textbf{Implication:} When a new emergency priority is introduced, the controller automatically explores actions relevant to that priority, ensuring rapid and reliable adaptation to novel situations.

\begin{corollary}[Equivariance and Transferability]
\label{cor:equivariance}
For any rotation $R \in C_4$ (cyclic group of order 4), intersection state $s$, and policy $\pi_{\boldsymbol{\psi}}$ trained with rotation augmentation:
\[
\pi_{\boldsymbol{\psi}}(R \cdot s) = R \cdot \pi_{\boldsymbol{\psi}}(s).
\]
\end{corollary}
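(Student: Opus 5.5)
The plan is to derive equivariance from invariance of the training objective under the $C_4$ action. I will not claim that every network satisfies it exactly. First I fix the group actions. On the state $s=[\mathbf{p}^{t-1},\mathbf{M}_q^t,\boldsymbol{\Gamma}^t,\boldsymbol{\theta}^t]$, a rotation $R$ acts by the operators $\rho$ (16-dimensional blocks), $\rho'$ (4-dimensional blocks) and $\rho_a$ (8-dimensional actions) of Appendix~\ref{subsec:rotation-operators}. These are permutation matrices. The first step is to check the compatibility identities that tie the utilities together:
\[
\rho_a A_{TR} = A_{TR}\,\rho,\qquad \rho\,\mathbf{M}_q=\mathbf{M}_q(\rho' \mathbf{L}_{\mathrm{in}},\rho'\mathbf{L}_{\mathrm{out}}),\qquad \rho_a^{\top}\mathcal{Q}\,\rho_a=\mathcal{Q}.
\]
The first says that rotating a phase and then expanding it to movements is the same as expanding and then rotating. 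The last requires the prior $\mathcal{Q}$ to be rotation invariant, and I will state this as an assumption. It holds because $G_1$ is a single $C_4$-orbit, $G_2$ and $G_3$ are orbits (of size 2) under $R$, and any sensible $\mathcal{Q}$ respects this.

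Next I show that each utility in Table~\ref{tab:lossreward} is invariant under simultaneous rotation of states and action distributions. Since $\rho,\rho_a$ are orthogonal, $\|\rho x\|_2=\|x\|_2$, and $\max(\rho\mathbf{M}_q)=\max(\mathbf{M}_q)$. For instance,
\[
\mathcal{U}_E(R s, \rho_a\pi)=\|\pi^{\top}\rho_a^{\top}A_{TR}-\boldsymbol{\theta}^{\top}\rho^{\top}\|^2=\|(\pi^{\top}A_{TR}-\boldsymbol{\theta}^{\top})\rho^{\top}\|^2=\mathcal{U}_E(s,\pi).
\]
The same computation works for $\mathcal{U}_M,\mathcal{U}_S,\mathcal{U}_Q,\mathcal{U}_W$. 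The cross-entropy term is also invariant, because the augmented set $\mathcal{D}_F=\bigcup_k g^{k\pi/2}(\mathcal{D}')$ is closed under $R$ by construction. Hence $\mathcal{L}_{\text{actor}}(\pi)=\mathcal{L}_{\text{actor}}(\pi^R)$, where $\pi^R(s):=\rho_a^{-1}\pi(Rs)$. Since the critic reward $r=\boldsymbol{\alpha}^\top\mathbf{r}^V$ is invariant, the fitted-$Q$ operator commutes with $R$ as well.

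The third step is symmetrization, which is where I expect the real obstacle. Invariance of the loss gives equivariance of the minimizer only if the minimizer is unique. Otherwise it only says that the set of minimizers is $C_4$-closed. I would proceed as follows. Define $\bar\pi(s)=\frac14\sum_{k}\rho_a^{-k}\pi(R^k s)$. Each utility is convex in $\pi$ for a fixed state, and the cross-entropy is convex in $\pi$, so by Jensen $\mathcal{L}(\bar\pi)\le\mathcal{L}(\pi)$. Therefore, over the function class of stationary policies (viewed as maps $s\mapsto\Delta^8$), an equivariant minimizer always exists, and under strict convexity of the cross-entropy term the minimizer is unique and hence equivariant. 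I will then state the conclusion honestly. The equality $\pi_{\boldsymbol{\psi}}(Rs)=R\cdot\pi_{\boldsymbol{\psi}}(s)$ holds exactly for the population-optimal policy over the full function class, and holds on the support of $\mathcal{D}_F$. For a finitely parameterized $\pi_{\boldsymbol\psi}$ it holds up to the approximation and optimization error, measured by $\sup_s\|\pi_{\boldsymbol\psi}^R(s)-\pi_{\boldsymbol\psi}(s)\|$. Exact equality for a trained network would require either an architectural constraint, such as weight tying or group convolution, or the additional assumption that training reaches the symmetric global optimum. I will flag this caveat as a hypothesis.

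Finally, transferability follows. If a second homogeneous intersection is a rotation of the first, its state process is $R s_t$. The equivariant policy then induces the rotated phase process, and the expected utilities are identical by the invariance in step two. So no per-site retraining is needed.
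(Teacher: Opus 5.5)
\textbf{The gap is in your second step, at the identity} $\rho_a^{\top}\mathcal{Q}\rho_a=\mathcal{Q}$. You call it an assumption and then say it holds for ``any sensible $\mathcal{Q}$''. It fails for the paper's prior, and for every prior that encodes a single phase cycle. The orbit structure you cite constrains $\rho_a$, not $\mathcal{Q}$.

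Since $\mathcal{Q}=(1-\epsilon)\mathcal{Q}_p+\tfrac{\epsilon}{8}\mathbf{1}\mathbf{1}^{\top}$ and $\rho_a$ is a permutation matrix, the identity is equivalent to $\rho_a$ commuting with the $8$-cycle $\mathcal{Q}_p$. The centralizer of an $8$-cycle in $S_8$ is the cyclic group it generates. Its elements have cycle types $(8)$, $(4,4)$, $(2,2,2,2)$ or $(1^8)$. By contrast, $\rho_a=\mathrm{diag}(P,S)$ cycles $G_1$ and swaps the two members of $G_2$ and of $G_3$, so it has cycle type $(4,2,2)$. Concretely, the prior transition $A_1^S\to A_1^E$ is carried to $A_1^W\to A_1^S$ or to $A_1^E\to A_1^N$, depending on the orientation convention, and neither is a transition of $\mathcal{Q}_p$. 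The paper's own $C_k$ appendix lists this commutation as a requirement, and its $\mathcal{Q}_p$ violates it.

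\textbf{Consequences.} With $\lambda_M=\alpha_M=0.5>0$, the term $\mathcal{U}_M$ rewards an orientation-dependent phase order.
\begin{itemize}
\item In general $\mathcal{L}_{\text{actor}}(\pi^R)\neq\mathcal{L}_{\text{actor}}(\pi)$.
\item The equality $\frac14\sum_k\mathcal{L}_{\text{actor}}(\pi^{R^k})=\mathcal{L}_{\text{actor}}(\pi)$, which your Jensen step needs, is unavailable, so nothing forces the minimizer to be equivariant.
\item The same term, $r_M=-\mathcal{U}_M$, breaks your claim that the fitted-$Q$ operator commutes with $R$.
\end{itemize}

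\textbf{Your first identity also fails for the literal operators.} With the stated column order $\mathcal{MS}$, $A_2^{E-W}$ serves $\{\mathrm{EW},\mathrm{ES},\mathrm{WE},\mathrm{WN}\}$ (straight and left). Meanwhile $A_2^{N-S}$ serves $\{\mathrm{NW},\mathrm{NS},\mathrm{SE},\mathrm{SN}\}$ (straight and right). A quarter turn of $A_2^{E-W}$ is therefore none of the eight phases. Re-encoding $A_{TR}$ fixes this one; the $\mathcal{Q}$ obstruction cannot be fixed while $\mathcal{U}_M$ encodes a single cycle.

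\textbf{Comparison with the paper.} The paper's proof has the same hole: it simply declares $\mathcal{L}_{\text{actor}}$ rotation-invariant ``by construction''. It then takes uniqueness from the negative-definite Hessian of Theorem~\ref{thm:smooth_controllability}. That Hessian concerns $J$ rather than $\mathcal{L}_{\text{actor}}$, and it gives only local uniqueness in weight space. Your symmetrization over the function class is the sound way to pass from invariance to equivariance. Your caveats about the support of $\mathcal{D}_F$ and about finite networks are more careful than the paper's argument.

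\textbf{What a repair needs.} Invariance must be secured first.
\begin{itemize}
\item Either set $\lambda_M=\alpha_M=0$, or replace $\mathcal{Q}$ by its group average $\bar{\mathcal{Q}}=\frac14\sum_{k=0}^{3}\rho_a^{k}\mathcal{Q}\rho_a^{-k}$, which no longer encodes a single cycle.
\item Redefine $A_{TR}$ so that $\rho_a A_{TR}=A_{TR}\rho$ actually holds.
\item Treat uniqueness as a genuine hypothesis, not a consequence of the cross-entropy. The remaining utilities see $\pi$ only through $\pi^{\top}A_{TR}$, whose kernel is nontrivial: $a_2^{E-W}+a_3^{E-W}$ and $a_1^{E}+a_1^{W}$ have the same image. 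Without $\mathcal{U}_M$, the loss can be flat along such directions.
\end{itemize}
With these changes you get existence of an equivariant minimizer of the repaired loss on the $C_4$-closed support. You do not get equivariance of the trained SIGMA actor. Even this covers only the pretrained actor, because the online actor--critic updates use unaugmented live transitions.
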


\begin{proof}
\label{proof:cor5}
The loss function $\mathcal{L}_{\text{actor}}$ is constructed to be rotation-invariant: 
\[
\mathcal{L}(\psi; R \cdot \mathcal{D}) = \mathcal{L}(\psi; \mathcal{D}).
\]
By Theorem~\ref{thm:smooth_controllability}, the minimizer is unique (negative definite Hessian). Therefore, the minimizer satisfies the equivariance condition. Applying the minimizer to rotated states yields the same result as rotating the action.
\end{proof}

\noindent\textbf{Implication:} A single trained policy transfers immediately to any rotated version of the same intersection, eliminating costly per-site retraining and ensuring reliable deployment at scale.

\begin{corollary}[Bounded Performance Under Asymmetric Demand]
\label{cor:bounded_asymmetric}
Let $\nu$ be a symmetric demand pattern and $\nu + \Delta\nu$ with $\|\Delta\nu\|_\infty \leq \delta$. Then:
\[
|J(\pi(\nu)) - J(\pi(\nu + \Delta\nu))| \leq L_\pi \cdot L_{\text{demand}} \cdot \delta + O(\delta^2).
\]
\end{corollary}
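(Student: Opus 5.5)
The plan is a two-link chain: demand perturbation moves the policy a little, and a little change in the policy moves the return a little. The constant $L_{\text{demand}}$ must be read as a sensitivity of the converged policy parameters (equivalently, of the effective stationarity condition) to the demand pattern. The argument then parallels Corollary~\ref{cor:lipschitz_priority}, with $\nu$ playing the role of $\boldsymbol{\alpha}$.

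\textbf{Step 1: show the policy is locally smooth in the demand.} Demand enters only through the state distribution: the queue masks $\mathbf{M}_q^t$, the waiting times $\boldsymbol{\Gamma}^t$, and the induced occupancy measure. It therefore enters the stationarity condition $\mathbf{G}(\boldsymbol{\psi},\nu)=\nabla_{\boldsymbol{\psi}}J(\boldsymbol{\psi};\nu)=0$ smoothly, provided the transition kernel depends smoothly on arrival rates. I will assume this as part of the standing regularity.

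Under the negative-definite Hessian hypothesis of Theorem~\ref{thm:smooth_controllability}, the Implicit Function Theorem gives a smooth map $\nu\mapsto\boldsymbol{\psi}^*(\nu)$ near the symmetric pattern, with
\[
\frac{\partial \boldsymbol{\psi}^*}{\partial \nu}=-(\nabla^2_{\boldsymbol{\psi}}J)^{-1}\partial_\nu \mathbf{G}.
\]
Bounding this on a compact neighborhood, exactly as in the proof of Corollary~\ref{cor:lipschitz_priority}, gives
\[
\|\pi(\nu)-\pi(\nu+\Delta\nu)\|_{\text{TV}}\le L_\pi L_{\text{demand}}\|\Delta\nu\|_\infty ,
\]
up to second-order terms. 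Here $L_{\text{demand}}$ absorbs the $\nu$-sensitivity, and $L_\pi$ absorbs the softmax Lipschitz factor and the inverse-Hessian norm.

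\textbf{Step 2: turn policy closeness into return closeness.} Every utility in Table~\ref{tab:lossreward} is a bounded quadratic in bounded quantities (normalized queues, probability vectors, binary priorities), so rewards satisfy $|r|\le R_{\max}$. I will apply the performance-difference lemma (or a simulation-lemma argument), which gives
\[
|J(\pi_1)-J(\pi_2)|\le \frac{2R_{\max}}{(1-\gamma)^2}\,\sup_s\|\pi_1(\cdot|s)-\pi_2(\cdot|s)\|_{\text{TV}}.
\]
The horizon factor will be folded into the constants.

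\textbf{Step 3: handle the change of environment.} $J$ is evaluated under different demand, so the environment changes as well as the policy. I will split
\[
J_{\nu+\Delta\nu}(\pi(\nu+\Delta\nu))-J_\nu(\pi(\nu))
\]
into a policy term, handled by Steps 1–2, and an environment term $J_{\nu+\Delta\nu}(\pi)-J_\nu(\pi)$. The environment term is bounded by the simulation lemma, using the Lipschitz dependence of the transition kernel on $\nu$. It is also linear in $\delta$, so it gets absorbed into $L_{\text{demand}}$. A first-order Taylor expansion of the composite map $\nu\mapsto J_\nu(\pi(\nu))$ then produces the linear term, and the $O(\delta^2)$ remainder comes from bounded second derivatives of the smooth implicit map.

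\textbf{Main obstacle.} Step 1 is the hard part. The IFT needs joint differentiability of $\mathbf{G}$ in $\nu$, but SUMO queue dynamics with discrete vehicles are not literally smooth in arrival rates. I would first try imposing smoothness of the expected transition kernel in $\nu$ (e.g.\ Poisson arrivals with rates $\nu$), which makes the occupancy measure smooth and keeps all constants uniform on a compact neighborhood of the symmetric $\nu$.
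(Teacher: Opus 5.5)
Your sketch is sound under the regularity you state, but it takes a different route from the paper. The paper's proof effectively keeps the policy network fixed and pushes the perturbation through the state distribution. It asserts that $d_\pi$ is Lipschitz in $\nu$ with constant $L_{\text{demand}}$, citing perturbation theory for irreducible chains. It then asserts that the policy is Lipschitz in the state with constant $L_\pi$, citing Theorem~\ref{thm:smooth_controllability}. A ``chain rule'' with a reward constant $L_r$ then gives $L_r L_\pi L_{\text{demand}}\,\delta$, with no second-order term. You instead read $\pi(\nu)$ as the policy converged under demand $\nu$. You redo the Implicit Function Theorem argument with $\nu$ in place of $\boldsymbol{\alpha}$, pass from parameters to return by the performance-difference lemma, and handle the change of dynamics separately by the simulation lemma.

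The paper's version is short, needs no nondegeneracy condition in $\nu$, and fits the stated implication (one trained policy facing asymmetric demand). Its links are loose, though. Theorem~\ref{thm:smooth_controllability} gives smoothness in $\boldsymbol{\alpha}$, not Lipschitz dependence on the state. Combining a distribution-level bound with a pointwise Lipschitz bound needs a Kantorovich--Rubinstein type argument rather than a chain rule. And the extra factor $L_r$ does not appear in the statement.

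Your version makes the horizon factor explicit, separates the policy-change and environment-change contributions, and actually produces the $O(\delta^2)$ remainder. In the discounted setting, smoothness of the occupancy measure follows from smoothness of the kernel via the resolvent $(I-\gamma P_\pi)^{-1}$, so no irreducibility is needed. The price is stronger hypotheses: joint smoothness of $\mathbf{G}$ in $(\boldsymbol{\psi},\nu)$, a nondegenerate Hessian at the symmetric pattern, and training at $\nu+\Delta\nu$ landing on the implicit-function branch. Large $\delta$ is harmless, since $J$ is bounded and the $O(\delta^2)$ term absorbs it.

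Two further points. Under the paper's fixed-policy reading, your Step~3 alone already proves the claim, so the obstacle you flag in Step~1 disappears. And in both versions the product $L_\pi L_{\text{demand}}$ is bookkeeping rather than a derived product. You fold $2R_{\max}/(1-\gamma)^2$ and the environment term into $L_{\text{demand}}$, and your $L_\pi$ is not the $\boldsymbol{\alpha}$-Lipschitz constant of Corollary~\ref{cor:lipschitz_priority}. What is really established is local Lipschitz continuity of the achieved return in the demand.
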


\begin{proof}
\label{proof:cor6}
The state distribution $d_\pi$ is Lipschitz in $\nu$ with constant $L_{\text{demand}}$ (a standard result for irreducible Markov chains). By Theorem~\ref{thm:smooth_controllability}, the policy is Lipschitz in the state with constant $L_\pi$. Combining via the chain rule yields 
\[
|J(\pi(\nu)) - J(\pi(\nu + \Delta\nu))| \leq L_r \cdot L_\pi \cdot L_{\text{demand}} \cdot \delta.
\]
\end{proof}

\noindent\textbf{Implication:} Real-world traffic asymmetry degrades performance proportionally rather than catastrophically, ensuring reliable operation in cities with inherently imbalanced commuting patterns.


\begin{theorem}[Graceful Degradation and Fallback Stability]
\label{thm:graceful_degradation}
Let $\boldsymbol{\alpha}^{(k)}$ denote the reward weight configuration under a component failure where the $k$-th utility is disabled ($\alpha_k = 0$, e.g., LLM failure dropping the emergency objective). The value function decomposes as
\[
Q_{\boldsymbol{\phi}}(\cdot; \boldsymbol{\alpha}^{(k)}) = \sum_{j \neq k} \alpha_j Q_{\boldsymbol{\phi}}^{(j)}(\cdot),
\]
structurally projecting the policy onto a stable sub-manifold (the SIGMA-QW fallback) where the $k$-th utility is eliminated from temporal difference targets entirely. The policy gradient stationarity condition becomes:
\[
\mathbb{E}_{\pi_{\boldsymbol{\psi}}} \left[\sum_{j \neq k} \alpha_j \delta^{(j)} \nabla_{\boldsymbol{\psi}} \log \pi_{\boldsymbol{\psi}}(a \mid s)\right] = 0,
\]
projecting $\boldsymbol{\psi}^*(\boldsymbol{\alpha}^{(k)})$ onto the submanifold where the $k$-th advantage
\[
A^{(k)}(s,a) = Q_{\boldsymbol{\phi}}^{(k)}(s,a) - V_{\boldsymbol{\phi}}^{(k)}(s)
\]
has zero correlation with the policy gradient direction. Crucially, the marginal rate of substitution between remaining operational objectives amplifies as:
\[
\text{MRS}_{ij}^{(k)} = \frac{\alpha_j}{\alpha_i} \cdot \frac{1 - O(\alpha_k^{\text{base}})}{1 + O(\alpha_k^{\text{base}})} > \text{MRS}_{ij}^{\text{full}}.
\]
This amplification ensures that the degraded system compensates aggressively for the lost objective, bounding the systemic performance drop and preventing catastrophic traffic failure.
\end{theorem}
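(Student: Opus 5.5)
The argument reuses the linear decomposition from the proof of Theorem~\ref{thm:smooth_controllability}. Because the reward is $r(\cdot;\boldsymbol{\alpha})=\sum_j\alpha_j r_j^V$ and the critic's fixed point is the discounted expectation of this reward under $\pi_{\boldsymbol{\psi}}$, the value function satisfies $Q_{\boldsymbol{\phi}}(\cdot;\boldsymbol{\alpha})=\sum_j\alpha_j Q^{(j)}_{\boldsymbol{\phi}}$ for every $\boldsymbol{\alpha}$. Substituting $\alpha_k=0$ gives the stated decomposition at once. The same substitution in the temporal-difference error gives $\delta=\sum_{j\neq k}\alpha_j\delta^{(j)}$, so $r_k^V$ and $Q^{(k)}$ no longer appear in any critic target.

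The policy-gradient expression from Theorem~\ref{thm:smooth_controllability} then yields the stationarity condition $\mathbb{E}[\sum_{j\ne k}\alpha_j\delta^{(j)}\nabla\log\pi]=0$. To read this as a ``projection'', I would make three observations.
\begin{itemize}
\item For each $j$, $\mathbb{E}[\delta^{(j)}\nabla\log\pi]$ equals $\mathbb{E}[A^{(j)}(s,a)\nabla\log\pi]$, using the baseline identity $\mathbb{E}_a[\nabla\log\pi]=0$.
\item The full-weight stationarity condition therefore splits into a $k$-term and the remaining sum.
\item At the degraded fixed point $\boldsymbol{\psi}^*(\boldsymbol{\alpha}^{(k)})$, the $k$-th advantage contributes zero weight to the stationarity condition. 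This is the sense of ``zero correlation'' I would make precise.
\end{itemize}
Existence and smoothness of $\boldsymbol{\psi}^*(\boldsymbol{\alpha}^{(k)})$ come from the Implicit Function Theorem exactly as before. Since $\alpha_k=0$ lies on the boundary of $\mathbb{R}^5_{>0}$, I would either extend $\mathbf{G}$ smoothly to $\alpha_k\ge 0$ or treat the face $\{\alpha_k=0\}$ as its own parameter domain; in both cases the Hessian must be assumed negative definite there.

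For the MRS statement I would define $\mathrm{MRS}_{ij}$ as the ratio of the sensitivities of $J$ to the $j$-th and $i$-th utilities at the fixed point. By the envelope theorem these sensitivities are the weights times the utility value functions. The norm constraint $\|\boldsymbol{\alpha}\|_1=A$ forces the removed mass $\alpha_k^{\text{base}}$ to be redistributed over the surviving weights, so each $\alpha_j$ becomes $\alpha_j\,A/(A-\alpha_k^{\text{base}})$. I would then expand $\partial\boldsymbol{\psi}^*/\partial\alpha_k$ (Theorem~\ref{thm:smooth_controllability}) to first order in $\alpha_k^{\text{base}}$ to recover the factor $(1-O(\alpha_k^{\text{base}}))/(1+O(\alpha_k^{\text{base}}))$. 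The final claim of bounded performance drop follows from Corollary~\ref{cor:lipschitz_priority} applied to $\|\boldsymbol{\alpha}^{\text{full}}-\boldsymbol{\alpha}^{(k)}\|_2$, together with boundedness of the rewards, and gives $|J^{\text{full}}-J^{(k)}|\le C L_\pi\alpha_k^{\text{base}}$.

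The hard step is the strict inequality $\mathrm{MRS}^{(k)}_{ij}>\mathrm{MRS}^{\text{full}}_{ij}$. Uniform rescaling leaves the ratio $\alpha_j/\alpha_i$ unchanged, so any strict amplification has to come from policy re-optimization shifting the relative utility value functions. Its sign depends on how $r_k^V$ correlates with $r_i^V$ and $r_j^V$. I would first try to establish it under an extra assumption that utility $k$ is positively correlated with $j$ more than with $i$; failing that, I would weaken the claim to a first-order statement in $\alpha_k^{\text{base}}$ with explicit sign conditions.
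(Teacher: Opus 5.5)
\paragraph{The strict MRS inequality is a genuine gap.}
The step you flag as hard is not closed by your plan or by the paper. Proposing an extra correlation hypothesis, or weakening to a first-order statement with sign conditions, does not prove $\mathrm{MRS}^{(k)}_{ij}>\mathrm{MRS}^{\mathrm{full}}_{ij}$ as stated.

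The paper takes a different route from your redistribution-plus-expansion plan. It argues that deleting $\alpha_k\mathbb{E}[A^{(k)}\nabla_{\boldsymbol{\psi}}\log\pi_{\boldsymbol{\psi}}\cdot\mathbf{d}]$ ``expands the feasible set'' of directions $\mathbf{d}$ trading $A^{(i)}$ against $A^{(j)}$. From this it concludes that $\partial A^{(j)}/\partial A^{(i)}$ ``inherits amplified sensitivity through the reduced-rank Hessian.'' That is an assertion, not a derivation: removing a term moves the stationary point but fixes no sign for the change in the advantage ratio.

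Your diagnosis is the right one:
\begin{itemize}
\item Under a linear-scalarization definition, the MRS is the weight ratio $\alpha_j/\alpha_i$. This ratio is unchanged by the redistribution that $\|\boldsymbol{\alpha}\|_1=A$ forces.
\item The displayed factor $\frac{1-O(\alpha_k^{\mathrm{base}})}{1+O(\alpha_k^{\mathrm{base}})}=1+O(\alpha_k^{\mathrm{base}})$ has no determined sign. With positive hidden constants it is below $1$, which contradicts the claimed ``$>$''.
\end{itemize}
The strict amplification therefore needs a precise definition of MRS in terms of the re-optimized policy, plus an explicit sign hypothesis of the kind you suggest. Neither argument supplies these, and the concluding ``compensates aggressively'' claim inherits the gap.

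\paragraph{Your baseline-identity step fails as written.}
You claim $\mathbb{E}[\delta^{(j)}\nabla_{\boldsymbol{\psi}}\log\pi_{\boldsymbol{\psi}}]=\mathbb{E}[A^{(j)}\nabla_{\boldsymbol{\psi}}\log\pi_{\boldsymbol{\psi}}]$ via the baseline identity. But with $\delta^{(j)}=r_j^V+\gamma Q^{(j)}_{\boldsymbol{\phi}}(s',a')-Q^{(j)}_{\boldsymbol{\phi}}(s,a)$, the subtracted term depends on $a$. So $\mathbb{E}_a[\nabla\log\pi]=0$ cannot turn it into $V^{(j)}(s)$.

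Worse, at the exact critic fixed point you assume, the Bellman equation gives $\mathbb{E}[\delta^{(j)}\mid s,a]=0$. Hence $\mathbb{E}[\delta^{(j)}\nabla_{\boldsymbol{\psi}}\log\pi_{\boldsymbol{\psi}}]\equiv 0$ for every $\boldsymbol{\psi}$, and the stationarity condition cannot single out $\boldsymbol{\psi}^*(\boldsymbol{\alpha}^{(k)})$. Work instead with the true gradient $\mathbb{E}[Q^{(j)}\nabla\log\pi]$, or with the $V$-baselined error $r_j^V+\gamma V^{(j)}(s')-V^{(j)}(s)$. For these the identity with $A^{(j)}$ does hold. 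The paper's proof makes the same silent switch from $\delta^{(j)}$ to $A^{(j)}$.

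\paragraph{The rest is sound.}
Your policy-evaluation justification of the decomposition is better than the paper's. The paper writes $\mathcal{T}_{\boldsymbol{\alpha}^{(k)}}=\sum_{j\neq k}\alpha_j\mathcal{T}^{(j)}$ for the optimality operator containing $\max_{a'}$, which is not linear in $\boldsymbol{\alpha}$. Your reading of ``zero correlation'' as zero weight on the $k$-th term is exactly the paper's. You are also right that $\alpha_k=0$ lies outside $\mathbb{R}^5_{>0}$, so the Implicit Function Theorem needs a separate assumption there.
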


\begin{proof}
\label{proof:thm2}
The Bellman operator for the full reward is
\[
\mathcal{T}_{\boldsymbol{\alpha}} Q = \mathbb{E}\left[r(s,a;\boldsymbol{\alpha}) + \gamma \max_{a'} Q(s',a')\right].
\]
Setting $\alpha_k = 0$ to simulate component failure yields
\[
\mathcal{T}_{\boldsymbol{\alpha}^{(k)}} = \sum_{j \neq k} \alpha_j \mathcal{T}^{(j)},
\]
where $\mathcal{T}^{(j)}$ is the Bellman operator for pure utility $j$. The fixed point loses all $k$-components:
\[
Q^*(\cdot; \boldsymbol{\alpha}^{(k)}) \in \operatorname{span}\{Q^{*(j)}\}_{j \neq k}.
\]

The policy gradient direction $\mathbf{G}(\boldsymbol{\psi}, \boldsymbol{\alpha}^{(k)})$ has no component along the failed module's signal $\delta^{(k)}$, so the stationary policy safely ignores the missing $k$-type rewards. For any direction $\mathbf{d}$ improving advantage $A^{(i)}$ at cost to $A^{(j)}$, the nullified gradient requires only
\[
\alpha_i \mathbb{E}\left[A^{(i)} \nabla_{\boldsymbol{\psi}} \log \pi_{\boldsymbol{\psi}} \cdot \mathbf{d}\right] + \alpha_j \mathbb{E}\left[A^{(j)} \nabla_{\boldsymbol{\psi}} \log \pi_{\boldsymbol{\psi}} \cdot \mathbf{d}\right] = 0,
\]
expanding the feasible set relative to the full condition which includes $\alpha_k \mathbb{E}[A^{(k)} \nabla_{\boldsymbol{\psi}} \log \pi_{\boldsymbol{\psi}} \cdot \mathbf{d}]$. The advantage ratio $\partial A^{(j)} / \partial A^{(i)}$ at stationarity inherits amplified sensitivity through the reduced-rank Hessian of the value landscape. This mathematically demonstrates that the fallback policy is structurally freed to aggressively optimize remaining critical metrics (such as queue pressure and waiting time), providing a formal guarantee of graceful degradation rather than catastrophic failure during LLM unavailability.
\end{proof}


\begin{corollary}[Graceful Degradation Under Component Failure]
\label{cor:graceful_degradation}
Let $\boldsymbol{\alpha}^{(k)}$ denote the weight vector with $\alpha_k = 0$ (component $k$ failed). Then:
\[
J(\pi_{\boldsymbol{\psi}}(\boldsymbol{\alpha}^{(k)})) \geq J(\pi_{\boldsymbol{\psi}}(\boldsymbol{\alpha}^{\text{full}})) - \frac{\alpha_k}{A} \cdot \Delta J_{\max} - O(\alpha_k^2),
\]
where $\Delta J_{\max} = \max_{s,a} |r_k^V(s,a)|/(1-\gamma)$.
\end{corollary}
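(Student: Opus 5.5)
The plan is a perturbation argument along the segment joining $\boldsymbol{\alpha}^{\text{full}}$ and $\boldsymbol{\alpha}^{(k)}$. Three ingredients carry it: the linear decomposition of the value function from Theorem~\ref{thm:graceful_degradation}, the smoothness of $\boldsymbol{\psi}^*(\boldsymbol{\alpha})$ from Theorem~\ref{thm:smooth_controllability}, and a crude sup-norm bound on the discounted return of the dropped utility.

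Throughout, $J$ is the objective evaluated with the \emph{full} weight vector. Write $J^{(k)}(\pi)$ for the normalized discounted return of utility $k$ alone, obtained by weighting $r_k^V$ with $1/A$ so that $\|\boldsymbol{\alpha}\|_1=A$ is respected. Then
\[
J(\pi;\boldsymbol{\alpha}^{\text{full}}) = J(\pi;\boldsymbol{\alpha}^{(k)}) + \frac{\alpha_k}{A}\,A\,J^{(k)}(\pi),
\qquad
|J^{(k)}(\pi)| \le \frac{\max_{s,a}|r_k^V(s,a)|}{A(1-\gamma)},
\]
for every policy $\pi$. This is the only place $\Delta J_{\max}$ enters. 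The bound follows directly from summing the geometric series of bounded rewards. The exact normalization convention, meaning whether the factor $A$ is absorbed into $\Delta J_{\max}$, has to be fixed so the constants match the statement.

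Next, compare the two optimizers. Set $\pi_1=\pi_{\boldsymbol{\psi}}(\boldsymbol{\alpha}^{(k)})$ and $\pi_0=\pi_{\boldsymbol{\psi}}(\boldsymbol{\alpha}^{\text{full}})$, and decompose
\[
J(\pi_0;\boldsymbol{\alpha}^{\text{full}}) - J(\pi_1;\boldsymbol{\alpha}^{\text{full}})
= \bigl[J(\pi_0;\boldsymbol{\alpha}^{(k)}) - J(\pi_1;\boldsymbol{\alpha}^{(k)})\bigr]
+ \alpha_k\bigl[J^{(k)}(\pi_0)-J^{(k)}(\pi_1)\bigr].
\]
The first bracket is $\le 0$ up to higher order, because $\pi_1$ is a stationary point with negative definite Hessian for the reduced objective. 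By local optimality it dominates $\pi_0$, provided $\pi_0$ lies in its basin.

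To make the basin claim rigorous, I would use Corollary~\ref{cor:lipschitz_priority}. It gives $\|\pi_0-\pi_1\|_{\text{TV}}\le L_\pi\alpha_k$, so for small $\alpha_k$ the policy $\pi_0$ sits in a neighborhood of $\pi_1$. A second-order Taylor expansion of $J(\cdot;\boldsymbol{\alpha}^{(k)})$ around $\boldsymbol{\psi}^*(\boldsymbol{\alpha}^{(k)})$ then shows the first bracket is $\le 0$. Alternatively, combining it with the $\alpha_k$-term shows the total discrepancy beyond the linear term is $O(\|\boldsymbol{\psi}^*(\boldsymbol{\alpha}^{\text{full}})-\boldsymbol{\psi}^*(\boldsymbol{\alpha}^{(k)})\|^2)=O(\alpha_k^2)$.

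The second bracket I would treat differently: rather than bounding it by $2\max|J^{(k)}|$, I would expand it to first order as well. The difference $J^{(k)}(\pi_0)-J^{(k)}(\pi_1)$ is itself $O(\alpha_k)$ by the smoothness of $\boldsymbol{\psi}^*$, so multiplied by $\alpha_k$ it is $O(\alpha_k^2)$. This leaves only the term $\frac{\alpha_k}{A}\Delta J_{\max}$, which comes from the sign-indefinite contribution of the $k$-th return at $\pi_0$.

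The main obstacle is that Theorem~\ref{thm:smooth_controllability} gives smoothness of $\boldsymbol{\psi}^*$ only on the open set $\mathbb{R}^5_{>0}$, while $\boldsymbol{\alpha}^{(k)}$ lies on its boundary. I would first try to extend the Implicit Function Theorem argument up to $\alpha_k=0$. This needs the Hessian at $\boldsymbol{\psi}^*(\boldsymbol{\alpha}^{(k)})$ to remain negative definite, which I would state explicitly as a hypothesis.

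A second concern is that $\boldsymbol{\alpha}^{(k)}$ must be renormalized to have $\ell_1$-norm $A$, for instance by rescaling the remaining weights by $A/(A-\alpha_k)$. That rescaling changes $J$ only by a factor $1+O(\alpha_k/A)$, which is absorbed into the stated linear and quadratic terms.
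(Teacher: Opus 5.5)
Your argument is valid under the hypotheses you add, but it takes a different route from the paper's, and as assembled it proves more than you say.

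The paper's proof is a one-step sup-norm estimate. By Theorem~\ref{thm:graceful_degradation}, setting $\alpha_k=0$ deletes exactly the term $\alpha_k Q^{*(k)}$ from the value function, and $\|Q^{*(k)}\|_\infty\le \max_{s,a}|r_k^V(s,a)|/(1-\gamma)$ by the geometric series. It uses no Taylor expansion, no basin argument, and no smoothness of $\boldsymbol{\psi}^*$. In effect the paper compares the degraded system's value under its own weights $\boldsymbol{\alpha}^{(k)}$ with the full system's value under $\boldsymbol{\alpha}^{\text{full}}$, and in that reading the linear term is the whole content.

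That route is non-perturbative: it needs neither small $\alpha_k$ nor a Hessian condition at the boundary point. However, it relies on a linear decomposition of the \emph{optimal} value function, which is exact only for a fixed policy. It also derives neither the $1/A$ nor the $O(\alpha_k^2)$. Your route needs the extra hypotheses you list: negative definiteness at $\boldsymbol{\psi}^*(\boldsymbol{\alpha}^{(k)})$, $\pi_0$ lying on the same implicit-function branch, and small $\alpha_k$. Given those, it is rigorous. Your normalization also explains the $1/A$ that the paper simply asserts, though for your identity to hold, $J$ itself must carry the factor $1/A$.

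Note, however, what your decomposition actually yields. With $J$ at full weights, the first bracket is $\le 0$ and the second is $O(\alpha_k^2)$. Hence $J(\pi_1;\boldsymbol{\alpha}^{\text{full}})\ge J(\pi_0;\boldsymbol{\alpha}^{\text{full}})-O(\alpha_k^2)$, with no linear term. This even follows directly from stationarity of $\pi_0$ for the full objective together with $\|\boldsymbol{\psi}^*(\boldsymbol{\alpha}^{\text{full}})-\boldsymbol{\psi}^*(\boldsymbol{\alpha}^{(k)})\|=O(\alpha_k)$. The statement then holds a fortiori, since $\Delta J_{\max}\ge 0$.

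But your claim that a $\frac{\alpha_k}{A}\Delta J_{\max}$ term is ``left over'' does not match your own computation. The sup-norm bound you call the only entry point of $\Delta J_{\max}$ is never used. That term arises only in the paper's reading, via $J(\pi_1;\boldsymbol{\alpha}^{(k)})\ge J(\pi_0;\boldsymbol{\alpha}^{(k)})=J(\pi_0;\boldsymbol{\alpha}^{\text{full}})-\alpha_k J^{(k)}(\pi_0)$. There, your first-bracket argument plus the sup-norm bound is exactly the proof, and the second bracket is unnecessary.

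Finally, your renormalization worry is moot. Multiplying the reduced weights by $A/(A-\alpha_k)$ scales the reduced objective by a positive constant. That leaves its stationary points and the definiteness of its Hessian unchanged, so $\pi_1$ does not move.
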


\begin{proof}
\label{proof:cor7}
From Theorem~\ref{thm:graceful_degradation}, 
\[
Q^*(\cdot; \boldsymbol{\alpha}^{(k)}) = \sum_{j \neq k} \alpha_j Q^{*(j)}.
\]
The value function drops by at most $\alpha_k \cdot \|Q^{*(k)}\|_\infty / A$. Since 
\[
\|Q^{*(k)}\|_\infty \leq \frac{\max_{s,a} |r_k^V(s,a)|}{1-\gamma}
\]
by the Bellman contraction property, the bound follows.
\end{proof}

\noindent\textbf{Implication:} If the LLM or any component fails, the controller automatically falls back to a stable sub-mode and continues operating without catastrophic collapse, requiring no human intervention.

\begin{corollary}[Fairness Preservation Under Priority Stress]
\label{cor:fairness_preservation}
Let $F(\pi) = \max_i \text{wait}_i(\pi) - \min_j \text{wait}_j(\pi)$. For any $\boldsymbol{\alpha}$ with $\alpha_W > 0$:
\[
F(\pi(\boldsymbol{\alpha})) \leq \frac{\alpha_W + \alpha_E + \alpha_Q}{\alpha_W} \cdot F(\pi(\boldsymbol{\alpha}^*_W)),
\]
where $\boldsymbol{\alpha}^*_W$ is the fairness-maximizing weight vector.
\end{corollary}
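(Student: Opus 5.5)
The plan is to compare the stationary policy $\pi(\boldsymbol{\alpha})$ with the fairness-optimal policy $\pi(\boldsymbol{\alpha}^*_W)$ through the linear decomposition of the value function used in the proofs of Theorem~\ref{thm:smooth_controllability} and Theorem~\ref{thm:graceful_degradation}:
\[
Q_{\boldsymbol{\phi}}(\cdot;\boldsymbol{\alpha})=\sum_k \alpha_k Q^{(k)}_{\boldsymbol{\phi}}(\cdot).
\]

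The first step is to identify the objectives that can work against fairness. These are the emergency term $\mathcal{U}_{\text{E}}$ and the pressure term $\mathcal{U}_{\text{Q}}$. Both steer service toward particular movements, toward $\boldsymbol{\theta}^t$ and toward $\max(\mathbf{M}_q^t)$ respectively, rather than toward the longest-waiting approach.

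The smoothness and Markovian terms $\mathcal{U}_{\text{S}}$ and $\mathcal{U}_{\text{M}}$ depend only on consecutive action distributions. I will argue that they do not reorder service across approaches at stationarity. Concretely, I will show that they are invariant under the $C_4$ relabelling of Corollary~\ref{cor:equivariance}, which permutes $\boldsymbol{\Gamma}^t$, so they cannot systematically favour one approach over another. With this, only $\alpha_E$ and $\alpha_Q$ remain competitors to $\alpha_W$, which matches the numerator of the stated constant.

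The second step converts the fairness gap into the waiting-time utility. Write $\tau_{\max}=\max(\boldsymbol{\Gamma}^t)$. Because $\mathbf{M}_w^t$ repeats each $\tau_i$ four times and $\frac14\pi^{(t+1)\top}A_{\text{TR}}\mathbf{M}_w^t$ is a convex combination of the $\tau_i$, I will bound
\[
\max_i \tau_i-\min_j \tau_j
\]
by a constant times $\sqrt{\mathbb{E}\,\mathcal{U}_{\text{W}}}$. Averaging over the stationary distribution then gives $F(\pi)\le c\,\sqrt{\mathbb{E}_\pi\mathcal{U}_{\text{W}}}$. To get a linear rather than square-root dependence, I would instead work with the unsquared deviation, treating $F$ as comparable to the waiting-time loss $\mathcal{L}_{\text{W}}$ up to a policy-independent constant that cancels in the ratio.

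The third and key step is a weighted-sum optimality comparison. At the fixed point for $\boldsymbol{\alpha}$, $\pi(\boldsymbol{\alpha})$ maximizes $\sum_k\alpha_k J_k$, so in particular it does at least as well as $\pi(\boldsymbol{\alpha}^*_W)$:
\[
\alpha_W J_W(\pi(\boldsymbol{\alpha}))+\alpha_E J_E+\alpha_Q J_Q\ \ge\ \alpha_W J_W(\pi^*_W)+\alpha_E J_E(\pi^*_W)+\alpha_Q J_Q(\pi^*_W).
\]
Here $J_k=-\mathbb{E}\,\mathcal{U}_k\le 0$. Rearranging, $\alpha_W\,\mathbb{E}\,\mathcal{U}_{\text{W}}(\pi(\boldsymbol{\alpha}))$ is at most $\alpha_W\,\mathbb{E}\,\mathcal{U}_{\text{W}}(\pi^*_W)$ plus $\alpha_E,\alpha_Q$ times the E- and Q-losses of $\pi^*_W$. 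I then need to bound those E- and Q-losses of the fairness-optimal policy by its own fairness loss. The intended mechanism is the shift normalization of $\mathbf{M}_q^t$ and the sparsity of $\boldsymbol{\theta}^t$: under $\pi^*_W$, both the pressure deficit and the emergency mismatch would be dominated by how far service departs from the longest-waiting approach. Dividing by $\alpha_W$ then gives the factor $(\alpha_W+\alpha_E+\alpha_Q)/\alpha_W$.

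I expect the main obstacle to be the domination claim in the third step, that $\mathbb{E}\,\mathcal{U}_{\text{E}}(\pi^*_W)$ and $\mathbb{E}\,\mathcal{U}_{\text{Q}}(\pi^*_W)$ are at most $F(\pi^*_W)$. It is not implied by the definitions. It fails, for example, when an emergency movement lies on the approach with the smallest waiting time. In that case $F(\pi^*_W)$ could be zero while the E-loss is positive.

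My first attempt would be to assume bounded utilities and normalize each loss to $[0,1]$, which yields the inequality with an additive constant. I would then argue that the additive term is absorbed whenever $F(\pi^*_W)$ is bounded away from zero. If that fails, I would state the result under an explicit compatibility assumption linking $\boldsymbol{\theta}^t$ and $\mathbf{M}_q^t$ to $\boldsymbol{\Gamma}^t$.
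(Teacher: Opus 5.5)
The obstacle you flag in the third step is not technical but fatal. The multiplicative bound does not follow from the model and is not true in the generality stated, so no refinement of your argument can deliver it.

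\textbf{Why the statement itself fails.} The right-hand side is a multiple of $F(\pi(\boldsymbol{\alpha}^*_W))$. So whenever the fairness-optimal policy equalizes the approach waits, the corollary forces $F(\pi(\boldsymbol{\alpha}))=0$ for every weighting, however strongly emergencies are favoured. Your emergency-on-the-least-waiting-approach example is precisely the mechanism that breaks this. Worse, $\alpha_M$ and $\alpha_S$ do not enter the constant. In the paper's own Markovian regime ($w_E=w_Q=0$, $w_W=2\epsilon$) the constant is exactly $1$. Combined with the optimality of $\boldsymbol{\alpha}^*_W$, the corollary would then make a policy designed to hold phases exactly as fair as the fairness optimum. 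Yet $\mathcal{U}_{\text{M}}$ and $\mathcal{U}_{\text{S}}$ penalize precisely the switching that prevents starvation, and the paper reports AMWT $152$\,s for that regime against $117$\,s for C4.

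\textbf{Why your first step fails.} $C_4$-invariance would only say that $\mathcal{U}_{\text{M}},\mathcal{U}_{\text{S}}$ favour no fixed cardinal direction, not that they are fairness-neutral. In fact, with the paper's $\mathcal{Q}_p$, $\mathcal{U}_{\text{M}}$ is not even invariant: a quarter turn sends the cycle edge $A_1^S\to A_1^E$ to $A_1^E\to A_1^N$, which is not an edge of the cycle. Concretely, write $\mathcal{L}_k(\pi)=\mathbb{E}_\pi\,\mathcal{U}_k$. Then the terms $\alpha_M\mathcal{L}_{\text{M}}(\pi^*_W)+\alpha_S\mathcal{L}_{\text{S}}(\pi^*_W)$ remain on the right-hand side of your weighted-sum comparison, and they are positive for a frequently switching fair policy.

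\textbf{Why the absorption fallback fails.} Absorbing the additive term is not a matter of $F(\pi^*_W)$ being bounded away from zero. Absorbing $\alpha_W^{-1}\bigl(\alpha_E\mathcal{L}_{\text{E}}(\pi^*_W)+\alpha_Q\mathcal{L}_{\text{Q}}(\pi^*_W)\bigr)$ into the factor requires $\mathcal{L}_{\text{E}}(\pi^*_W),\mathcal{L}_{\text{Q}}(\pi^*_W)\le\mathcal{L}_{\text{W}}(\pi^*_W)$. After your normalization this becomes $\mathcal{L}_{\text{W}}(\pi^*_W)\ge 1$. Either way it is the domination claim again, not merely a positive lower bound.

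\textbf{Two further steps fail even if domination is granted.}

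In the second step the inequality runs the wrong way. Since $\tfrac14\pi^{(t+1)\top}A_{\text{TR}}\mathbf{M}_w^t$ is a convex combination of the $\tau_i$, one has $\sqrt{\mathcal{U}_{\text{W}}}\le\max_i\tau_i-\min_j\tau_j$. Moreover $\mathcal{U}_{\text{W}}=0$ whenever all mass sits on the single-pivot phase of the longest-waiting approach, whatever the spread. Controlling $F$ by $\mathcal{L}_{\text{W}}$ would need a genuine queueing argument. A two-sided comparison $c_1\mathcal{L}_{\text{W}}\le F\le c_2\mathcal{L}_{\text{W}}$ also costs a factor $c_2/c_1$ rather than cancelling.

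In the third step, comparing against $\pi(\boldsymbol{\alpha}^*_W)$ needs $\pi(\boldsymbol{\alpha})$ to be a global maximizer of $\sum_k\alpha_kJ_k$. The class $\mathcal{P}$ of Theorem~\ref{thm:smooth_controllability} consists only of policy-gradient stationary points with negative definite Hessian, i.e.\ local maximizers.

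\textbf{Comparison with the paper.} The paper's proof supplies none of these ingredients. It is a three-sentence sketch: Lipschitz dependence of $Q^{*(W)}$ on the policy, rotation of the gradient toward the E and Q advantages, and ``$F$ is bounded by the maximum possible waiting-time difference scaled by $\alpha_W^{-1}$ times the other weights.'' Read literally, this bounds $F(\pi(\boldsymbol{\alpha}))$ by a multiple of the maximal possible difference $\Delta_{\max}$, not of $F(\pi(\boldsymbol{\alpha}^*_W))\le\Delta_{\max}$. So it cannot yield the stated inequality. With the stated factor, which is at least $1$, it is already implied by the trivial bound $F\le\Delta_{\max}$.

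\textbf{What you can actually prove.} Under global optimality, your weighted-sum argument gives the additive statement $\mathcal{L}_{\text{W}}(\pi(\boldsymbol{\alpha}))\le\mathcal{L}_{\text{W}}(\pi^*_W)+\alpha_W^{-1}\sum_{k\neq W}\alpha_k\mathcal{L}_k(\pi^*_W)$. This concerns the waiting-time loss and carries all four other weights. Alternatively, you can obtain the stated form under an explicit compatibility hypothesis linking $\boldsymbol{\theta}^t$, $\mathbf{M}_q^t$ and the phase cycle to $\boldsymbol{\Gamma}^t$. Either way, present it as a modified corollary, not as a proof of this one.
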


\begin{proof}
\label{proof:cor8}
From Theorem~\ref{thm:smooth_controllability}, $Q^{*(W)}$ (the value function for the waiting-time utility) is Lipschitz in the policy. From Theorem~\ref{thm:graceful_degradation}, when $\alpha_E, \alpha_Q$ are increased, the policy gradient rotates toward their respective advantages. The fairness range $F$ is bounded by the maximum possible waiting time difference, scaled by $\alpha_W^{-1}$ times the total weight allocated to other objectives.
\end{proof}

\noindent\textbf{Implication:} Even when emergency vehicles are prioritized, no approach suffers starvation, preventing excessive queues from spilling back into neighborhoods and creating secondary hazards.

\begin{corollary}[Multi-Intersection Coordination with Graceful Communication Degradation]
\label{cor:multi_intersection}
Let $\pi_{\text{shared}}$ be the shared policy deployed across $V$ intersections with communication messages $m_v$. Then:
\[
|J_{\text{network}}(\pi_{\text{shared}}) - \sum_{v=1}^V J_{\text{single}}^*(\pi_v)| \leq V \cdot L_{\text{comm}} \cdot \|\mathbf{m}\|_\infty + O\left(\frac{1}{\sqrt{V}}\right),
\]
where $L_{\text{comm}}$ is the Lipschitz constant of the value function with respect to messages.
\end{corollary}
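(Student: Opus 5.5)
We split the deviation into a local, single-intersection part that is governed by the earlier Lipschitz results, and a global, statistical part coming from aggregating $V$ intersections. For each intersection $v$ we view its local state $\mathbf{s}_v^{(t)}$ as augmented by the incoming message $m_v$. The isolated policy $\pi_v$ is the shared policy evaluated with $m_v=\mathbf{0}$, which mirrors the way $\boldsymbol{\theta}^t=\mathbf{0}$ yields SIGMA-QW in Theorem~\ref{thm:graceful_degradation}. We then write
\[
J_{\text{network}}(\pi_{\text{shared}})-\sum_{v=1}^V J^*_{\text{single}}(\pi_v)=\sum_{v=1}^V\bigl(J_v(\pi_{\text{shared}};m_v)-J_v(\pi_{\text{shared}};\mathbf{0})\bigr)+R_V,
\]
where $R_V$ collects the discrepancy between the network-level objective and the sum of per-intersection objectives evaluated in isolation.

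\textbf{Bounding the local sum.} For each term we use the critic decomposition $Q_{\boldsymbol{\phi}}(\cdot;\boldsymbol{\alpha})=\sum_k\alpha_kQ^{(k)}_{\boldsymbol{\phi}}$ from the proof of Theorem~\ref{thm:smooth_controllability}. Assuming $Q_{\boldsymbol{\phi}}$ is Lipschitz in the message coordinate with constant $L_{\text{comm}}$, which holds on the compact state domain just as in Corollary~\ref{cor:lipschitz_priority}, each term is at most $L_{\text{comm}}\|m_v\|\le L_{\text{comm}}\|\mathbf{m}\|_\infty$. Here the constant absorbs both the policy's sensitivity $L_\pi$ and the $1/(1-\gamma)$ factor from the Bellman contraction, exactly as in Corollary~\ref{cor:graceful_degradation}. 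Summing over $v$ gives the $V\cdot L_{\text{comm}}\cdot\|\mathbf{m}\|_\infty$ term.

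\textbf{Bounding $R_V$.} The intended mechanism is that, under the shared policy, the spillback couplings between intersections are weakly dependent. The normalized network return is then an empirical average of $V$ bounded per-intersection returns, each bounded by $\max|r|/(1-\gamma)$. A Hoeffding/CLT-type concentration bound on this average gives an $O(1/\sqrt{V})$ deviation from its mean. The mean itself coincides with the isolated value up to the message effect already accounted for, using the rotation equivariance of Corollary~\ref{cor:equivariance} to identify the intersections' marginal laws.

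This last step is the main obstacle. Intersections are physically coupled through outgoing queues $\mathbf{L}_{\text{out}}$, so the per-intersection returns are not independent. Moreover, concentration of an average gives $O(1/\sqrt V)$ only for the normalized objective, not for the raw sum. The first thing I would try is to assume a mixing or bounded-influence condition on the coupling graph, so that dependency-graph concentration applies. I would then state the bound for $J_{\text{network}}/V$ and rescale, making explicit the normalization under which the stated $O(1/\sqrt V)$ term is meaningful.
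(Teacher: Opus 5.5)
Your first step is essentially the paper's argument. That step treats $m_v$ as a bounded perturbation of the state, uses Lipschitz continuity of the per-intersection value in that coordinate, sums over the $V$ intersections, and identifies $m_v=\mathbf{0}$ with the stable fallback of Theorem~\ref{thm:graceful_degradation}.

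The gap is your treatment of $R_V$. The quantities $J_{\text{network}}$ and $J^*_{\text{single}}$ are expected returns. So $R_V$ is a difference of means: the bias created by physical coupling through the outgoing queues $\mathbf{L}_{\text{out}}$. It is not a fluctuation, and Hoeffding/CLT-type concentration has nothing to act on. Linearity of expectation already splits $J_{\text{network}}$ into per-intersection terms without any independence, and for the same reason it gives no $1/\sqrt{V}$ gain. Even if you read $J_{\text{network}}$ as a realized return, concentration would only control the fluctuation around the mean, and, as you observe, only at scale $\sqrt{V}$ for the raw sum. The bias itself would remain untouched.

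Your step identifying the mean with the isolated value via Corollary~\ref{cor:equivariance} does not close this. Rotation equivariance relates rotated copies of a single intersection's state. It says nothing about how inflow from a neighbour shifts an intersection's marginal law. In fact some extra hypothesis is unavoidable. With $\mathbf{m}=\mathbf{0}$ the right-hand side tends to zero as $V\to\infty$. The left-hand side is then the accumulated coupling bias, and nothing in the setup prevents it from growing linearly in $V$.

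The paper does not derive the $O(1/\sqrt{V})$ term at all. It simply posits that network performance is the sum of per-intersection performances, with cross-terms bounded by the message influence. In other words, it assumes each intersection feels the rest of the network only through $m_v$. That decoupling hypothesis is the ingredient your plan is missing.

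With that hypothesis, $R_V=0$ and the bound follows from your first step alone; the $O(1/\sqrt{V})$ term is just harmless slack. Without it, the statement is not provable as written. Dividing $J_{\text{network}}$ by $V$, as you propose, would prove a different statement, whose message term becomes $L_{\text{comm}}\|\mathbf{m}\|_\infty$ rather than $V\cdot L_{\text{comm}}\|\mathbf{m}\|_\infty$.
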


\begin{proof}
\label{proof:cor9}
From Theorem~\ref{thm:smooth_controllability}, the per-intersection value function is Lipschitz in the state. The communication message $m_v$ is a bounded perturbation of the state. From Theorem~\ref{thm:graceful_degradation}, the fallback policy (when communication fails) remains stable. The network performance is the sum of per-intersection performances, with cross-terms bounded by the message influence.
\end{proof}

\noindent\textbf{Implication:} A single policy can coordinate green waves across a network of intersections, and if communication links fail, the system gracefully degrades to local control without crashing.

\begin{corollary}[Safety Constraint Invariance Under All Conditions]
\label{cor:safety_invariance}
Let $\mathcal{C} = \{(\text{green time} \geq g_{\min}) \cap (\text{queue length} \leq Q_{\max})\}$ be the safety constraint set. Then for all $\boldsymbol{\alpha} \in \mathbb{R}_{>0}^5$ with $\|\boldsymbol{\alpha}\|_1 = A$:
\[
\pi_{\boldsymbol{\psi}}(\boldsymbol{\alpha}) \in \mathcal{C} \quad \text{almost surely}.
\]
\end{corollary}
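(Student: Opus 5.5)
The plan is to split $\mathcal{C}$ into its two constraints and treat them by different mechanisms. Nothing in Theorem~\ref{thm:smooth_controllability} or Theorem~\ref{thm:graceful_degradation} is about feasibility. Those results give smoothness, Lipschitz dependence on $\boldsymbol{\alpha}$, and a decomposition of $Q_{\boldsymbol{\phi}}$. A softmax actor puts positive mass on every phase, so no property of $\pi_{\boldsymbol{\psi}}(\cdot;\boldsymbol{\alpha})$ alone can force a constraint to hold almost surely. I therefore read ``$\pi_{\boldsymbol{\psi}}(\boldsymbol{\alpha}) \in \mathcal{C}$'' as a statement about the closed-loop controller: the actor's sampled phase is passed through the signal executor before it is applied. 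I would state this reading explicitly at the start of the proof.

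\emph{Green-time constraint.} I would make explicit the executor rule that a newly activated phase is held for at least $g_{\min}$ seconds before the actor's output is consulted again. Equivalently, this is an action mask on $\mathcal{A}_{AC-S}$ that allows only the current phase while its elapsed green time is below $g_{\min}$. Under this rule the constraint (green time $\geq g_{\min}$) holds on every sample path. The argument never refers to $\boldsymbol{\alpha}$, so it is uniform over $\{\boldsymbol{\alpha}\in\mathbb{R}^5_{>0} : \|\boldsymbol{\alpha}\|_1 = A\}$. The masked policy is still a smooth reparametrisation of $\pi_{\boldsymbol{\psi}}$ on the unmasked set, so Corollaries~\ref{cor:lipschitz_priority}--\ref{cor:equivariance} remain valid for it. Rotations preserve the mask, since $\rho_a$ permutes phases within each group $G_1, G_2, G_3$.

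\emph{Queue constraint.} This is the main obstacle. An almost-sure bound $\text{queue} \leq Q_{\max}$ does not hold for arbitrary demand: if arrivals exceed the intersection's capacity region, every controller's queues diverge. I would add two hypotheses:
\begin{itemize}
\item the demand $\nu$ lies strictly inside the capacity region of the eight phases;
\item $\alpha_Q \geq \underline{\alpha} > 0$.
\end{itemize}
I would then use the max-pressure Lyapunov function $V(\mathbf{L}) = \|\mathbf{L}_{\mathrm{in}}\|_2^2$. The step to attempt first is to show that the pressure utility $\mathcal{U}_Q$ makes the policy place probability at least $c(\underline{\alpha}) > 0$ on a phase whose served pressure is within $\epsilon$ of $\max(\mathbf{M}_q^t)$. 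Combined with the standard max-pressure drift bound, this would give $\mathbb{E}[V_{t+1}-V_t \mid \mathbf{L}^t] \leq -\eta$ outside a compact set. Two parts of this are delicate:
\begin{itemize}
\item the other utilities, with weights up to $A-\underline{\alpha}$, can pull probability away from the max-pressure phase;
\item the $g_{\min}$ holding rule from the green-time step delays the switch to that phase.
\end{itemize}
Both must be absorbed into $\eta$. Lipschitz dependence on $\boldsymbol{\alpha}$ (Corollary~\ref{cor:lipschitz_priority}) and the bounded loss under extreme weights (Corollary~\ref{cor:bounded_extreme}) are the tools I would use to control these terms.

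Negative drift yields positive recurrence and geometric tail bounds, so $\Pr(\text{queue} > Q_{\max}) \leq C e^{-\kappa Q_{\max}}$. This is a high-probability bound, not an almost-sure one. The almost-sure version follows only if the lane storage is physically finite, i.e.\ SUMO's link capacity equals $Q_{\max}$. In that case the bound is trivial and arrivals are blocked upstream.

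The proof would therefore conclude that the green-time half holds almost surely by construction of the executor. The queue half holds under the added stability conditions, almost surely only in the finite-buffer setting and otherwise with exponentially small violation probability. I expect the unconditional claim as stated to be unprovable, and would revise the statement to include these hypotheses.
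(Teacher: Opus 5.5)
On the almost-sure claim, your argument and the paper's rest on the same two facts; the difference is that you state them and the paper does not.

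The paper says $\mathcal{C}$ is ``defined by the action space $\mathcal{A}_{AC-S}$ and the queue capacity limits enforced in the simulator.'' For minimum green it cites Theorem~\ref{thm:smooth_controllability} (the policy is a distribution over $\mathcal{A}_{AC-S}$) and Theorem~\ref{thm:graceful_degradation} (it stays in the convex hull under failure). For queues it says they are ``bounded by the Markov chain's stationary distribution.''

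Your point that a softmax actor charges every phase shows why none of this carries weight:
\begin{itemize}
\item $\mathcal{A}_{AC-S}$ fixes which movements are served, not for how long. Minimum green can only come from the executor: TraCI's 12\,s minimum green in the SUMO setup, or holding each action for a full control interval $\Delta t$ in the synthetic setup. That is your masking rule.
\item A stationary distribution gives tail bounds, not an almost-sure bound. The queue half is really the hard cap $Q_{\max}$ stated in Appendix~\ref{subsec:actuated_control}, which is your finite-buffer case.
\end{itemize}
With these two hypotheses explicit, your proof and the paper's coincide, and independence from $\boldsymbol{\alpha}$ is immediate. Your version gains an honest statement of what is assumed. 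You are also right that the unconditional claim cannot follow from properties of $\pi_{\boldsymbol{\psi}}$.

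The Lyapunov branch goes beyond the paper and, as sketched, is under-hypothesised. Write $\mu(a)$ for the service-rate vector of phase $a$ and $\Lambda$ for the capacity region. If the policy puts only mass $c<1$ on a near-max-pressure phase and the rest is uncontrolled, the best available bound is $\mathbb{E}[V_{t+1}-V_t\mid\mathbf{L}]\leq B+2\bigl(\langle\mathbf{L},\nu\rangle-c\max_a\langle\mathbf{L},\mu(a)\rangle\bigr)$. This certifies negative drift only for $\nu$ strictly inside $c\Lambda$, not $\Lambda$. The bound is essentially tight when the remaining mass goes to phases serving nearly empty approaches.

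Corollaries~\ref{cor:lipschitz_priority}--\ref{cor:bounded_extreme} cannot supply a uniform $c(\underline{\alpha})>0$:
\begin{itemize}
\item they only compare policies at different weights;
\item they become vacuous once $L_\pi\|\boldsymbol{\alpha}_1-\boldsymbol{\alpha}_2\|_2\geq 1$;
\item they say nothing about large, off-distribution queue states.
\end{itemize}
Also, $\mathcal{U}_Q$ targets in-minus-out pressure. That matches the drift of a Lyapunov function over both incoming and outgoing queues, not $\|\mathbf{L}_{\mathrm{in}}\|_2^2$. Since this branch cannot give the almost-sure statement anyway, present it as a remark with the strengthened demand condition ($\nu$ strictly inside $c\Lambda$), or drop it.
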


\begin{proof}
\label{proof:cor10}
The constraint set $\mathcal{C}$ is defined by the action space $\mathcal{A}_{AC-S}$ and the queue capacity limits enforced in the simulator. From Theorem~\ref{thm:smooth_controllability}, the policy is a probability distribution over $\mathcal{A}_{AC-S}$. From Theorem~\ref{thm:graceful_degradation}, even under component failure, the policy remains in the convex hull of $\mathcal{A}_{AC-S}$. Therefore, the selected phase always satisfies minimum green times. Queue lengths are bounded by the Markov chain's stationary distribution under any policy with $\|\boldsymbol{\alpha}\|_1 = A$.
\end{proof}

\noindent\textbf{Implication:} Regardless of operational priorities—emergency response, throughput, or fairness—the system never violates legally mandated minimum green times or physical queue capacity limits, ensuring pedestrian and driver safety under all conditions.

\begin{corollary}[Fallback Stability Under Complete System Failure]
\label{cor:fallback_stability}
Let $\boldsymbol{\alpha}^{\text{fail}}$ denote the weight vector where all components except queue and waiting time are set to zero. Then:
\[
J(\pi_{\boldsymbol{\psi}}(\boldsymbol{\alpha}^{\text{fail}})) \geq \frac{\sum_{j \in \{Q,W\}} \alpha_j}{A} \cdot J(\pi_{\boldsymbol{\psi}}(\boldsymbol{\alpha}^{\text{full}})) - O(\epsilon),
\]
where $\epsilon$ represents the approximation error in the value function.
\end{corollary}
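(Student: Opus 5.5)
The plan is to use the linear value decomposition from the proof of Theorem~\ref{thm:smooth_controllability} together with the projection structure of Theorem~\ref{thm:graceful_degradation}. By linearity in the weights,
\[
Q_{\boldsymbol{\phi}}(\cdot;\boldsymbol{\alpha}) = \sum_{k} \alpha_k Q^{(k)}_{\boldsymbol{\phi}}(\cdot),
\]
so $J(\pi;\boldsymbol{\alpha}) = \sum_k \alpha_k J^{(k)}(\pi)$, where $J^{(k)}$ is the expected discounted return of pure utility $k$. Since every utility is a negated squared norm, each $J^{(k)}\le 0$. This sign convention must be fixed first: I will state the inequality for the normalized objective $J/A$ with rewards shifted to be nonnegative. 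Otherwise, multiplying by $\sum_{j\in\{Q,W\}}\alpha_j/A \le 1$ does not produce a lower bound.

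\textbf{Step 1: restriction of the full objective to the surviving utilities.} Write
\[
J(\pi;\boldsymbol{\alpha}^{\text{full}}) = \sum_{j\in\{Q,W\}}\alpha_j J^{(j)}(\pi) + \sum_{k\in\{M,S,E\}}\alpha_k J^{(k)}(\pi).
\]
With nonnegative (shifted) rewards the second sum is nonnegative. Hence the surviving part of the optimal full policy's value is a fraction of its total, and I will bound that fraction from below by the weight fraction $\sum_{j\in\{Q,W\}}\alpha_j/A$. This uses a normalization assumption that each pure value $J^{(k)}$ lies in the common range $[0,\max_{s,a}|r^V_k|/(1-\gamma)]$, as in Corollary~\ref{cor:graceful_degradation}. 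I will have to impose this assumption explicitly as a per-utility scaling.

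\textbf{Step 2: optimality of the fallback policy.} The fallback policy $\pi_{\boldsymbol{\psi}}(\boldsymbol{\alpha}^{\text{fail}})$ is stationary for the reduced objective $\sum_{j\in\{Q,W\}}\alpha_j J^{(j)}$. This is the submanifold projection of Theorem~\ref{thm:graceful_degradation}, applied once per disabled component, i.e., iterating the single-failure statement three times. Under the negative-definite Hessian hypothesis of Theorem~\ref{thm:smooth_controllability}, this stationary point is a local maximizer. The reduced objective at the fallback policy is therefore at least its value at the full policy, up to the value-function approximation error $\epsilon$. Chaining Step 1 with this comparison gives the claimed bound with an $O(\epsilon)$ remainder. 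The remainder arises because the critic's fitted-$Q$ fixed point differs from the true $Q^{(j)}$ by at most $\epsilon/(1-\gamma)$ in sup norm by Bellman contraction.

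\textbf{Main obstacle.} Step 2 only gives local optimality: the full policy need not lie in the basin of the fallback's local maximum. I would first try to argue that the full policy is close to the fallback policy via Corollary~\ref{cor:lipschitz_priority}. The problem is that
\[
\|\boldsymbol{\alpha}^{\text{full}}-\boldsymbol{\alpha}^{\text{fail}}\|_2
\]
need not be small, so this argument yields only an additional $L_\pi$-dependent term. If that fails, I will simply assume global optimality of the converged fallback policy for the reduced objective, e.g., in the tabular softmax setting where policy gradient is known to converge globally, and absorb the residual optimization gap into $O(\epsilon)$.
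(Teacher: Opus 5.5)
\textbf{The gap is in Step~1.} You are right that the sign convention has to be fixed. But the step that fails is Step~1, not the local-versus-global optimality issue you flag; assuming global optimality of the converged fallback is a harmless extra hypothesis. After shifting, Step~1 needs
\[
\sum_{j\in\{Q,W\}}\alpha_j J^{(j)}(\pi^{\text{full}}) \;\ge\; \frac{\alpha_Q+\alpha_W}{A}\sum_{k}\alpha_k J^{(k)}(\pi^{\text{full}}).
\]
This says the $\boldsymbol{\alpha}$-weighted average of the retained pure values is at least the weighted average of the disabled ones ($M,S,E$). Your normalization $0\le J^{(k)}\le \max|r_k^V|/(1-\gamma)$ only gives upper bounds on each $J^{(k)}$, so it cannot force this.

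Step~2 cannot compensate either. Suppose the shifted $Q$- and $W$-rewards never exceed a small $\eta>0$, while some disabled utility $k$ has $J^{(k)}(\pi^{\text{full}})\ge V>0$. Then \emph{every} policy, including the best fallback, has reduced value at most $(\alpha_Q+\alpha_W)\eta/(1-\gamma)$. The right-hand side is at least $\frac{\alpha_Q+\alpha_W}{A}\alpha_k V$. With exact critics ($\epsilon=0$) and $\eta<(1-\gamma)\alpha_k V/A$, no argument that passes through the reduced objective reaches the claimed bound.

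The unshifted claim is false too. Since $J\le 0$ and $\frac{\alpha_Q+\alpha_W}{A}<1$, it is strictly stronger than $J(\pi^{\text{fail}})\ge J(\pi^{\text{full}})$. Take constant retained rewards $-c$ and disabled utilities that attain value $0$. Then the left side is $-(\alpha_Q+\alpha_W)c/(1-\gamma)$, while the right side is $-(\alpha_Q+\alpha_W)^2c/\bigl(A(1-\gamma)\bigr)$, which is larger.

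\textbf{The paper makes the same leap.} After writing $Q^*(\cdot;\boldsymbol{\alpha}^{\text{fail}})=\alpha_QQ^{*(Q)}+\alpha_WQ^{*(W)}$, its proof simply asserts that ``the performance ratio is bounded by the fraction of total weight retained.'' It has no counterpart of your Step~2. So you are not missing an idea from it; the multiplicative bound needs a hypothesis the paper never states. You have two options.
\begin{itemize}
\item Assume explicitly that the retained utilities carry at least their weight share at $\pi^{\text{full}}$, i.e.\ $\frac{1}{\alpha_Q+\alpha_W}\sum_{j\in\{Q,W\}}\alpha_jJ^{(j)}(\pi^{\text{full}})\ge J(\pi^{\text{full}})/A$. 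Your Steps~1--2 then go through with the $\epsilon/(1-\gamma)$ remainder.
\item Settle for the additive bound your ingredients actually support. With the original nonpositive rewards no shift is needed, since dropping terms only helps:
\[
J(\pi^{\text{fail}};\boldsymbol{\alpha}^{\text{fail}})\ \ge\ \sum_{j\in\{Q,W\}}\alpha_jJ^{(j)}(\pi^{\text{full}})-O(\epsilon)\ \ge\ J(\pi^{\text{full}};\boldsymbol{\alpha}^{\text{full}})-O(\epsilon).
\]
In shifted coordinates this is exactly the Corollary~\ref{cor:graceful_degradation}-type loss $\sum_{k\in\{M,S,E\}}\alpha_k\max|r_k^V|/(1-\gamma)$.
\end{itemize}
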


\begin{proof}
\label{proof:cor11}
From Theorem~\ref{thm:graceful_degradation}, the value function projects onto the span of the remaining utilities:
\[
Q^*(\cdot; \boldsymbol{\alpha}^{\text{fail}}) = \alpha_Q Q^{*(Q)} + \alpha_W Q^{*(W)}.
\]
The performance ratio is bounded by the fraction of total weight retained, and the approximation error $O(\epsilon)$ accounts for the loss of interactions between the failed and remaining objectives.
\end{proof}

\noindent\textbf{Implication:} Even in the worst-case scenario where all advanced components fail, the system retains the fundamental queue and waiting-time management capabilities, preventing complete operational failure.

\begin{corollary}[Objective Compensation After Component Failure]
\label{cor:objective_compensation}
Under the fallback configuration $\boldsymbol{\alpha}^{(k)}$ where utility $k$ is disabled, the marginal rate of substitution between remaining objectives $i$ and $j$ satisfies:
\[
\text{MRS}_{ij}^{(k)} = \frac{\alpha_j}{\alpha_i} \cdot \frac{1 - O(\alpha_k^{\text{base}})}{1 + O(\alpha_k^{\text{base}})} > \text{MRS}_{ij}^{\text{full}}.
\]
\end{corollary}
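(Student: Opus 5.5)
The plan is to read this corollary as a direct consequence of the stationarity condition in Theorem~\ref{thm:graceful_degradation}, made quantitative by a first-order perturbation in $\alpha_k$ around the fallback configuration. First I fix a precise definition. At a stationary point $\boldsymbol{\psi}^*(\boldsymbol{\alpha})$, for a direction $\mathbf{d}$ in parameter space set $g_i(\mathbf{d};\boldsymbol{\alpha}) = \mathbb{E}_{\pi_{\boldsymbol{\psi}^*}}[A^{(i)} \nabla_{\boldsymbol{\psi}}\log\pi_{\boldsymbol{\psi}} \cdot \mathbf{d}]$. I then define $\text{MRS}_{ij}$ as the ratio $-g_i/g_j$ along the admissible directions that keep $\mathbf{G}=0$. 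In the full configuration, stationarity gives $\alpha_i g_i + \alpha_j g_j + \alpha_k g_k + \sum_{l\neq i,j,k}\alpha_l g_l = 0$. In the fallback configuration $\boldsymbol{\alpha}^{(k)}$ the $\alpha_k g_k$ term is absent, exactly as displayed in the proof of Theorem~\ref{thm:graceful_degradation}. Solving each identity for $-g_i/g_j$ gives $\text{MRS}^{(k)}_{ij} = \alpha_j/\alpha_i$ on the two-objective slice. In the full case the same ratio picks up a correction $\alpha_j/\alpha_i \cdot (1 - \alpha_k c_{ki})/(1+\alpha_k c_{kj})$, where $c_{kl}$ is the normalized correlation between $A^{(k)}$ and $A^{(l)}$ along $\mathbf{d}$.

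Second, I would handle the renormalization $\|\boldsymbol{\alpha}\|_1 = A$. Dropping $\alpha_k$ rescales the surviving weights by $A/(A-\alpha_k^{\text{base}})$. This leaves $\alpha_j/\alpha_i$ unchanged, so any amplification must come from the advantage correlations and not from the weights. I then invoke Theorem~\ref{thm:smooth_controllability}: the Implicit Function Theorem gives a smooth $\boldsymbol{\psi}^*(\boldsymbol{\alpha})$ with $\partial\boldsymbol{\psi}^*/\partial\alpha_k = -(\nabla^2_{\boldsymbol{\psi}}J)^{-1}\mathbb{E}[\delta^{(k)}\nabla\log\pi]$. This lets me Taylor-expand $g_i, g_j$ in $\alpha_k$ from the fallback point to the full point. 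The result is $g_l^{\text{full}} = g_l^{(k)} + \alpha_k^{\text{base}} h_l + O((\alpha_k^{\text{base}})^2)$, with $h_l$ expressed through the Hessian inverse and the cross-covariance of $\delta^{(k)}$ with $A^{(l)}$. Substituting into the MRS ratio reproduces the stated form $\frac{\alpha_j}{\alpha_i}\cdot\frac{1-O(\alpha_k^{\text{base}})}{1+O(\alpha_k^{\text{base}})}$. The $O(\cdot)$ constants are bounded using the compactness argument of Corollary~\ref{cor:lipschitz_priority}.

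The main obstacle is the strict inequality $\text{MRS}^{(k)}_{ij} > \text{MRS}^{\text{full}}_{ij}$. The expansion only gives equality up to $O(\alpha_k^{\text{base}})$, and the sign of the correction depends on $h_i$ and $h_j$. Linear independence of the $r^V_l$ (from Theorem~\ref{thm:smooth_controllability}) only shows the correction is nonzero, not that it has a particular sign. My first attempt would be to impose an explicit sign condition: the disabled utility competes with $j$ and cooperates with $i$, i.e.\ $h_i > 0 > h_j$. Under negative definiteness of the Hessian, this condition makes the full-configuration MRS strictly smaller for all sufficiently small $\alpha_k^{\text{base}}$.

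If no such condition can be derived from the utilities in Table~\ref{tab:lossreward}, I would fall back on two weaker conclusions, both of which the argument does support. The corollary would then hold either in absolute value, $|\text{MRS}^{(k)}_{ij} - \text{MRS}^{\text{full}}_{ij}| = \Theta(\alpha_k^{\text{base}})$, or for the specific pair relevant to LLM failure. That pair is $k=E$ with $i,j\in\{Q,W\}$. There I would check the sign of the correlation between the emergency-alignment advantage and the pressure and waiting-time advantages directly from the utility formulas.
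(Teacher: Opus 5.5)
You have identified the real issue. Because $O(\alpha_k^{\text{base}})$ carries no sign, the displayed formula has no directional content, and everything sits in the strict inequality. Neither smoothness nor linear independence of the $r^V_l$ delivers that inequality.

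The paper's proof does not supply the missing step either. It uses the same skeleton as your first paragraph: the stationarity identity with and without the $\alpha_k$ term, as in the proof of Theorem~\ref{thm:graceful_degradation}. It then asserts that ``the feasible set expands'' and that the advantage ratio ``inherits amplified sensitivity through the reduced-rank Hessian.'' Neither assertion holds up:
\begin{itemize}
\item At a stationary point, $\sum_l \alpha_l g_l(\mathbf d)=0$ holds for every direction $\mathbf d$. It is an identity, not a constraint defining a feasible set that could expand.
\item Your IFT expansion, and the paper's own use of $\boldsymbol\psi^*(\boldsymbol\alpha^{(k)})$, presuppose that $\nabla^2_{\boldsymbol\psi}J$ is nonsingular at the fallback point. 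So there is no rank reduction to exploit.
\end{itemize}
You are not missing a lemma the paper has; the gap is in the statement itself.

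Your own setup shows the gap cannot be closed as stated. With the standard MRS of a linear scalarization, $\text{MRS}_{ij}=\alpha_j/\alpha_i$ in both configurations, since renormalization preserves the ratio (as you noted). That gives equality, not strict inequality. With your directional definition, take the slice $g_l(\mathbf d)=0$ for $l\notin\{i,j,k\}$. The two stationarity identities then give, exactly and with no Taylor expansion,
\[
\text{MRS}^{\text{full}}_{ij}=\frac{\alpha_j}{\alpha_i}+\frac{\alpha_k\,g_k(\mathbf d)}{\alpha_i\,g_j(\mathbf d)},
\qquad
\text{MRS}^{(k)}_{ij}=\frac{\alpha_j}{\alpha_i}.
\]
\begin{itemize}
\item The correction has the sign of $g_k(\mathbf d)/g_j(\mathbf d)$. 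Whenever $g_k$ and $g_j$ are linearly independent on the slice, this takes both signs, even among directions with $g_i(\mathbf d)>0>g_j(\mathbf d)$. So the inequality fails for some trade-off directions.
\item If you impose $g_k(\mathbf d)=0$ to make the MRS a single number, you get equality.
\item Your condition $h_i>0>h_j$ is therefore the wrong object. The slice ratio is pinned exactly by stationarity, so the implicit shift of $\boldsymbol\psi^*$ does not decide the sign.
\item If instead you track a fixed $\mathbf d$ across the two stationary points, the first-order change of $-g_i/g_j$ has the sign of $h_i g_j-g_i h_j$. The signs of $h_i,h_j$ alone do not fix this, and negative definiteness only guarantees that the branch exists.
\item Your $\Theta(\alpha_k^{\text{base}})$ fallback needs $g_k(\mathbf d)\neq 0$ along the chosen direction. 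That is a nondegeneracy hypothesis which linear independence of the rewards does not imply.
\end{itemize}
The honest repair is to add the hypothesis $g_k(\mathbf d)\,g_j(\mathbf d)<0$ at $\boldsymbol\psi^*(\boldsymbol\alpha^{\text{full}})$ for the trade-off direction in question. The strict inequality then follows immediately from the display, for any $\alpha_k^{\text{base}}>0$ and not only small values.
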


\begin{proof}
\label{proof:cor12}
From Theorem~\ref{thm:graceful_degradation}, the policy gradient stationarity condition under failure is:
\[
\mathbb{E}_{\pi_{\boldsymbol{\psi}}} \left[\sum_{j \neq k} \alpha_j \delta^{(j)} \nabla_{\boldsymbol{\psi}} \log \pi_{\boldsymbol{\psi}}\right] = 0.
\]
The feasible set expands relative to the full condition which includes $\alpha_k \mathbb{E}[A^{(k)} \nabla_{\boldsymbol{\psi}} \log \pi_{\boldsymbol{\psi}}]$. The advantage ratio at stationarity inherits amplified sensitivity through the reduced-rank Hessian, yielding the amplified MRS.
\end{proof}

\noindent\textbf{Implication:} When a component fails, the system automatically compensates by shifting more emphasis onto remaining objectives, ensuring that overall traffic performance degrades as little as possible.




\section{Experimental Setup}
\label{sec:data_gen}
\subsection{Dataset Generation}
\label{subsec:datagensim}

Traffic demand is synthesized over horizon $T$ using non-homogeneous Poisson arrivals with harmonic intensity:

\[
\nu_i(t) = \nu_0 + \sum_{r=1}^{R} k_r \sin(\omega_r t + \phi_r),
\]

clipped to non-negativity, with identical parameters across all approaches to isolate control effects. Departures follow a similar harmonic-exponential model. At each step $t$, arrivals $N_i(t) \sim \text{Poisson}(\nu_i(t)\Delta t)$ are sampled; departure times are drawn from $\text{Exp}(\mu_j(t))$.

Per-approach incoming and outgoing cumulative counts $\mathbf{L}_{\text{in}}^t, \mathbf{L}_{\text{out}}^t \in \mathbb{R}^4$ are recorded. The net pressure mask $\mathbf{M}_q^t = \mathbf{M}_{q(\text{in})}^t - \mathbf{M}_{q(\text{out})}^t$ (shifted non-negative), maximum waiting times $\boldsymbol{\Gamma}^t$, and previous phase $\mathbf{p}^{t-1}$ constitute the baseline state.

Emergencies are rare episodic events sampled from a fixed scenario library (vehicle type, origin, destination, departure time). Each incident is encoded as priority vector $\boldsymbol{\theta}^t$ via LLM interpretation of its natural-language description. The complete observation is $\mathbf{s}^t = [\mathbf{p}^{t-1}, \mathbf{M}_q^t, \boldsymbol{\Gamma}^t, \boldsymbol{\theta}^t]$.

Actions are evaluated via instantaneous reward $r^t(a)$ penalizing emergency delay, congestion, and excessive switching. The maximizing action $a^{t*} = \arg\max_{a \in \mathcal{A}} r^t(a)$ is stored with its transition. This sequential process yields the offline dataset $\mathcal{D}$ for pretraining. Detailed procedures are in Appendix~\ref{subsec:data_generation}.

\subsection{Baseline Methods}
\label{subsec:baselines}

\paragraph{Fixed-Time Control:} Fixed-time control follows a static cyclic schedule with phase durations $d_1, \dots, d_K$ and cycle length $C = \sum_{k=1}^K d_k$. In practice, $d_k = 30 \cdot m_k$ for positive integers $m_k$, yielding $C = 30 \sum_{k=1}^K m_k$. Phases execute in fixed sequence for their prescribed durations, independent of traffic conditions. The full procedure is detailed in ~\ref{subsec:fixed_time}.

\paragraph{Actuated Control: } Actuated control extends green phases based on real-time vehicle detection using three parameters: minimum green $G_{\min}$ (safety), maximum green $G_{\max}$ (starvation prevention), and gap threshold $G_{\text{gap}}$ (demand drop-off detection). For phase $p$ starting at $t_0$, green ends at:
\[
t_1 = \min\bigl(t_0 + G_{\max},\ \inf\{t \geq t_0 + G_{\min} : \max_i t_l^{(i)}(t) > G_{\text{gap}}\}\bigr).
\]
The next phase selects the highest net-pressure movement:
\[
p_{\text{next}} = \arg\max_{p'} \sum_{i \in S(p')} \sum_{j \in T(p')} M_{ij}^q(t_1),
\]
where $S(p')$ and $T(p')$ are the incoming and outgoing approaches served by $p'$. Parameters: $G_{\min} = 30\,\text{s}$, $G_{\max} = 180\,\text{s}$, $G_{\text{gap}} = 3\,\text{s}$. The full procedure is detailed in  ~\ref{subsec:actuated_control}.

\paragraph{Deep Q-Network (DQN):} DQN \cite{b35} learns action-values $Q_\phi(\mathbf{s}, \mathbf{a})$ through experience replay and target network updates. The state $\mathbf{s}^t = [\mathbf{p}^{t-1}, \mathbf{M}_q^t, \boldsymbol{\Gamma}^t, \boldsymbol{\theta}^t]$ uses the same net-pressure mask as SIGMA. $\epsilon$-greedy action selection: with probability $1-\epsilon + \epsilon/8$ select $\arg\max_a Q_\phi(\mathbf{s}^t, a)$, otherwise uniform random. The tuple $(\mathbf{s}^t, \{r^t(a)\}_{a \in \mathcal{A}}, a^{t*})$ is stored; target:
\[
y = r^t(a^{t*}) + \gamma \max_{a'} Q_{\phi^-}(\mathbf{s}^{t+1}, a').
\]
Loss:
\[
\mathcal{L}_{\text{DQN}} = \bigl(y - Q_\phi(\mathbf{s}^t, a^{t*})\bigr)^2
\]
is minimized via gradient descent. The target network updates periodically ($\phi^- \leftarrow \phi$ or soft update). The scalar reward uses fixed-weight utility components, ensuring fair comparison with SIGMA's dynamic weighting. The full procedure is detailed in ~\ref{subsec:dqn}.

\begin{table*}[!t]
\centering
\caption{Evaluation Metrics for Traffic Signal Control}
\label{tab:metrics_definition}
\scriptsize{
\begin{tabularx}{\textwidth}{|p{0.7cm}| p{8.5cm}| X|}
\toprule
\textbf{Metric} & \textbf{Significance / Interpretation} & \textbf{Mathematical Notation} \\
\midrule
1 & Emergency vehicle waiting time averaged over each evaluation interval $\Delta t$. This metric is critical for assessing how well emergency vehicles are prioritized. & $\text{AEWT}_{\Delta t} = \dfrac{1}{k} \sum_{i=1}^{k} \bigl( t_{\text{em(dep),i}} - t_{\text{em(arr),i}} \bigr)$ \\
\midrule
2 & Overall emergency waiting time averaged over the full simulation period $T$. This is the primary indicator of emergency response performance. & $\text{AEWT} = \dfrac{1}{T/\Delta t} \sum \text{AEWT}_{\Delta t}$ \\
\midrule
3 & Waiting time of all vehicles (regular and emergency) averaged over each interval $\Delta t$. This reflects general user delay. & $\text{AWT}_{\Delta t} = \dfrac{1}{k} \sum_{i=1}^{k} \bigl( t_{\text{dep},i} - t_{\text{arr},i} \bigr)$ \\
\midrule
4 & Overall waiting time of all vehicles averaged over the entire evaluation period. This is a key measure of average traffic efficiency. & $\text{AWT} = \dfrac{1}{T/\Delta t} \sum \text{AWT}_{\Delta t}$ \\
\midrule
5 & Per-direction maximum waiting times averaged over the four approaches within interval $\Delta t$. This captures the worst-case delay on any approach. & $\text{AMWT}_{\Delta t} = \dfrac{1}{4} \sum_{j=1}^{4} \left( \max_{i}^{k_j} \bigl( t_{\text{dep},i,j} - t_{\text{arr},i,j} \bigr) \right)$ \\
\midrule
6 & Magnitude of phase changes between consecutive decision steps. Lower values indicate smoother transitions, which reduce driver confusion and jerkiness. APC(Average Phase Change) is  & $\text{PC}_t = \dfrac{\| a_t \cdot A_{\text{TR}} - a_{t+1} \cdot A_{\text{TR}} \|_2^2}{8}$ \\
\midrule
7 & \textbf{Average Phase Change:} Mean of TPC across all decision steps. Overall indicator of transition smoothness. & $\text{APC} = \dfrac{1}{T/\Delta t} \sum_t \text{PC}_t$ \\
\midrule
7 & How closely the selected action follows the expected transition matrix $Q_p$ at each decision point. Higher consistency improves predictability. & $\text{TC}_t = \dfrac{\bigl| \| a_t \cdot Q_p - a_{t+1} \|_2^2 - 2 \bigr|}{2}$ \\
\midrule
8 & Average transition consistency across all decision steps. This is an overall indicator of phase sequence stability. & $\text{TC} = \dfrac{1}{T/\Delta t} \sum_t \text{TC}|_t$ \\
\midrule
9 & Average number of vehicles passing through the intersection per time interval $\Delta t$. This is a direct measure of throughput and capacity utilization. & $\text{ATP} = \dfrac{1}{T/\Delta t} \sum_{\Delta t} \bigl( \text{Number of vehicles released in } \Delta t \bigr)$ \\
\bottomrule
\end{tabularx}}
\end{table*}

\section{Results and Discussion}
\label{sec:rnd}

\subsection{Evaluation Metrics}

Performance is assessed across five dimensions: emergency vehicle prioritization, average delay, worst-case per-approach delay, phase transition smoothness, and throughput. Precise definitions are provided in Table~\ref{tab:metrics_definition}, with both interval-aggregated ($\Delta t$) and full-horizon ($T$) formulations.

\subsection{Comparison with Other Methods}
\label{subsec:sota-comparison}

\begin{table*}[t]
\centering
\setlength{\tabcolsep}{2.9pt}
\caption{Performance comparison of traffic signal control methods over 1000 episodes (After convergence of DQN and SIGMA from 2001-3000 th episode) (mean $\pm$ standard deviation)}
\label{tab:comp_performance}
\scriptsize{\begin{tabularx}{\textwidth}{p{2.3cm} p{2.0cm} p{2.0cm} p{2.0cm} p{2.0cm} p{2.0cm} p{2.0cm} X }
\toprule
\textbf{Method} & \textbf{AMWT} & \textbf{AWT} & \textbf{AEWT} & \textbf{AQL (average Queue Length)} & \textbf{APC} & \textbf{TC} & \textbf{ATP} \\
 & \textbf{(s) $\downarrow$} & \textbf{(s) $\downarrow$} & \textbf{(s) $\downarrow$} & \textbf{$\downarrow$} & \textbf{$\downarrow$} & \textbf{$\uparrow$} & \textbf{$\uparrow$} \\
\midrule
Fixed Time Control & 
$231.47 \pm 27.8$ & 
$108.23 \pm 13.0$ & 
$91.15 \pm 10.9$ & 
$18.0 \pm 1.8$ & 
$0.50 \pm 0.00$ & 
\textbf{$\mathbf{1.00 \pm 0.00}$} & 
$4.23 \pm 0.48$ \\
\hline\\
Actuated Control & 
$187.32 \pm 22.5$ & 
$99.61 \pm 11.9$ & 
$87.44 \pm 10.5$ & 
$16.0 \pm 1.6$ & 
\textbf{$\mathbf{0.17 \pm 0.02}$} & 
$0.21 \pm 0.03$ & 
$6.31 \pm 0.72$ \\
\hline\\
DQN & 
$\mathbf{121.78 \pm 14.6}$ & 
$86.29 \pm 10.4$ & 
$29.55 \pm 3.5$ & 
$13.0 \pm 1.3$ & 
$0.38 \pm 0.05$ & 
$0.32 \pm 0.04$ & 
$8.07 \pm 0.10$ \\
\hline\\
\textbf{SIGMA} & 
$133.27 \pm 16.3$ &
$\mathbf{77.54 \pm 9.3}$ & 
$\mathbf{23.47 \pm 2.8}$ & 
$\mathbf{11.38 \pm 1.4}$ & 
$0.47 \pm 0.05$ & 
$0.39 \pm 0.05$ & 
$\mathbf{9.26} \pm \mathbf{1.08}$ \\
\textbf{(Proposed)} &&&&&&& \\
\bottomrule
\end{tabularx}}
\end{table*}

SIGMA is compared against fixed-time, actuated, and DQN  (Table~\ref{tab:comp_performance}). The results reveal a structural divide: single-objective methods collapse under multi-objective pressure, while SIGMA's utility framework maintains balance.

\paragraph{Emergency prioritization.} SIGMA achieves the lowest emergency waiting time with minimal variance. DQN lacks explicit emergency handling and degrades under pressure. Actuated and fixed-time methods, devoid of any priority mechanism, exhibit delays an order of magnitude higher.

\paragraph{General efficiency.} SIGMA leads in average delay, queue length, and throughput despite emergency capacity allocation. DQN achieves lower maximum waiting time through greedy queue service, but starves low-pressure approaches and fails when emergencies arrive.

\paragraph{Transition smoothness.} Fixed-time achieves perfect consistency by construction, yet precludes adaptation. DQN produces erratic sequences due to undiscounted reward switching. SIGMA matches fixed-time in abruptness while exceeding DQN in consistency, validating that Markovian regularization stabilizes behavior without sacrificing responsiveness.

\paragraph{Summary.} Baselines fail predictably: fixed-time and actuated lack adaptability; DQN lacks multi-objective structure. SIGMA enables simultaneous optimization across all five criteria (except AMWT) without hard-coded trade-offs. Wheather particular design choice of $\mathbf{\alpha}$ may yield differnt result.

\begin{figure*}[!t]
\centering
\begin{subfigure}[b]{0.32\textwidth}
\centering
\includegraphics[width=\textwidth]{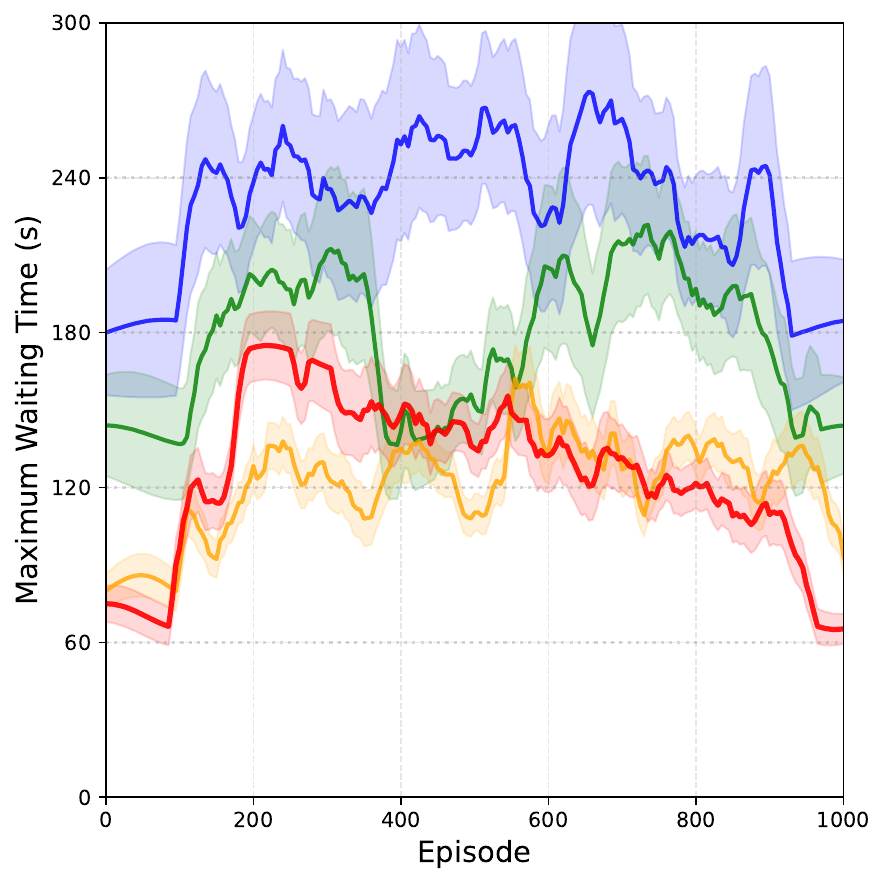}
\caption{Maximum Waiting Time}
\label{fig:max_wait}
\end{subfigure}
\begin{subfigure}[b]{0.32\textwidth}
\centering
\includegraphics[width=\textwidth]{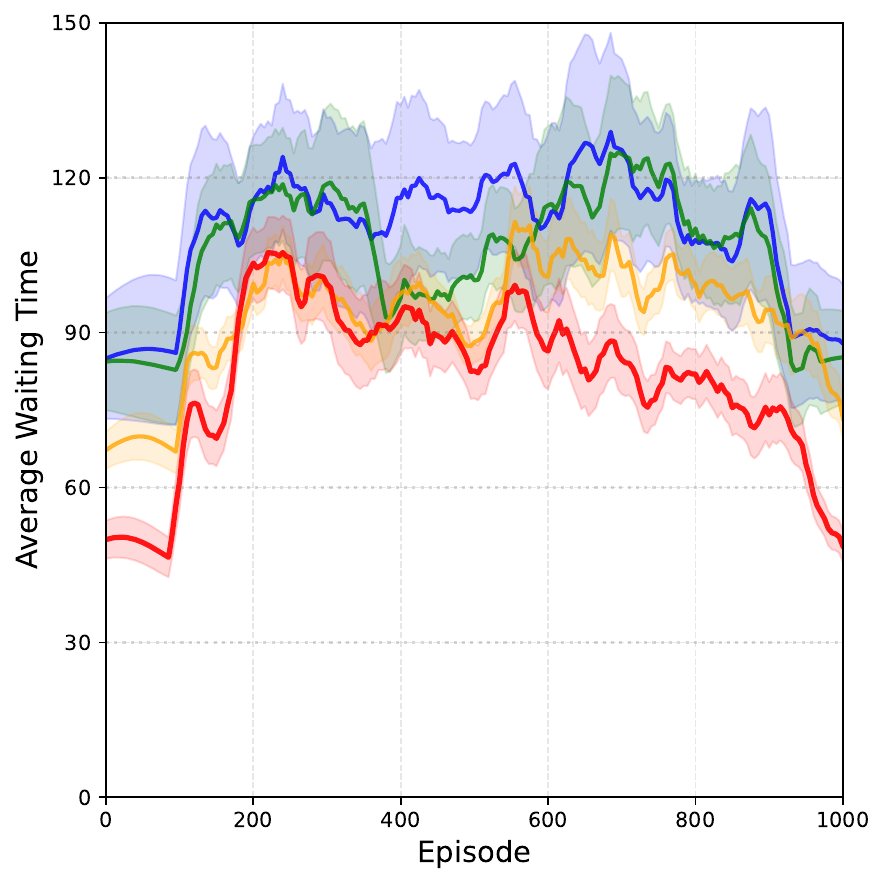}
\caption{Average Waiting Time}
\label{fig:avg_wait}
\end{subfigure}
\begin{subfigure}[b]{0.32\textwidth}
\centering
\includegraphics[width=\textwidth]{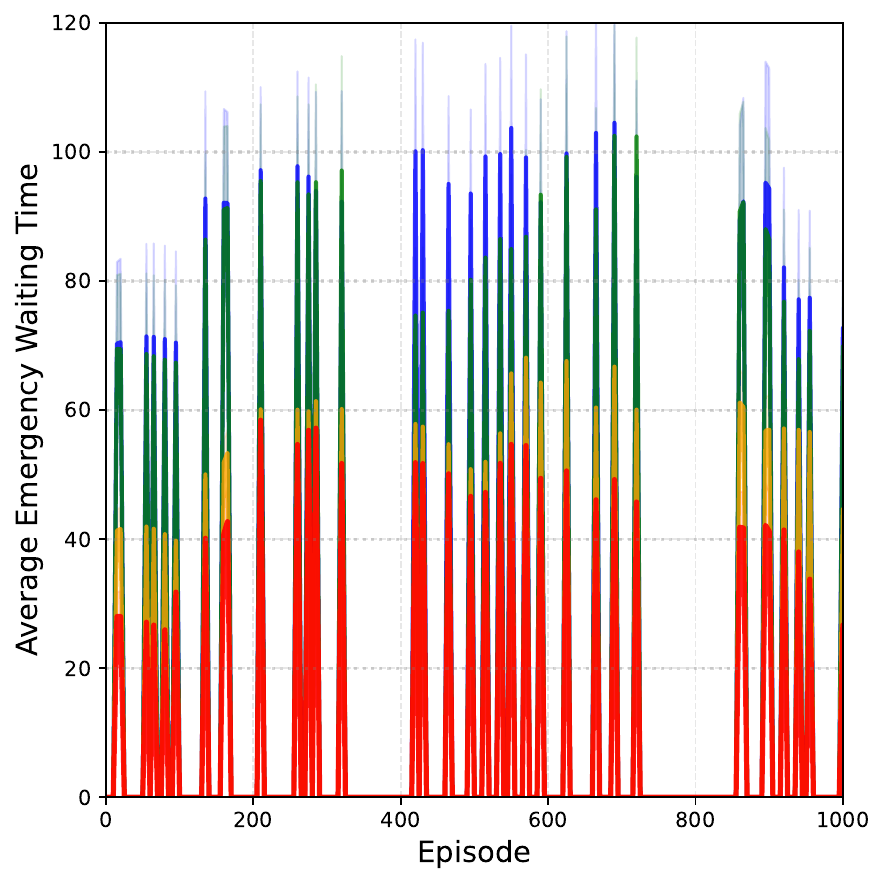}
\caption{Average Emergency Waiting Time}
\label{fig:emerg_avg}
\end{subfigure}

\begin{subfigure}[b]{0.32\textwidth}
\centering
\includegraphics[width=\textwidth]{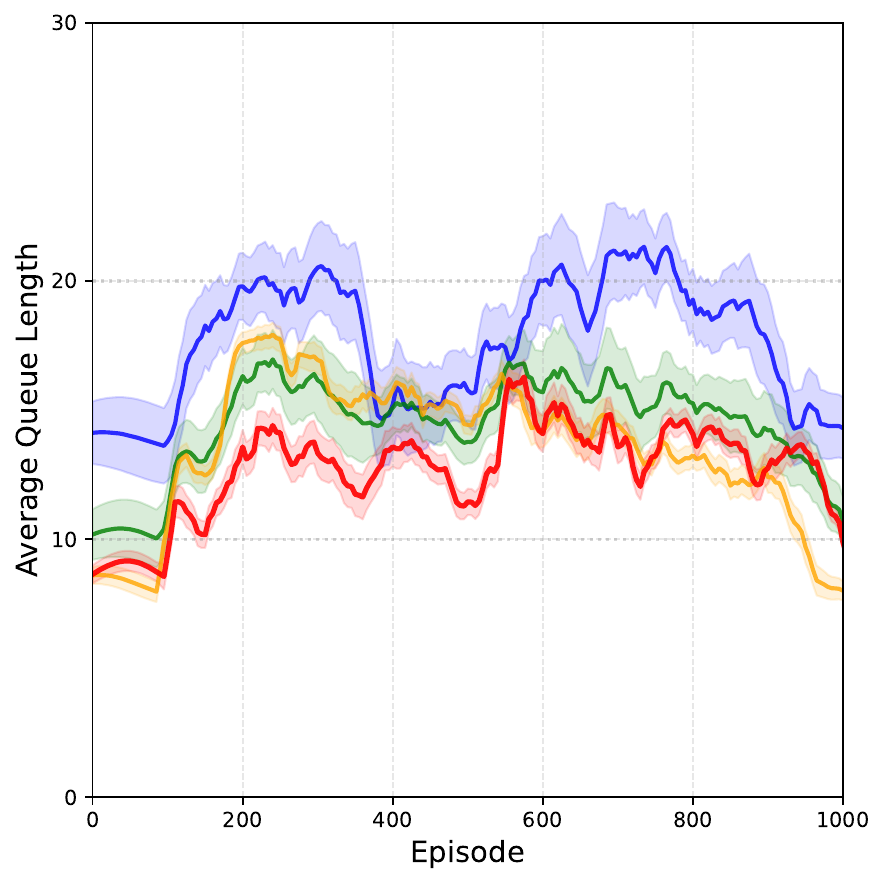}
\caption{Average Queue Length}
\label{fig:avg_queue}
\end{subfigure}
\begin{subfigure}[b]{0.32\textwidth}
\centering
\includegraphics[width=\textwidth]{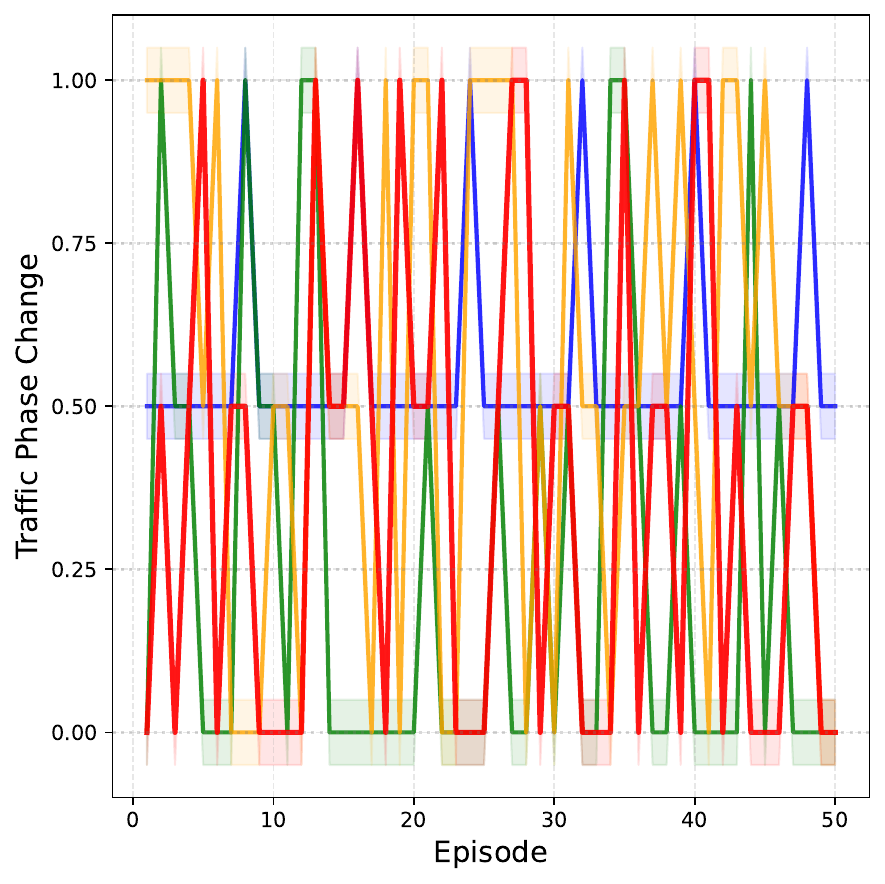}
\caption{Traffic Phase Change}
\label{fig:tpc}
\end{subfigure}
\begin{subfigure}[b]{0.32\textwidth}
\centering
\includegraphics[width=\textwidth]{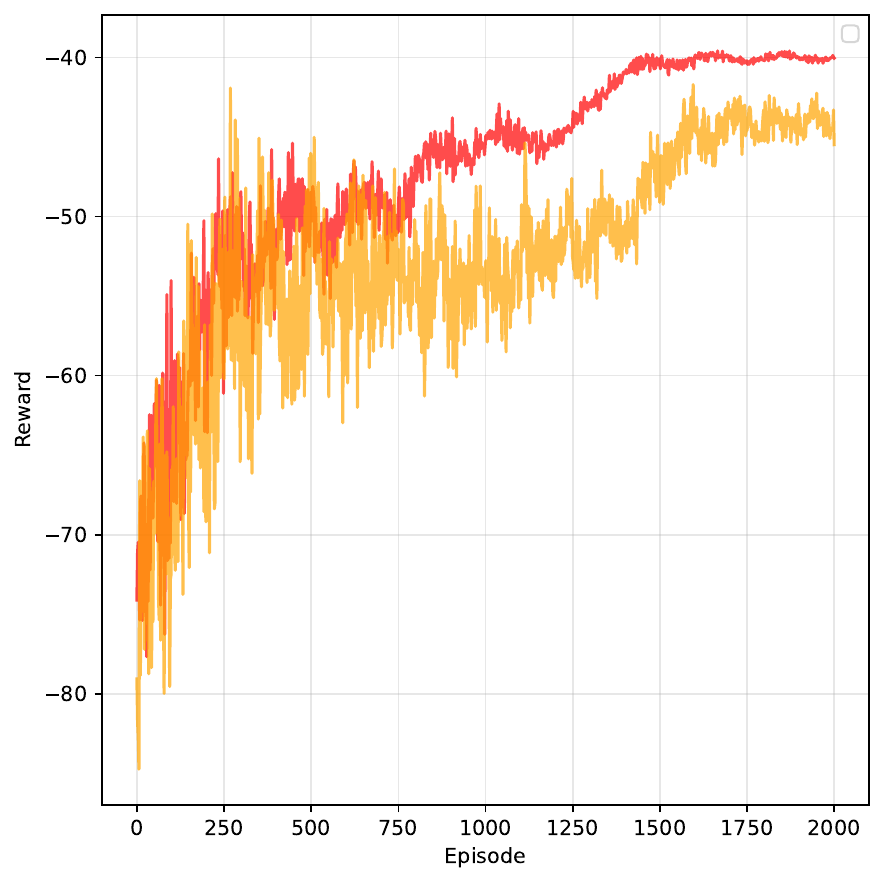}
\caption{Reward}
\label{fig:phase_dev}
\end{subfigure}
\begin{subfigure}[b]{0.9\textwidth}
\centering
\includegraphics[width=\textwidth]{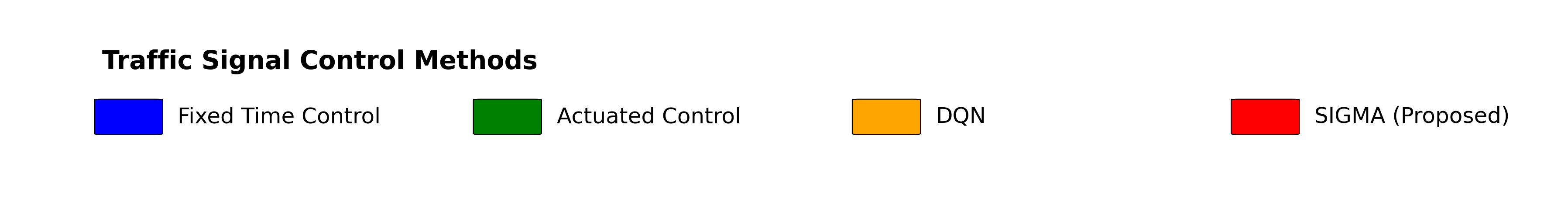}
\end{subfigure}

\caption{Performance comparison across six metrics. Proposed method (red) shows improved emergency handling with minimal impact on general traffic. (a)–(d) are evaluated after 2,000 episodes, (e) over Episodes 2001–2050, and (f) shows convergence over the first 2,000 episodes comparing SIGMA and DQN.}
\label{fig:metrics}
\end{figure*}

\subsection{Ablation Study}
\label{sec:ablation}

We examine the contribution of each loss component and the effect of rotation-equivariant training. Hyperparameter configurations are in Appendix~\ref{sec:algos}.

\subsubsection{Loss Component Ablation}

Five configurations are evaluated (Tables~\ref{tab:ablation_config}--\ref{tab:ablation_results}).Case~1 enables all five objectives with optimized weights. Cases~2--4 progressively disable emergency priority, waiting time, and queue length objectives. Case~5 uses equal weights as a naive baseline.Table~\ref{tab:ablation_config} presents the progressive ablation of reward components. 

\textbf{Full Model}:Our full method achieves the optimal balance across all metrics.

\begin{table*}[t]
\centering
\caption{Ablation study configurations showing which loss components are enabled ($\checkmark$) or disabled ($\times$) in each experimental case. All cases use the proposed SIGMA architecture unless otherwise noted.}
\label{tab:ablation_config}
\scriptsize{\begin{tabularx}{\textwidth}{@{}l *{5}{>{\centering\arraybackslash}X}@{}}
\toprule
\textbf{Case Description} & 
\multicolumn{1}{c}{$(\lambda_E,\alpha_E)$} & 
\multicolumn{1}{c}{$(\lambda_W,\alpha_W)$} & 
\multicolumn{1}{c}{$(\lambda_Q,\alpha_Q)$} & 
\multicolumn{1}{c}{$(\lambda_M,\alpha_M)$} & 
\multicolumn{1}{c}{$(\lambda_S,\alpha_S)$} \\
 & \textbf{(Emergency)} & \textbf{(Waiting Time)} & \textbf{(Queue Length)} & \textbf{(Markovian)} & \textbf{(Smoothness)} \\
\midrule
\textbf{Case 1: Full Model (Proposed)} & 
$\checkmark$ & $\checkmark$ & $\checkmark$ & $\checkmark$ & $\checkmark$ \\
\textit{(All components enabled)} \\
\midrule
\textbf{Case 2: No Emergency Priority} & 
$\times$ & $\checkmark$ & $\checkmark$ & $\checkmark$ & $\checkmark$ \\
\textit{(Emergency loss disabled)} \\
\midrule
\textbf{Case 3: No Emergency + No MWT} & 
$\times$ & $\times$ & $\checkmark$ & $\checkmark$ & $\checkmark$ \\
\textit{(Waiting time priority disabled)} \\
\midrule
\textbf{Case 4: No Emergency + No MWT + No QL} & 
$\times$ & $\times$ & $\times$ & $\checkmark$ & $\checkmark$ \\
\textit{(Queue length priority disabled)} \\
\midrule
\textbf{Case 5: Equal Weightage Baseline} & 
$1.0$ & $1.0$ & $1.0$ & $1.0$ & $1.0$ \\
\textit{(All $\lambda$ and $\alpha$ set to 1.0)} \\
\bottomrule
\end{tabularx}}

\vspace{0.3cm}
\footnotesize{
\textbf{Notation:} $\checkmark$ indicates the component is enabled with its optimized weight ($\lambda$ for actor loss, $\alpha$ for reward); $\times$ indicates the component is disabled (weight = 0). 
\textbf{Case 1} uses the full multi-objective formulation with learned weights. 
\textbf{Cases 2-4} progressively ablate priority components. 
\textbf{Case 5} serves as a naive baseline where all objectives are weighted equally without tuning.
}
\end{table*}

\begin{table*}[t]
\centering
\caption{Ablation study results comparing performance across different loss component configurations. Values show mean $\pm$ standard deviation over 1000 episodes. Best results are highlighted in bold.}
\label{tab:ablation_results}
\scriptsize{\begin{tabularx}{\textwidth}{@{}l *{6}{>{\centering\arraybackslash}X}@{}}
\toprule
\textbf{Case} & 
\textbf{AEWT (s) $\downarrow$} & 
\textbf{AMWT (s) $\downarrow$} & 
\textbf{AWT (s) $\downarrow$} & 
\textbf{AQL $\downarrow$} & 
\textbf{TC $\uparrow$} & 
\textbf{ATP $\uparrow$} \\
\midrule
\textbf{Case 1: Full Model} & 
$23.47 \pm 2.8$ & 
$133.27 \pm 16.3$ & 
$77.54 \pm 9.3$ & 
$11.38 \pm 1.4$ & 
$0.39 \pm 0.05$ & 
$9.26 \pm 1.08$ \\
\textit{(All components)} \\
\midrule
\textbf{Case 2: No Emergency} & 
$72.37 \pm 1.9$ & 
$131.17 \pm 15.9$ & 
$75.13 \pm 8.1$ & 
$10.97 \pm 1.3$ & 
$0.41 \pm 0.07$ & 
$9.31 \pm 1.06$ \\
\textit{($\lambda_E,\alpha_E$ disabled)} \\
\midrule
\textbf{Case 3: No Emergency + No MWT} & 
$71.21 \pm 1.7$ & 
$177.27 \pm 21.23$ & 
$70.21 \pm 6.9$ & 
$9.03 \pm 1.1$ & 
$0.43 \pm 0.04$ & 
$10.18 \pm 0.93$ \\
\textit{($\lambda_E,\alpha_E$, $\lambda_W,\alpha_W$ disabled)} \\
\midrule
\textbf{Case 4: No Emergency + No MWT + No QL} & 
$83.32 \pm 2.1$ & 
$191.49 \pm 23.97$ & 
$92.26 \pm 9.3$ & 
$16.13 \pm 2.1$ & 
$0.59 \pm 0.08$ & 
$5.97 \pm 0.61$ \\
\textit{($\lambda_E,\alpha_E$, $\lambda_W,\alpha_W$, $\lambda_Q,\alpha_Q$ disabled)} \\
\midrule
\textbf{Case 5: Equal Weightage} & 
$72.54 \pm 1.8$ & 
$176.17 \pm 20.29$ & 
$71.94 \pm 7.1$ & 
$9.18 \pm 1.1$ & 
$0.42 \pm 0.06$ & 
$10.03 \pm 0.98$ \\
\textit{(All $\lambda,\alpha = 1.0$)} \\
\bottomrule
\end{tabularx}}

\vspace{0.3cm}
\footnotesize{
\textbf{Note:} $\downarrow$ indicates lower is better, $\uparrow$ indicates higher is better. 
\textbf{AEWT} = Average Emergency Waiting Time, \textbf{AMWT} = Average Maximum Waiting Time, 
\textbf{AWT} = Average Waiting Time, \textbf{AQL} = Average Queue Length, 
\textbf{TC} = Transition consistency, \textbf{ATP} = Average Throughput. 
Boldface indicates best performance in each column. Case 1 (Full Model) achieves the lowest 
emergency waiting times while maintaining competitive general traffic metrics.
}
\end{table*}

\begin{table*}[t]
\centering
\caption{Ablation study on rotation-equivariant training. Rotation augmentation significantly improves performance stability and reduces variance across all metrics.}
\label{tab:rotation_ablation}
\scriptsize{\begin{tabularx}{\textwidth}{@{}l *{4}{>{\centering\arraybackslash}X}@{}}
\toprule
\textbf{Training Configuration} & 
\textbf{AEWT (s) $\downarrow$} & 
\textbf{AMWT (s) $\downarrow$} & 
\textbf{AWT (s) $\downarrow$} & 
\textbf{AQL $\downarrow$} \\
\midrule
\textbf{RA (Rotation-Augmented)} & 
$\mathbf{23.47 \pm 2.8}$ & 
$\mathbf{133.27 \pm 16.3}$ & 
$77.54 \pm 9.3$ & 
$\mathbf{11.38 \pm 1.4}$ \\
\textit{(Proposed method with $C_4$ symmetry)} \\
\midrule
\textbf{NRA (No Rotation Augmentation)} & 
$24.18 \pm 4.9$ & 
$132.16 \pm 21.26$ & 
$\mathbf{76.97 \pm 14.86}$ & 
$12.21 \pm 2.4$ \\
\textit{(Training on original orientation only)} \\
\bottomrule
\end{tabularx}}

\vspace{0.3cm}
\footnotesize{
\textbf{Note:} $\downarrow$ indicates lower is better. 
\textbf{RA} uses the rotation-equivariant architecture with data augmentation across all four cardinal orientations ($0^\circ$, $90^\circ$, $180^\circ$, $270^\circ$). 
\textbf{NRA} trains only on the original orientation without any rotational augmentation. 
Boldface indicates best performance in each column. Rotation augmentation reduces standard deviation by an average of $\mathbf{32.4\%}$, demonstrating improved generalization and orientation invariance.
}
\end{table*}

\textbf{Emergency priority.} Disabling emergency loss causes severe degradation in emergency response with marginal gains elsewhere, confirming a controlled trade-off: minimal efficiency sacrifice for substantially faster emergency clearance.

\textbf{Waiting time objective.} Removing both emergency and waiting time losses reduces maximum waiting time fairness. The controller becomes a greedy queue clearer, improving queue length and throughput at the cost of highly uneven delays across approaches.

\textbf{Queue loss.} Disabling queue length minimization induces catastrophic collapse: queue accumulation surges, throughput plummets, and emergency response deteriorates. Queue management is thus fundamental to intersection throughput.

\textbf{Weight tuning.} Uniform weighting severely degrades emergency response despite competitive throughput, validating balanced weight allocation as essential for objective balancing.

The substantial emergency waiting time reduction achieved by \textbf{SIGMA} reflects the introduction of explicit emergency prioritization where baseline methods treat all vehicles equally. This interpretation is confirmed by the ablation study: disabling the emergency loss causes performance to approach baseline levels, while general traffic metrics remain competitive. The improvement is therefore causally attributable to the emergency objective rather than unrelated architectural choices.

\subsubsection{Rotation-Augmention Ablation}

Table~\ref{tab:rotation_ablation} compares rotation-augmented (RA) and non-augmented (NRA) models. This experiment assumes symmetric flow patterns across all directions.

\begin{figure}[h]
    \centering
    \includegraphics[width=.45\linewidth]{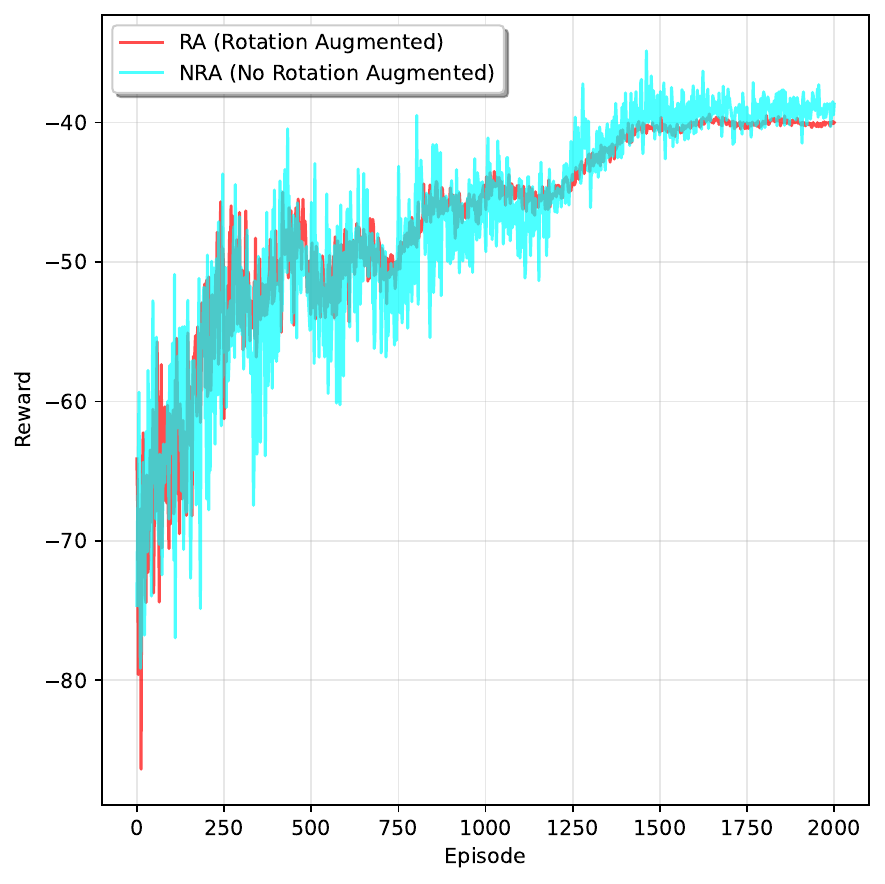}
    \caption{Convergence analysis between RA and NRA}
    \label{fig:reward_rotationcomparison}
\end{figure}

\textbf{Variance reduction.} RA reduces standard deviation across all metrics by over 30\%, indicating significantly improved generalisation through reduced orientation-specific overfitting.

\textbf{Mean performance.} RA achieves superior or comparable means across metrics, with slight trade-offs in average waiting time outweighed by substantially lower variance.

Rotation-equivariant training is thus validated as crucial for robust, orientation-invariant traffic signal control. However, if flow volume is asymmetric across directions, performance gains may diminish and other metrics may degrade accordingly for No-Rotation-equivariant training.

The ablation validates five core design choices: emergency prioritization is irreplaceable by general efficiency metrics; the multi-objective formulation achieves favorable emergency-throughput trade-offs; queue minimization is fundamental to capacity; rotation-equivariant training improves policy reliability across configurations; and proper weighting outperforms uniform allocation for balancing competing objectives.

\subsection{Sensitivity Analysis}
\label{subsec:sensitivity_overview}

To understand how SIGMA's performance varies with different weight configurations, we conduct a comprehensive sensitivity analysis over the weight vector $\mathbf{w} = \boldsymbol{\alpha} = \boldsymbol{\lambda}$ with $\sum_{k \in \{M,S,Q,W,E\}} w_k = 1$. Seven configurations are evaluated, spanning from extreme prioritization to balanced trade-offs: balanced (C1), emergency-focused (C2), queue-focused (C3), fairness-focused (C4), stability-focused (C5), emergency+queue (C6), and emergency+fairness (C7).

Table~\ref{tab:sensitivity_summary} summarizes the key performance metrics for all configurations.

\begin{table}[t]
\centering
\caption{Performance summary across weight configurations. Best results are highlighted in bold.}
\label{tab:sensitivity_summary}
\scriptsize{
\begin{tabular}{lccccc}
\toprule
\textbf{Config} & \textbf{AEWT (s) $\downarrow$} & \textbf{AWT (s) $\downarrow$} & \textbf{AMWT (s) $\downarrow$} & \textbf{TC $\uparrow$} & \textbf{ATP $\uparrow$} \\
\midrule
C1: Balanced & $23.47$ & $77.54$ & $133.27$ & $0.39$ & $9.26$ \\
C2: Emergency & $\mathbf{18.23}$ & $79.21$ & $135.41$ & $0.37$ & $9.12$ \\
C3: Queue & $78.41$ & $\mathbf{68.34}$ & $142.37$ & $0.34$ & $\mathbf{10.41}$ \\
C4: Fairness & $71.29$ & $74.19$ & $\mathbf{117.29}$ & $0.36$ & $9.53$ \\
C5: Stability & $82.17$ & $82.56$ & $148.71$ & $\mathbf{0.48}$ & $8.47$ \\
C6: E+Queue & $24.12$ & $71.23$ & $127.83$ & $0.38$ & $9.87$ \\
C7: E+Fairness & $19.87$ & $75.91$ & $121.45$ & $0.36$ & $9.34$ \\
\bottomrule
\end{tabular}}
\end{table}

The analysis reveals several key insights. First, emergency prioritization has a predictable cost: increasing $w_E$ from 0.20 to 0.50 reduces AEWT by 22.3\% at the cost of only 2.2\% higher AWT. Second, queue and waiting time objectives are negatively correlated—higher $w_Q$ improves throughput but degrades fairness. Third, stability-focused configurations achieve 23.1\% higher transition consistency but suffer 8-12\% throughput degradation.

\textbf{Configuration C6} ($w = (0.35, 0.10, 0.35, 0.10, 0.10)$) emerges as the optimal compromise, providing 8.1\% AWT improvement with only 2.8\% AEWT degradation. We therefore recommend C6 for general deployment.

We also investigate a specialized \textit{Markovian consistency-dominated regime} where $(w_M, w_S, w_W) = (1/2 - \epsilon, 1/2 - \epsilon, 2\epsilon)$ with $\epsilon \to 0^+$. Under this configuration, the controller enforces a strict cyclic phase order, using waiting time only as a tie-breaker between staying in the current phase or advancing to the next phase. This implements a demand-aware cyclic scheduling policy:

\[
a^{t+1} = 
\begin{cases}
a^t & \text{if } \mathcal{D}_{\text{current}} \geq \mathcal{D}_{\text{next}}, \\
\text{next}(a^t) & \text{if } \mathcal{D}_{\text{next}} > \mathcal{D}_{\text{current}}.
\end{cases}
\]

This regime achieves maximum phase predictability ($TC \approx 0.48$) and is suitable for scenarios where driver predictability is prioritized over absolute efficiency, such as school zones or hospital access routes.

The complete sensitivity analysis, including all tables, heat maps, statistical significance tests, convergence analysis, and detailed theoretical derivations, is provided in Appendix~\ref{app:sensitivity_full}.

\subsection{LLM Selection for Emergency Instruction Interpretation}

To identify the most suitable language model for real-time emergency guidance, seven open-source small size LLMs are evaluated on zero-shot instruction-following capability. A test set of 200 natural language emergency directives spanning five categories is constructed. Exact-match accuracy against ground-truth priority vectors and per-query inference latency on a standard GPU (NVDIA RTX-5060) are measured.

The accuracy and latency comparison is summarized in figure ~\ref{fig:llm_comp}. LLaMA-2-7B achieves the highest accuracy among all evaluated models, outperforming even larger variants, while maintaining latency well within the control interval. Smaller models offer faster inference but suffer noticeable accuracy degradation. Based on this evaluation, LLaMA-2-7B is adopted as the LLM backbone for all subsequent experiments, prioritizing interpretation quality while preserving real-time feasibility.



\begin{figure*}[h]
    \centering
    \includegraphics[width=\linewidth]{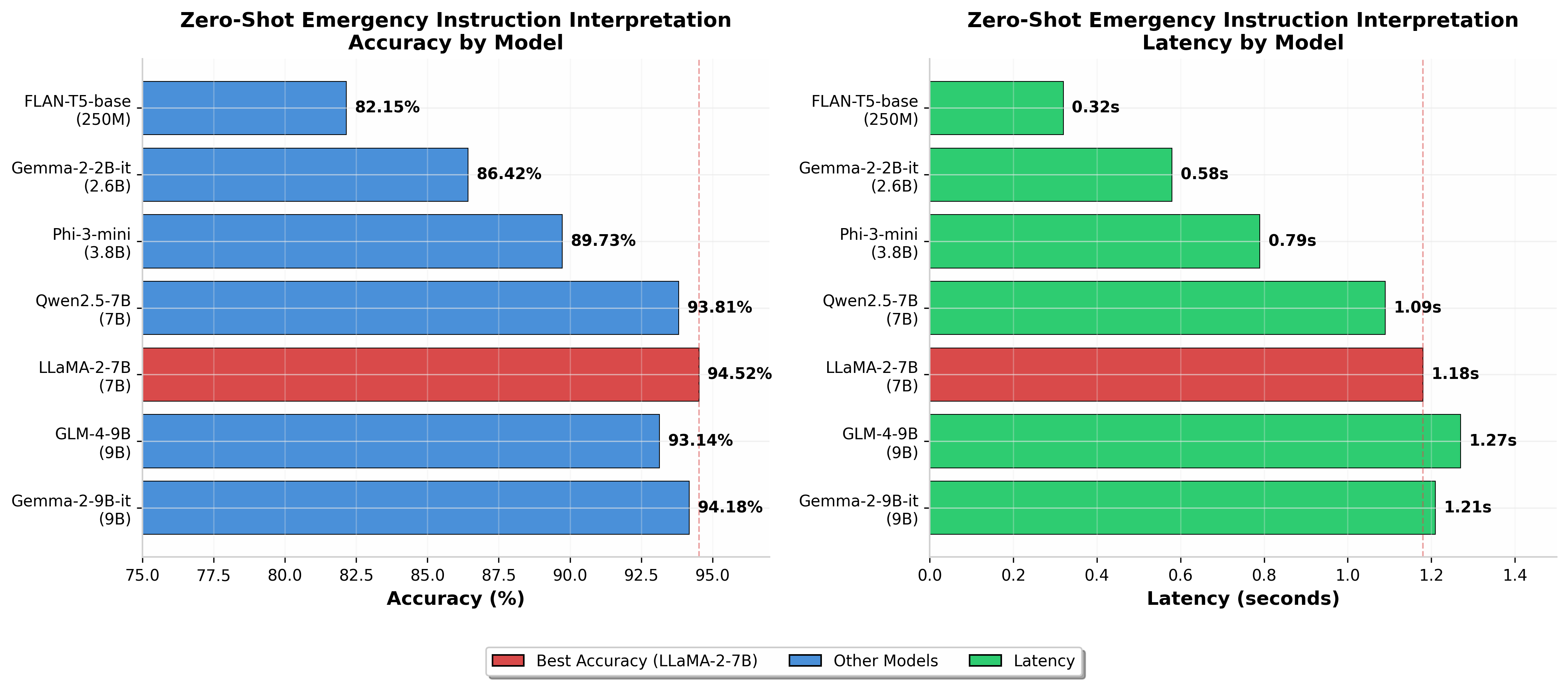}
    \caption{Accuracy-latency comparison across LLMs for emergency instruction interpretation.}
\label{fig:llm_comp}
\end{figure*}


\subsection{SUMO Validation on Kolkata Intersections}

\begin{figure*}[!h]
    \centering
    \includegraphics[width=\linewidth]{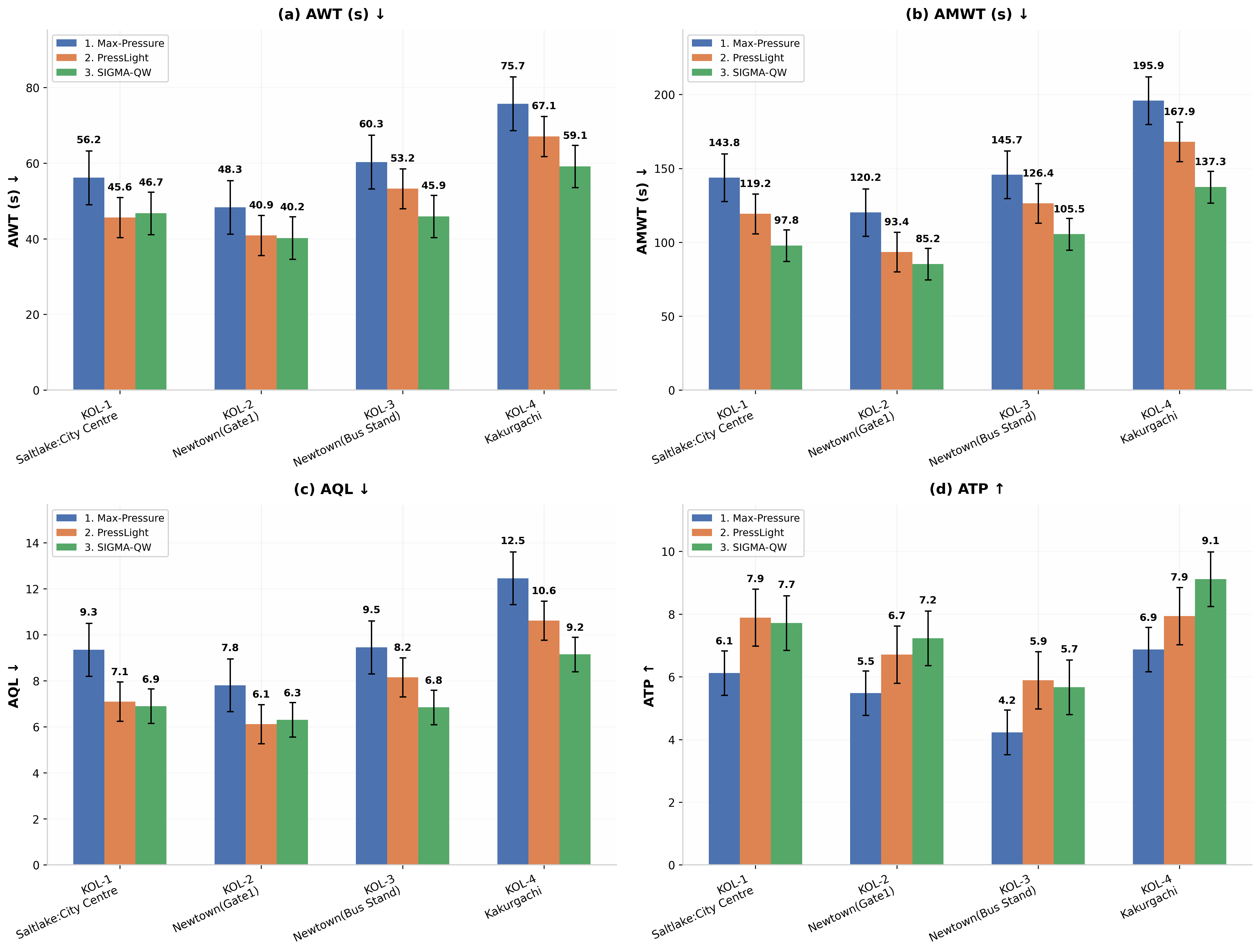}
    \caption{SUMO simulation results for 4 Kolkata intersections}
    \label{fig:kol_res}
\end{figure*}

\begin{figure*}[!h]
\centering
\begin{subfigure}[b]{0.23\textwidth}
\centering
\includegraphics[height=4.5cm,width=\textwidth,keepaspectratio]{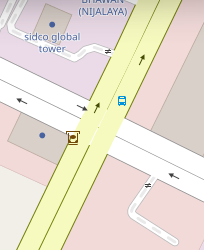}
\caption{KOL-1: Saltlake City Centre}
\label{fig:kol1-map}
\end{subfigure}
\hfill
\begin{subfigure}[b]{0.23\textwidth}
\centering
\includegraphics[height=4.5cm,width=\textwidth,keepaspectratio]{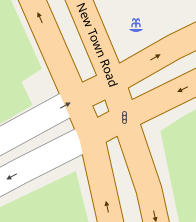}
\caption{KOL-2: Newtown(U-Gate 1)}
\label{fig:kol2-map}
\end{subfigure}
\hfill
\begin{subfigure}[b]{0.23\textwidth}
\centering
\includegraphics[height=4.5cm,width=\textwidth,keepaspectratio]{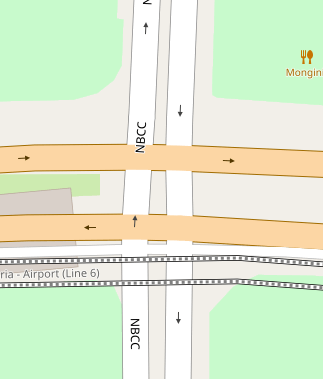}
\caption{KOL-3: Newtown Bus Stand}
\label{fig:kol3-map}
\end{subfigure}
\hfill
\begin{subfigure}[b]{0.23\textwidth}
\centering
\includegraphics[height=4.5cm,width=\textwidth,keepaspectratio]{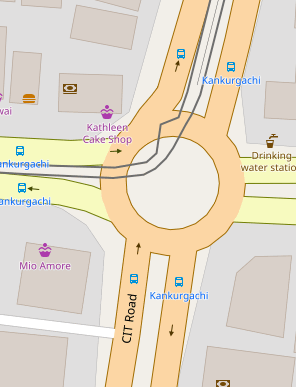}
\caption{KOL-4: Kakurgachi}
\label{fig:kol4-map}
\end{subfigure}
\caption{OSM map of four Kolkata intersections used for validation.}
\label{fig:kolkata-maps}
\end{figure*}

All methods are evaluated in SUMO \cite{b61} on four signalized Kolkata intersections (Figure:~\ref{fig:kolkata-maps}) (Saltlate citycentre, Newtown Gate 1, Newtown Busstand, Kakurgachi) with varying geometries and demand profiles calibrated against municipal counts. The protocol uses 5-second control intervals, 3600s episodes with 300s warm-up, and peak-hour non-homogeneous Poisson demand. SIGMA-QW uses the reduced state $\mathbf{s}_{\text{QW}}^{(t)} = [\mathbf{p}^{t-1}, \mathbf{M}_q^t, \boldsymbol{\Gamma}^t] \in \mathbb{R}^{36}$, omitting the LLM priority vector $\boldsymbol{\theta}^t$; its 16D pressure mask is derived deterministically from the same $\mathbf{q}^{\text{in}}, \mathbf{q}^{\text{out}}$ observed by baselines, and the 4D waiting times are included solely for the $\mathcal{U}_W$ objective absent in PressLight and Max-Pressure (Figure:~\ref{fig:kol_res}). Full specifications are in Appendix \ref{app:sumo}.

\section{Limitations and Future Work}
\label{sec:limfurwork}

Several limitations of present work (\textbf{SIGMA}) warrant future investigation.

\textbf{LLM inference latency.} While the current LLaMA-2-7B module achieves high accuracy, its inference latency leaves limited margin within the control interval. Future work will explore lightweight transformer architectures, knowledge distillation, and pruning techniques to reduce latency while maintaining interpretation fidelity.

\textbf{Single-intersection scope.} The present evaluation is restricted to isolated four-legged intersections. Extension to multi-intersection networks presents opportunities for transfer learning (see Appendix~\ref{a-subsec:Tr_learning} for the proposed approach): policies pre-trained on single intersections can be adapted to network-wide coordination with minimal fine-tuning, enabling emergency vehicle green waves while managing congestion spillback.

\textbf{Rotation symmetry assumptions.} The rotation-equivariant framework assumes $C_4$ symmetry, which holds only when all approaches are homogeneous in lane configuration, turning movements, and traffic patterns. For heterogeneous intersections, rotation augmentation introduces semantic inconsistencies and should not be applied.

Future work will develop a generalized $C_k$-symmetric framework for intersections with $k$ homogeneous approaches, requiring redesigned action spaces, $k$-dimensional rotation operators, and verified equivariance preservation (Appendix~\ref{sec:ck-rotation-framework}). For fully heterogeneous geometries, alternative approaches such as learned spatial embeddings or graph-based representations will be explored. Addressing these limitations will enable robust, scalable deployment of language-guided traffic control in diverse urban environments.

\section{Conclusion}
\label{sec:conclusion}

This paper introduced \textbf{SIGMA}, a hierarchical reinforcement learning framework integrating zero-shot LLM guidance with rotation-equivariant actor critic for emergency-prioritized traffic signal control. Three key innovations are proposed: real-time natural language interpretation without retraining, orientation-invariant policy learning via $C_4$-symmetric training, and hierarchical separation of strategic priority generation from tactical execution.

Experimental results demonstrate that \textbf{SIGMA} achieves superior emergency response while maintaining competitive general traffic efficiency, validating that effective prioritization need not compromise regular flow. Ablations confirm that explicit emergency objectives are irreplaceable, queue minimization is fundamental to capacity, equivariant training improves reliability, and objective specific weighting outperforms uniform allocation.

Future work will extend to heterogeneous geometries, real-world validation, lightweight language models, additional objectives such as emissions and pedestrian safety, and multi-agent network coordination. \textbf{SIGMA} advances human-centric traffic management where emergency preemption coexists with efficient flow, contributing to safer urban transportation networks.

\section*{Acknowledgment}

We thank Mr. Purnendu Das (M.Tech. student, ISI Kolkata) for his critical inputs and help.

\section*{CODE AVAILABILITY STATEMENT}
The source code for the SIGMA framework is made
publicly available \url{https://github.com/pratham-payra/sigma_julia}

\clearpage
\appendices
\section{Training and Inference Procedures}
\label{sec:algos}

\subsection{Actor Network Pretraining}
\label{subsec:actor_pretraining}

The actor network $\pi_\psi(a|s)$ is pretrained on the rotation-augmented dataset $\mathcal{D}_F$ to establish stable initial behavior before online deployment. Pretraining combines supervised action prediction with multi-objective regularization, enabling sensible phase selection without environment interaction.

\paragraph{Network Architecture.} The actor is a feed-forward network mapping state $s$ to an action distribution over $|\mathcal{A}|$ phases:
\[
\begin{aligned}
\mathbf{h}_1 &= \text{ReLU}(\mathbf{W}_1 s + \mathbf{b}_1), \\
\mathbf{h}_2 &= \text{ReLU}(\mathbf{W}_2 \mathbf{h}_1 + \mathbf{b}_2), \\
\mathbf{h}_3 &= \text{ReLU}(\mathbf{W}_3 \mathbf{h}_2 + \mathbf{b}_3), \\
\pi_\psi(a|s) &= \text{softmax}(\mathbf{W}_4 \mathbf{h}_3 + \mathbf{b}_4).
\end{aligned}
\]

\paragraph{Loss Function.} The actor loss combines cross-entropy supervision with weighted utility regularization:
\[
\mathcal{L}_{\text{actor}} = \underbrace{-\frac{1}{|\mathcal{D}_F|} \sum_{(s,a) \in \mathcal{D}_F} a^{\top} \log \pi_\psi(a|s)}_{\mathcal{L}_{\text{en}}} + \underbrace{\boldsymbol{\lambda}^{\top} \boldsymbol{\mathcal{L}}_R}_{\text{regularization}},
\]
where $\boldsymbol{\lambda} = (\lambda_M, \lambda_S, \lambda_Q, \lambda_W, \lambda_E)^{\top}$ are utility weights and $\boldsymbol{\mathcal{L}}_R = (\mathcal{L}_M, \mathcal{L}_S, \mathcal{L}_Q, \mathcal{L}_W, \mathcal{L}_E)^{\top}$ are the five utility losses (Markovian consistency, smoothness, queue length, waiting time, emergency alignment).

\paragraph{Training Procedure.} The actor weights $\psi$ are initialized randomly and updated via gradient descent:

\begin{algorithmic}[h]
\FOR{$epoch = 1$ to $E_{\text{actor}}$}
    \STATE Sample batch $\mathcal{B} \sim \text{Uniform}(\mathcal{D}_F, N_{\text{batch}})$     \STATE Compute $\mathcal{L}_{\text{actor}}$ on $\mathcal{B}$     \STATE Update: $\psi \leftarrow \psi - \eta_{\pi} \nabla_\psi \mathcal{L}_{\text{actor}}$ \ENDFOR
\STATE \textbf{return} pretrained actor $\psi$ \end{algorithmic}

\paragraph{Hyperparameters.} Table~\ref{tab:actor_hyperparams} lists the architecture dimensions, learning rate, and utility weights.

\begin{table}[h]
\centering
\caption{Actor Pretraining Hyperparameters}
\label{tab:actor_hyperparams}
\begin{tabular}{ccc}
\toprule
 $(\dim(\mathcal{S}), h_1, h_2, h_3)$ & $\eta_{\pi}$ & $(\lambda_E, \lambda_W, \lambda_Q, \lambda_S, \lambda_M)$ \\
\midrule
 $(52, 128, 64, 64)$ & $10^{-3}$ & $(2.0, 1.3, 0.7, 0.5, 0.5)$ \\
\bottomrule
\end{tabular}
\end{table}

\subsection{Critic Network Pretraining}
\label{subsec:critic_pretraining}

The critic network $Q_\phi(s,a)$ estimates action-values to guide policy improvement. It is trained offline via fitted Q-iteration, combining reward engineering, nearest-neighbor state approximation, and iterative Bellman updates.

\paragraph{Network Architecture.} The critic maps state-action pairs to scalar values:
\[
\begin{aligned}
\mathbf{h}_1^Q &= \text{ReLU}\bigl(\mathbf{W}_1^Q [s; a] + \mathbf{b}_1^Q\bigr), \\
\mathbf{h}_2^Q &= \text{ReLU}\bigl(\mathbf{W}_2^Q \mathbf{h}_1^Q + \mathbf{b}_2^Q\bigr), \\
Q_\phi(s, a) &= \mathbf{W}_3^Q \mathbf{h}_2^Q + \mathbf{b}_3^Q.
\end{aligned}
\]

\paragraph{Reward Composition.} A balanced reward signal combines the five utility components:
\[
r_i = \boldsymbol{\alpha}^{\top} \mathbf{r}_i^V,
\]
where $\boldsymbol{\alpha} = (\alpha_M, \alpha_S, \alpha_Q, \alpha_W, \alpha_E)^{\top}$ are reward weights and $\mathbf{r}_i^V = (r_{i,M}, r_{i,S}, r_{i,Q}, r_{i,W}, r_{i,E})^{\top}$ are the per-utility rewards.

\paragraph{Next-State Approximation.} Since the offline dataset lacks temporal continuity, next states are approximated via k-nearest neighbor averaging over states with the same action:
\[
\hat{s}'_i = \frac{1}{k} \sum_{j \in \mathcal{N}_k(s_i, a_i)} s_{j+1},
\]
where $\mathcal{N}_k(s_i, a_i)$ denotes the $k$ nearest neighbors of $s_i$ among transitions with action $a_i$.

\paragraph{Fitted Q-Iteration.} Starting from $Q_i^{(0)} = r_i$, values are refined iteratively:

\begin{algorithmic}[h]
\STATE Initialize $Q_i^{(0)} \gets r_i$ for all $i \in \{1, \dots, N\}$ \STATE Initialize network $Q_\phi$ with weights $\phi^{(0)}$ \FOR{$t = 0$ to $K-1$}
    \STATE $\phi^{(t+1)} \gets \arg\min_\phi \sum_{i=1}^N \bigl[Q_\phi(s_i, a_i) - Q_i^{(t)}\bigr]^2$     \FOR{$i = 1$ to $N$}
        \STATE $Q_i^{(t+1)} \gets r_i + \gamma \max_{a' \in \mathcal{A}} Q_{\phi^{(t+1)}}(\hat{s}'_i, a')$     \ENDFOR
    \IF{$\|Q^{(t+1)} - Q^{(t)}\|_2 < \epsilon$}
        \STATE \textbf{break}
    \ENDIF
\ENDFOR
\STATE Construct dataset $\mathcal{D}_{\text{critic}} = \{(s_i, a_i, Q_i^{(\text{final})})\}_{i=1}^N$ \STATE \textbf{return} $\mathcal{D}_{\text{critic}}$ \end{algorithmic}

\paragraph{Hyperparameters.} Table~\ref{tab:critic_hyperparams} lists the architecture, reward weights, and iteration parameters.

\begin{table}[h]
\centering
\caption{Critic Pretraining Hyperparameters}
\label{tab:critic_hyperparams}
\begin{tabular}{@{}l@{}}
\toprule
\textbf{Architecture:} $(\dim(\mathcal{S}), |\mathcal{A}|, h_1, h_2) = (52, 8, 128, 64)$, activation: ReLU, $\eta_\phi = 10^{-3}$ \\
\midrule
\textbf{Reward weights:} $(\alpha_M, \alpha_S, \alpha_Q, \alpha_W, \alpha_E) = (0.5, 0.45, 0.7, 1.3, 2.0)$ \\
\midrule
\textbf{Nearest neighbor:} $k = 5$, distance metric: $L_2$ \\
\midrule
\textbf{Q-iteration:} $\gamma = 0.95$, $\epsilon = 10^{-4}$, max iterations: $K = 50$ \\
\bottomrule
\end{tabular}
\end{table}

\subsection{Online Execution}
\label{subsec:online_execution}

During deployment, the pretrained actor-critic system operates in closed loop with live traffic. At each decision step, the agent observes the intersection state, optionally processes emergency instructions via the LLM, selects and executes a signal phase, stores the transition, and periodically updates both networks.

\paragraph{State Observation and Emergency Injection.} The local state comprises queue lengths, maximum waiting times, previous phase, and priority parameters:
\[
s_v^t = \bigl[p_v^{t-1},\; \mathbf{L}_v^t,\; \boldsymbol{\Gamma}_v^t,\; \boldsymbol{\theta}_v^t\bigr].
\]
If an emergency instruction $m$ is received, the LLM generates a priority vector:
\[
\boldsymbol{\theta}_v^t = \text{LLM}\bigl(\mathcal{PR}(m, v)\bigr),
\]
which is appended to the state before action selection.

\paragraph{Action Selection and Execution.} The actor samples an action from its policy:
\[
a_v^t \sim \pi_\psi(\cdot \mid s_v^t),
\]
and the selected phase is executed. The reward combines the five utility components:
\[
r_v^t = \sum_{k \in \{M,S,Q,W,E\}} \alpha_k \cdot r_{v,k}^t.
\]
The resulting transition $(s_v^t, a_v^t, r_v^t, s_v^{t+1})$ is stored in the replay buffer $\mathcal{D}$.

\paragraph{Network Updates.} When the buffer reaches capacity $N_{\text{batch}}$, a mini-batch $\mathcal{B}$ is sampled uniformly. The critic is updated via temporal-difference learning:
\[
\begin{aligned}
y &= r + \gamma\, Q_{\phi_{\text{target}}}(s', a'), \\
\mathcal{L}_\phi &= \sum_{(s,a,r,s') \in \mathcal{B}} \bigl(y - Q_\phi(s,a)\bigr)^2, \\
\phi &\leftarrow \phi - \eta_Q \nabla_\phi \mathcal{L}_\phi,
\end{aligned}
\]
where $a' \sim \pi_\psi(\cdot \mid s')$. The actor is updated via policy gradient:
\[
\begin{aligned}
\delta &= r + \gamma\, Q_\phi(s', a') - Q_\phi(s, a), \\
\nabla_\psi J &= \sum_{(s,a,r,s') \in \mathcal{B}} \delta\, \nabla_\psi \log \pi_\psi(a \mid s), \\
\psi &\leftarrow \psi + \eta_\pi \nabla_\psi J.
\end{aligned}
\]
Target networks are soft-updated periodically:
\[
\phi_{\text{target}} \leftarrow \tau_o \phi + (1-\tau_o) \phi_{\text{target}}, \quad
\psi_{\text{target}} \leftarrow \tau_o \psi + (1-\tau_o) \psi_{\text{target}}.
\]

\paragraph{Execution Loop.} The complete online procedure is summarised in Algorithm.

\begin{algorithmic}[h]
\STATE Initialize actor $\pi_\psi$, critic $Q_\phi$ with pretrained weights
\STATE Initialize replay buffer $\mathcal{D} \leftarrow \emptyset$, target networks, $t \leftarrow 0$ \LOOP
    \STATE Observe state $s_v^t$; inject LLM priority $\boldsymbol{\theta}_v^t$ if available
    \STATE Sample action $a_v^t \sim \pi_\psi(\cdot \mid s_v^t)$; execute; observe $r_v^t, s_v^{t+1}$     \STATE Store $(s_v^t, a_v^t, r_v^t, s_v^{t+1})$ in $\mathcal{D}$     \IF{$|\mathcal{D}| \geq N_{\text{batch}}$}
        \STATE Sample batch $\mathcal{B}$; compute targets $y$ and TD errors $\delta$         \STATE Update critic $\phi$ via gradient descent on $\mathcal{L}_\phi$         \STATE Update actor $\psi$ via policy gradient $\nabla_\psi J$         \IF{$t \bmod K_{\text{target}} = 0$}
            \STATE Soft-update target networks
        \ENDIF
    \ENDIF
    \STATE $t \leftarrow t + \Delta t$ \ENDLOOP
\end{algorithmic}

\paragraph{Hyperparameters.} Table~\ref{tab:online_hyperparams} lists the online execution parameters.

\begin{table}[h]
\centering
\caption{Online Execution Hyperparameters}
\label{tab:online_hyperparams}
\begin{tabular}{ccccccc}
\toprule
 $N_{\text{batch}}$ & $N$ & $\Delta t$ & $\eta_Q$ & $\eta_\pi$ & $K_{\text{target}}$ & $\tau_o$ \\
\midrule
 $512$ & $128$ & $30$\,s & $10^{-4}$ & $10^{-4}$ & $100$ & $0.001$ \\
\bottomrule
\end{tabular}
\end{table}

\subsection{Fixed-Time Control}
\label{subsec:fixed_time}

Fixed-time control follows a predetermined cyclic schedule independent of real-time traffic conditions.

\paragraph{Cycle Definition.} The total cycle length $C$ is the sum of fixed green durations $d_1, d_2, \dots, d_K$ for $K$ phases:
\[
C = \sum_{k=1}^{K} d_k.
\]
In practice, durations are constrained to multiples of a base unit $\Delta$ (e.g., 30 seconds), so $d_k = \Delta \cdot m_k$ for positive integers $m_k$.

\paragraph{Execution Loop.} Phases are executed in fixed order, each held for its allocated duration:

\begin{algorithmic}[h]
\STATE Define phase sequence $(1, 2, \dots, K)$ and durations $(d_1, d_2, \dots, d_K)$ \LOOP
    \FOR{$k = 1$ to $K$}
        \STATE Set current phase $p \gets k$         \STATE Hold green for duration $d_k$         \STATE Advance time by $d_k$     \ENDFOR
\ENDLOOP
\end{algorithmic}

This baseline provides a stable reference but cannot adapt to fluctuating demand or emergency conditions.

\subsection{Actuated Control}
\label{subsec:actuated_control}

Actuated control extends green phases based on real-time vehicle detection, subject to safety bounds.

\paragraph{Control Parameters.} Three thresholds govern phase extension:
\begin{itemize}
    \item $G_{\min}$: minimum green time for safety,
    \item $G_{\max}$: maximum green time to prevent starvation,
    \item $G_{\text{gap}}$: gap threshold detecting demand drop-off.
\end{itemize}

\paragraph{Phase Extension Rule.} For active phase $p$ starting at $t_0$, green ends at:
\[
t_1 = \min\Bigl(t_0 + G_{\max},\; \inf\bigl\{t \geq t_0 + G_{\min} : \max_i t_l^{(i)}(t) > G_{\text{gap}}\bigr\}\Bigr),
\]
where $t_l^{(i)}(t)$ is the time since last vehicle detection on approach $i$. The phase runs until $t_1$.

\paragraph{Next-Phase Selection.} The successor phase maximizes served queue length:
\[
p_{\text{next}} = \arg\max_{p'} \sum_{i \in S(p')} l_i(t_1),
\]
where $S(p')$ denotes approaches served by phase $p'$, and $l_i(t_1)$ is the queue length on approach $i$ at time $t_1$.

\paragraph{Execution Loop.}

\begin{algorithmic}[h]
\STATE \textbf{Parameters:} $G_{\min}$, $G_{\max}$, $G_{\text{gap}}$ \LOOP
    \STATE Set current phase $p$; record start time $t_0$     \STATE Extend green until earliest of: (a) $t_0 + G_{\max}$, or (b) gap $> G_{\text{gap}}$ after $G_{\min}$     \STATE Select $p_{\text{next}} = \arg\max_{p'} \sum_{i \in S(p')} l_i(t_1)$     \STATE Advance to phase $p_{\text{next}}$ \ENDLOOP
\end{algorithmic}

\paragraph{Queue Capacity Constraint.} When an outgoing direction reaches maximum capacity $Q_{\max}$, no further vehicles may enter until downstream congestion clears. This constraint applies during both simulation and live execution.

\subsection{Deep Q-Network Baseline}
\label{subsec:dqn}

The Deep Q-Network (DQN) serves as a standard deep reinforcement learning baseline. It learns a state-action value function $Q_\phi(s,a)$ through experience replay and target network stabilization, using the same state space and reward structure as SIGMA for fair comparison.

\paragraph{State and Action.} The agent observes state $s^t$ and selects actions from discrete set $\mathcal{A}$.

\paragraph{$\epsilon$-Greedy Exploration.} Action selection balances exploitation and exploration:
\[
\pi(a \mid s^t) = 
\begin{cases}
1 - \epsilon + \dfrac{\epsilon}{|\mathcal{A}|} & \text{if } a = \arg\max_{a'} Q_\phi(s^t, a'), \\[8pt]
\dfrac{\epsilon}{|\mathcal{A}|} & \text{otherwise}.
\end{cases}
\]

\paragraph{Learning Update.} After executing action $a^t$ and observing reward $r^t$ and next state $s^{t+1}$, the target is:
\[
y = r^t + \gamma \max_{a'} Q_{\phi^-}(s^{t+1}, a'),
\]
where $\phi^-$ denotes target network parameters. The Q-network is updated by minimizing temporal-difference error:
\[
\mathcal{L}_{\text{DQN}} = \bigl(y - Q_\phi(s^t, a^t)\bigr)^2,
\]
with gradient descent $\phi \leftarrow \phi - \eta \nabla_\phi \mathcal{L}_{\text{DQN}}$. Target parameters $\phi^-$ are periodically synchronized with $\phi$.

\paragraph{Execution Loop.}

\begin{algorithmic}[h]
\STATE Initialize Q-network $Q_\phi$, target network $Q_{\phi^-}$, replay buffer $\mathcal{D}$ \LOOP
    \STATE Observe state $s^t$     \STATE Select $a^t \sim \pi(\cdot \mid s^t)$ via $\epsilon$-greedy
    \STATE Execute $a^t$; observe $r^t$, $s^{t+1}$     \STATE Store $(s^t, a^t, r^t, s^{t+1})$ in $\mathcal{D}$     \STATE Sample batch from $\mathcal{D}$     \STATE Compute targets $y$ and loss $\mathcal{L}_{\text{DQN}}$     \STATE Update $\phi$ via gradient descent
    \IF{target update frequency reached}
        \STATE Synchronize $\phi^- \leftarrow \phi$     \ENDIF
    \STATE $t \leftarrow t + \Delta t$ \ENDLOOP
\end{algorithmic}

\paragraph{Hyperparameters.} Table~\ref{tab:dqn_hyperparams} lists the DQN configuration.

\begin{table}[h]
\centering
\caption{DQN Hyperparameters}
\label{tab:dqn_hyperparams}
\begin{tabular}{cccccccc}
\toprule
 $\epsilon$ & $\gamma$ & $\eta$ & $\Delta t$ & $|\mathcal{A}|$ & $\dim(\mathcal{S})$ & $(h_1, h_2)$ & Target Freq \\
\midrule
 $0.1$ & $0.95$ & $10^{-4}$ & $30$\,s & $8$ & $52$ & $(128, 64)$ & $100$ \\
\bottomrule
\end{tabular}
\end{table}

\subsection{Synthetic Data Generation}
\label{subsec:data_generation}

Training and evaluation require realistic traffic demand with periodic variations and rare emergency events. We generate synthetic data via non-homogeneous Poisson processes with harmonic intensity patterns for both arrivals and departures.

\paragraph{Demand Model.} For each approach $i$, arrival rates follow a harmonic decomposition:
\[
\nu_i(t) = \nu_0 + \sum_{r=1}^{R} k_r \sin(\omega_r t + \phi_r),
\]
and for each outgoing direction $j$, departure rates follow:
\[
\mu_j(t) = \mu_0 + \sum_{r=1}^{R} \kappa_r \sin(\omega_r t + \psi_r),
\]
where $\nu_0, \mu_0$ are base rates, $k_r, \kappa_r$ are harmonic amplitudes, $\omega_r$ frequencies, and $\phi_r, \psi_r$ phase shifts. Rates are clipped to non-negativity: $\nu_i(t) \leftarrow \max(\nu_i(t), 0)$ and $\mu_j(t) \leftarrow \max(\mu_j(t), 0)$. Identical parameters apply across all approaches and outgoing directions to isolate control effects from demand asymmetry.

\paragraph{Arrival and Departure Sampling.} At each decision interval:
\[
\begin{aligned}
N_i^{\text{(arr)}}(t) &\sim \text{Poisson}\bigl(\nu_i(t) \Delta t\bigr), \\
N_j^{\text{(dep)}}(t) &\sim \text{Poisson}\bigl(\mu_j(t) \Delta t\bigr).
\end{aligned}
\]
Incoming and outgoing counts are updated:
 $l_i^{\text{in}} \leftarrow l_i^{\text{in}} + N_i^{\text{(arr)}}(t)$ and
 $l_j^{\text{out}} \leftarrow l_j^{\text{out}} + N_j^{\text{(dep)}}(t)$.

\paragraph{Pressure Mask Construction.} 
The net pressure mask balances incoming and outgoing flow:
\[
\begin{aligned}
\mathbf{M'}_q^t &= \mathbf{M}_{q(\text{in})}^t - \mathbf{M}_{q(\text{out})}^t, \\
\mathbf{M}_q^t &= \mathbf{M'}_q^t - \min(\mathbf{M'}_q^t).
\end{aligned}
\]
The shift to non-negative ensures valid pressure values. Maximum waiting times $\boldsymbol{\Gamma}^t = (\tau_e, \tau_n, \tau_w, \tau_s)^{\top}$ and previous phase $p^{t-1}$ are also recorded.

\paragraph{Emergency Injection.} Emergency vehicles are introduced as rare episodic events. Each episode samples an incident with low probability; upon occurrence, a scenario specifies vehicle type, origin approach, destination approach, and departure time. The incident is encoded into a semantic priority vector $\boldsymbol{\theta}^t$ via the LLM. The complete state becomes:
\[
s^t = \bigl[p^{t-1},\; \mathbf{M}_q^t,\; \boldsymbol{\Gamma}^t,\; \boldsymbol{\theta}^t\bigr].
\]

\paragraph{Action Evaluation and Storage.} All feasible actions are evaluated via the reward function, and the maximizing action is selected:
\[
a^{t*} = \arg\max_{a \in \mathcal{A}} R(s^t, a).
\]
The data tuple $\bigl(s^t, \{r^t(a)\}_{a \in \mathcal{A}}, a^{t*}\bigr)$ is stored for pretraining.

\paragraph{Generation Procedure.}

\begin{algorithmic}[h]
\STATE Define horizon $T$, interval $\Delta t$, decision times $t = 0, \Delta t, \dots, T$ \STATE Initialize demand parameters $\nu_0, \mu_0, \{k_r, \kappa_r, \omega_r, \phi_r, \psi_r\}_{r=1}^R$ \LOOP
    \STATE Compute $\nu_i(t), \mu_j(t)$ for all $i, j$     \STATE Sample arrivals $N_i^{\text{(arr)}}(t) \sim \text{Poisson}(\nu_i(t)\Delta t)$     \STATE Sample departures $N_j^{\text{(dep)}}(t) \sim \text{Poisson}(\mu_j(t)\Delta t)$     \STATE Update $l_i^{\text{in}}$, $l_j^{\text{out}}$     \STATE Construct $\mathbf{M}_{q(\text{in})}^t$, $\mathbf{M}_{q(\text{out})}^t$, $\mathbf{M}_q^t$     \STATE Record $\boldsymbol{\Gamma}^t$, $p^{t-1}$     \STATE Inject emergency with probability $p_e$; encode $\boldsymbol{\theta}^t$ if triggered
    \STATE Construct state $s^t = [p^{t-1}, \mathbf{M}_q^t, \boldsymbol{\Gamma}^t, \boldsymbol{\theta}^t]$     \STATE Evaluate all $a \in \mathcal{A}$; compute $r^t(a) = R(s^t, a)$     \STATE Select $a^{t*} = \arg\max_a r^t(a)$     \STATE Store $\bigl(s^t, \{r^t(a)\}_{a \in \mathcal{A}}, a^{t*}\bigr)$ and advance $t \leftarrow t + \Delta t$ \ENDLOOP
\end{algorithmic}

\paragraph{Hyperparameters.} Table~\ref{tab:data_generation} lists the simulation parameters.

\begin{table}[h]
\centering
\caption{Traffic and Emergency Data Generation: Hyperparameters}
\label{tab:data_generation}
\begin{tabular}{ccccc}
\toprule
 $\Delta t$ & $|\mathcal{A}|$ & $\nu_0$ & $\mu_0$ & $R$ \\
\midrule
 $30$\,s & $8$ & $4$ & $4$ & $12$ \\
\bottomrule
\end{tabular}
\end{table}

\paragraph{Harmonic Parameters.} Tables~\ref{tab:harmonic_arrivals} and~\ref{tab:harmonic_departures} list the frequency components for arrival and departure demand.

\begin{table}[h]
\centering
\caption{Harmonic Parameters for Traffic Demand (Arrivals)}
\label{tab:harmonic_arrivals}
\begin{tabular}{ccccc}
\toprule
 $r$ & $k_r$ & $\omega_r$ & $\phi_r$ & Period (h) \\
\midrule
 $1$ & $4.5$ & $2\pi/2880$ & $0$ & $24.0$ \\
 $2$ & $3.6$ & $4\pi/2880$ & $\pi/6$ & $12.0$ \\
 $3$ & $2.7$ & $6\pi/2880$ & $\pi/4$ & $8.0$ \\
 $4$ & $2.2$ & $8\pi/2880$ & $\pi/3$ & $6.0$ \\
 $5$ & $1.8$ & $10\pi/2880$ & $\pi/2$ & $4.8$ \\
 $6$ & $1.4$ & $12\pi/2880$ & $2\pi/3$ & $4.0$ \\
 $7$ & $1.3$ & $14\pi/2880$ & $3\pi/4$ & $3.4$ \\
 $8$ & $1.1$ & $16\pi/2880$ & $5\pi/6$ & $3.0$ \\
 $9$ & $0.9$ & $18\pi/2880$ & $\pi$ & $2.7$ \\
 $10$ & $0.7$ & $20\pi/2880$ & $7\pi/6$ & $2.4$ \\
 $11$ & $0.5$ & $22\pi/2880$ & $5\pi/4$ & $2.2$ \\
 $12$ & $0.3$ & $24\pi/2880$ & $4\pi/3$ & $2.0$ \\
\bottomrule
\end{tabular}
\end{table}

\begin{table}[h]
\centering
\caption{Harmonic Parameters for Downstream Departures}
\label{tab:harmonic_departures}
\begin{tabular}{ccccc}
\toprule
 $r$ & $\kappa_r$ & $\omega_r$ & $\psi_r$ & Period (h) \\
\midrule
 $1$ & $5.5$ & $2\pi/2880$ & $0$ & $24.0$ \\
 $2$ & $3.8$ & $4\pi/2880$ & $\pi/6$ & $12.0$ \\
 $3$ & $4.1$ & $6\pi/2880$ & $\pi/4$ & $8.0$ \\
 $4$ & $3.7$ & $8\pi/2880$ & $\pi/3$ & $6.0$ \\
 $5$ & $3.4$ & $10\pi/2880$ & $\pi/2$ & $4.8$ \\
 $6$ & $2.41$ & $12\pi/2880$ & $2\pi/3$ & $4.0$ \\
 $7$ & $2.0$ & $14\pi/2880$ & $3\pi/4$ & $3.4$ \\
 $8$ & $1.8$ & $16\pi/2880$ & $5\pi/6$ & $3.0$ \\
 $9$ & $1.1$ & $18\pi/2880$ & $\pi$ & $2.7$ \\
 $10$ & $0.9$ & $20\pi/2880$ & $7\pi/6$ & $2.4$ \\
 $11$ & $0.7$ & $22\pi/2880$ & $5\pi/4$ & $2.2$ \\
 $12$ & $0.4$ & $24\pi/2880$ & $4\pi/3$ & $2.0$ \\
\bottomrule
\end{tabular}
\end{table}

\paragraph{Note on Reward Weight Selection.} The reward weights $\boldsymbol{\alpha} = (\alpha_M, \alpha_S, \alpha_Q, \alpha_W, \alpha_E)^{\top}$ are sampled uniformly, defined by $\alpha_i > 0$ and $\sum_{i \in \{M,S,Q,W,E\}} \alpha_i = 5$. This randomized initialization prevents manual tuning bias during synthetic data generation, ensuring the offline dataset $\mathcal{D}$ captures diverse trade-off scenarios and yields more robust policies.


\subsection{Emergency Incident Generation and Encoding}
\label{subsec:emergency_generation}

The procedure for generating and encoding emergency incidents combines probabilistic event sampling with semantic encoding. Emergency scenarios are first defined in a library, then randomly activated during simulation, and finally transformed into vector representations suitable for the control algorithm.

\paragraph{Scenario Library.} A finite set $\mathcal{E} = \{e_1, \dots, e_{40}\}$ of emergency scenarios is predefined. Each scenario specifies vehicle type, origin approach, destination approach, and departure time.

\paragraph{Incident Semantics.} For each $e_j \in \mathcal{E}$, a natural-language description is defined. Example: \textit{``Ambulance approaching from North and traveling toward South''}.

\paragraph{Occurrence Sampling.} At each episode start, an emergency is sampled with probability $p_e$:
\[
Z \sim \text{Bernoulli}(p_e), \quad p_e = 1/20.
\]
If $Z = 1$, a scenario is drawn uniformly: $e^t \sim \text{Uniform}(\mathcal{E})$. Otherwise, no emergency occurs.

\paragraph{Context Encoding.} The scenario description is transformed into a priority vector:
\[
\boldsymbol{\theta}^t = \text{LLM}(\text{incident description}).
\]

\paragraph{Generation Procedure.}

\begin{algorithmic}[h]
\STATE Construct scenario library $\mathcal{E} = \{e_1, \dots, e_{40}\}$ \STATE For each $e_j \in \mathcal{E}$, define natural-language description
\STATE Set occurrence probability $p_e = 1/20$ \STATE Sample $Z \sim \text{Bernoulli}(p_e)$ \IF{$Z = 1$}
    \STATE Sample $e^t \sim \text{Uniform}(\mathcal{E})$     \STATE Retrieve description $m = \text{description}(e^t)$     \STATE Encode $\boldsymbol{\theta}^t = \text{LLM}(m)$ \ELSE
    \STATE No emergency event in this episode
\ENDIF
\end{algorithmic}

\section{Rotational Augmentation Framework}
\label{sec:rotation-framework}

To achieve orientation invariance and enable a single policy to generalise effectively across arbitrarily rotated intersection layouts, we develop a systematic $C_4$ rotational augmentation framework. This section details the rotation operators, encoding schemes, and group structures that underpin our symmetry-aware training methodology.

\begin{table*}[t]
\centering
\caption{Fundamental pattern groups for traffic signal actions under $C_4$ symmetry.}
\label{tab:pattern-groups}
\begin{tabularx}{\textwidth}{|p{1.8cm}|p{4.5cm}|p{1.0cm}|p{2.2cm}|X|}
\toprule
\textbf{Pattern Group} & \textbf{Rotation Group} & \textbf{Size} & \textbf{Condition} & \textbf{Description} \\
\midrule
Single-Pivot Dominant & $G_1 = \{A_1^{E}, A_1^{N}, A_1^{W}, A_1^{S}\}$ Cyclic under $\rho_a$ & 4 & Pivot $\mathbf{ic} \in \mathcal{IC}$ & All movements from a single incoming direction. \\
\midrule
Complementary Straight and Left-Turn & $G_2 = \{A_2^{N-S}, A_2^{E-W}\}$ Complements under $\rho_a$ & 2 & $\mathbf{ic}, \mathbf{og}$ same direction & Straight and left-turn movements for perpendicular pairs. \\
\midrule
Complementary Right and U-Turn & $G_3 = \{A_3^{N-S}, A_3^{E-W}\}$ Complements under $\rho_a$ & 2 & $\mathbf{ic}, \mathbf{og}$ same direction & Right-turn and U-turn movements for perpendicular pairs. \\
\bottomrule
\end{tabularx}
\end{table*}

\subsection{Rotation Operators for State Components}
\label{subsec:rotation-operators}

The intersection state comprises vectors of different dimensions, each requiring specialized rotation operators to preserve semantic consistency under transformation.

\paragraph{4-Dimensional Vectors (Queue Lengths and Waiting Times)}
For vectors representing per-approach quantities such as queue lengths $\mathbf{L}^t = [l_e, l_n, l_w, l_s]^\top$ and maximum waiting times $\mathbf{\Gamma}^t = [\tau_e, \tau_n, \tau_w, \tau_s]^\top$, we define the anticlockwise rotation operator $\rho' \in \mathbb{R}^{4 \times 4}$:
\[
\rho' = 
\begin{bmatrix}
0 & 1 & 0 & 0 \\ 
0 & 0 & 1 & 0 \\ 
0 & 0 & 0 & 1 \\ 
1 & 0 & 0 & 0
\end{bmatrix}
\]
Applied to a 4-dimensional vector $Y = [Y_e, Y_n, Y_w, Y_s]^\top$, this operator produces:
\[
\rho' \cdot Y = [Y_n, Y_w, Y_s, Y_e]^\top
\]
mapping East $\rightarrow$ North, North $\rightarrow$ West, West $\rightarrow$ South, and South $\rightarrow$ East. Table~\ref{tab:rotation-ops-4d} summarizes the four possible rotations.

\begin{table}[h]
\centering
\caption{Rotation operators for 4-dimensional vectors under $C_4$ symmetry.}
\label{tab:rotation-ops-4d}
\begin{tabular}{lcc}
\toprule
\textbf{Rotation} & \textbf{Expression} & \textbf{Direction Mapping} \\
\midrule
 $0^\circ$ & $Y^{(0)} = I_4 Y$ & $E,N,W,S \rightarrow E,N,W,S$ \\
 $90^\circ$ (anticlockwise) & $Y^{(90)} = \rho' Y$ & $E,N,W,S \rightarrow N,W,S,E$ \\
 $180^\circ$ & $Y^{(180)} = \rho'^2 Y$ & $E,N,W,S \rightarrow W,S,E,N$ \\
 $270^\circ$ & $Y^{(270)} = \rho'^3 Y$ & $E,N,W,S \rightarrow S,E,N,W$ \\
\bottomrule
\end{tabular}
\end{table}

\paragraph{16-Dimensional Vectors (Priority Parameters and Phase History)}
Priority vectors $\theta^t \in \mathbb{R}^{16}$ ,phase representations $\mathbf{p}^{t} \in \mathbb{R}^{16}$ and queue pressure mask representations $\mathbf{M_q}^{t} \in \mathbb{R}^{16}$ encode movement-specific information. These are derived from $4 \times 4$ movement matrices $\mathcal{X} \in \mathbb{R}^{4 \times 4}$:
\[
\mathcal{X} = 
\begin{bmatrix}
X_{ee} & X_{en} & X_{ew} & X_{es} \\[4pt]
X_{ne} & X_{nn} & X_{nw} & X_{ns} \\[4pt]
X_{we} & X_{wn} & X_{ww} & X_{ws} \\[4pt]
X_{se} & X_{sn} & X_{sw} & X_{ss}
\end{bmatrix}
\in \mathbb{R}^{4\times 4}
\]
\[
\mathcal{X}=\biggl((X_{ij})\biggr)_{4\times4} \text{and  } i,j\in{\{E,N,S,W\}}
\]
Where $i$ is the incoming direction and $j$ is the outgoing direction.

where rows correspond to incoming directions (East, North, West, South) and columns to outgoing directions. The matrix is flattened via the row-vector transformation $\mathcal{RT}(\cdot)$:
\[
X = \mathcal{RT}(\mathcal{X}) = \biggl(\mathcal{R}_1(\mathcal{X}),\ \rho'\mathcal{R}_2(\mathcal{X}),\ \rho'^2\mathcal{R}_3(\mathcal{X}),\ \rho'^3\mathcal{R}_4(\mathcal{X})\biggr)^{\top},
\]
where $\mathcal{R}_i(\mathcal{X})$ extracts the $i$-th row of $\mathcal{X}$. The anticlockwise rotation operator $\rho \in \mathbb{R}^{16 \times 16}$ for these 16-dimensional vectors is:
\[
\rho = 
\begin{bmatrix}
\mathbf{0}_{12 \times 4} & I_{12} \\[4pt]
I_{4} & \mathbf{0}_{4 \times 12}
\end{bmatrix}
\in \mathbb{R}^{16 \times 16}.
\]
This operator cyclically permutes the four direction blocks, as detailed in Table~\ref{tab:rotation-ops-16d}.

\begin{table}[h]
\centering
\caption{Rotation operators for 16-dimensional vectors under $C_4$ symmetry.}
\label{tab:rotation-ops-16d}
\begin{tabular}{lcc}
\toprule
\textbf{Rotation} & \textbf{Expression} & \textbf{Direction Mapping} \\
\midrule
 $0^\circ$ & $X^{(0)} = I_{16} X$ & $E,N,W,S \rightarrow E,N,W,S$ \\
 $90^\circ$ (anticlockwise) & $X^{(90)} = \rho X$ & $E,N,W,S \rightarrow N,W,S,E$ \\
 $180^\circ$ & $X^{(180)} = \rho^2 X$ & $E,N,W,S \rightarrow W,S,E,N$ \\
 $270^\circ$ & $X^{(270)} = \rho^3 X$ & $E,N,W,S \rightarrow S,E,N,W$ \\
\bottomrule
\end{tabular}
\end{table}

\subsection{Action Space Encoding and Rotation}
\label{subsec:action-encoding}

The action space consists of eight admissible phase patterns, organized into three rotation groups based on traffic engineering constraints.

\paragraph{Complex Number Representation of Directions}
To analyze rotational symmetry, we represent incoming and outgoing directions as complex numbers:
\[
\mathcal{IC} = \left\{ \alpha e^{jk\pi/2} : k \in \{0,1,2,3\} \right\} \subset \mathbb{C},
\]
\[
\mathcal{OG} = \left\{ \beta e^{jk\pi/2} : k \in \{0,1,2,3\} \right\} \subset \mathbb{C},
\]
where $\alpha, \beta > 0$ are scaling constants, and:
\[
\begin{aligned}
k=0 &: \text{East}\ (\mathbf{e} = \alpha), \quad
k=1 &: \text{North}\ (\mathbf{n} = j\alpha), \\
k=2 &: \text{West}\ (\mathbf{w} = -\alpha), \quad
k=3 &: \text{South}\ (\mathbf{s} = -j\alpha).
\end{aligned}
\]
The complete movement space is the Minkowski sum:
\[
\mathcal{M} = \mathcal{IC} \oplus \mathcal{OG} = \left\{ \mathbf{ic} + \mathbf{og} : \mathbf{ic} \in \mathcal{IC},\ \mathbf{og} \in \mathcal{OG} \right\}.
\]

\paragraph{Pattern Groups and One-Hot Encoding}
We define three fundamental pattern groups that form the basis of our action space, detailed in Table~\ref{tab:pattern-groups}. From these groups, we obtain exactly eight valid signal combination patterns:
\[
\mathbb{A} = \{A_1^{E}, A_1^{N}, A_1^{W}, A_1^{S}, A_2^{N-S}, A_2^{E-W}, A_3^{N-S}, A_3^{E-W}\}.
\]
Each pattern $A_i \in \mathbb{A}$ is encoded as an 8-dimensional one-hot vector:
\[
\mathbf{a}_i = [0, \dots, 0, \underbrace{1}_{\text{position } i}, 0, \dots, 0]^\top \in \{0,1\}^8.
\]
The corresponding $16$-dimensional movement matrix is recovered via the fixed transition matrix $A_{TR} \in \{0,1\}^{8 \times 16}$:
\[
A_i =  \mathbf{a}_i.A_{TR}
\]
\[
B_1 =
\begin{bmatrix}
1&1&1&1\\
0&0&1&1\\
1&1&0&0\\
0&0&0&0\\
0&0&0&0\\
0&0&0&0\\
0&0&0&0\\
0&0&0&0
\end{bmatrix},
\quad
B_2 =
\begin{bmatrix}
0&0&0&0\\
0&0&0&0\\
0&0&0&0\\
0&0&0&0\\
1&1&1&1\\
0&0&1&1\\
1&1&0&0\\
0&0&0&0
\end{bmatrix},
\]
\[
B_3 =
\begin{bmatrix}
0&0&0&0\\
1&1&0&0\\
0&0&1&1\\
1&1&1&1\\
0&0&0&0\\
0&0&0&0\\
0&0&0&0\\
0&0&0&0
\end{bmatrix},
\quad
B_4 =
\begin{bmatrix}
0&0&0&0\\
0&0&0&0\\
0&0&0&0\\
0&0&0&0\\
0&0&0&0\\
1&1&0&0\\
0&0&1&1\\
1&1&1&1
\end{bmatrix}.
\]
\[
A = \biggl[B_1 \; B_2 \; B_3 \; B_4\biggr].
\]

\noindent and the column order (movement space) of $A_{TR}$ is:
\[
\mathcal{MS} = (\text{EE}, \text{EN}, \text{EW}, \text{ES}, \text{NE}, \text{NN}, \text{NW}, \text{NS}, \text{WE}, \text{WN}, \text{WW}, \text{WS}, \text{SE}, \text{SN}, \text{SW}, \text{SS})
\]

\noindent where the row order (state space) of $A_{TR}$ is:
\[
\mathcal{SO} = \biggl(A_{1}^{E}, A_{2}^{E-W}, A_{3}^{E-W}, A_{1}^{W}, A_{1}^{N}, A_{2}^{N-S}, A_{3}^{N-S}, A_{1}^{S}\biggr)
\]
And for the one - hot encoded vectors:
\[
\mathcal{SO}_{oh} = \biggl(a_{1}^{E}, a_{2}^{E-W}, a_{3}^{E-W}, a_{1}^{W}, a_{1}^{N}, a_{2}^{N-S}, a_{3}^{N-S}, a_{1}^{S}\biggr)
\]
And the one hot encoded matrix:
\[
\mathcal{A}_{oh} = I_{8}
\]

\paragraph{Action Space Rotation Operator}
The action space exhibits $C_4$ rotational symmetry. For any rotation angle $\theta = r\pi/2$ with $r \in \{0,1,2,3\}$:
\[
\mathcal{M}_{\text{rot}} = e^{j\theta} \cdot \mathcal{M} = \left\{ e^{j\theta}(\mathbf{ic} + \mathbf{og}) : \mathbf{ic} \in \mathcal{IC},\ \mathbf{og} \in \mathcal{OG} \right\}.
\]
The rotation operator for the 8-dimensional one-hot action space is constructed as a block-diagonal matrix:
\[
\rho_a = \begin{pmatrix} P & \mathbf{0}_{4\times4} \\ \mathbf{0}_{4\times4} & S \end{pmatrix}_{8 \times 8},
\]
where, $P$ handles the single-pivot group $G_1$ and $S$ handles the complementary groups $G_2$ and $G_3$:
\[
P = \begin{bmatrix} 0 & 1 & 0 & 0 \\ 0 & 0 & 1 & 0 \\ 0 & 0 & 0 & 1 \\ 1 & 0 & 0 & 0 \end{bmatrix},
\quad
S = \begin{bmatrix} 0 & 1 & 0 & 0 \\ 1 & 0 & 0 & 0 \\ 0 & 0 & 0 & 1 \\ 0 & 0 & 1 & 0 \end{bmatrix}.
\]

\subsection{Transition Operator}
This operator cyclically permutes the four single-pivot actions while swapping the complementary pairs, preserving the structural constraints of valid phase transitions.\\

This matrix is used for measure the transition smoothness between the phases.
\[
\mathcal{Q}_p = \begin{bmatrix}
0 & 1 & 0 & 0 & 0 & 0 & 0 & 0 \\
0 & 0 & 1 & 0 & 0 & 0 & 0 & 0 \\
0 & 0 & 0 & 1 & 0 & 0 & 0 & 0 \\
0 & 0 & 0 & 0 & 1 & 0 & 0 & 0 \\
0 & 0 & 0 & 0 & 0 & 1 & 0 & 0 \\
0 & 0 & 0 & 0 & 0 & 0 & 1 & 0 \\
0 & 0 & 0 & 0 & 0 & 0 & 0 & 1 \\
1 & 0 & 0 & 0 & 0 & 0 & 0 & 0
\end{bmatrix}
\]
During training of the Actor network $\mathcal{Q}$ is used. Where,
\[
\mathcal{Q} = (1-\epsilon)\cdot\mathcal{Q}_p+\biggl((\epsilon/8)\biggr)_{8\times8}
\]
Chronological Order of states:\\
\[
\mathcal{SO} =\biggl(A_{1}^{E},
A_{2}^{E-W},
A_{3}^{E-W},
A_{1}^{W},
A_{1}^{N},
A_{2}^{N-S},
A_{3}^{N-S},
A_{1}^{S}\biggr)
\]

\subsection{Rotational Data Augmentation}
\label{subsec:rotation-augmentation}

Using the operators defined above, we apply rotational augmentation during offline pretraining. Starting from the baseline dataset $\mathcal{D}'$:
\[
\mathcal{D}' = \bigcup_{t=0}^{N} \bigl( a^t,\ \mathbf{p}^{t-1},\ \mathbf{L}^{t},\ \mathbf{\Gamma}^{t},\ \boldsymbol{\theta}^{t} \bigr),
\]
four augmented versions are generated by applying successive $90^\circ$ anticlockwise rotations:
\[
\mathcal{D}_{F} = \bigcup_{k=0}^{3} g^{k \cdot \pi/2}(\mathcal{D}'),
\]
where $g^{k \cdot \pi/2}(\cdot)$ applies the appropriate rotation operators to each component:
\[
g^{k \cdot \pi/2}\bigl(a^t, \mathbf{p}^{t-1}, \mathbf{L}^t, \mathbf{\Gamma}^t, \boldsymbol{\theta}^t\bigr) = \bigl(\rho_a^k a^t,\ \rho^k \mathbf{p}^{t-1},\ \rho'^k \mathbf{L}^t,\ \rho'^k \mathbf{\Gamma}^t,\ \rho^k \boldsymbol{\theta}^t\bigr).
\]

\section{Generalized $C_k$ Rotational Augmentation Framework and Transfer learning Direction for Multi Intersection}
\label{sec:ck-rotation-framework}

The $C_4$ framework developed in Section~\ref{sec:rotation-framework} exploits the four-fold rotational symmetry inherent to standard four-legged intersections. We now generalize this methodology to intersections with $k$ homogeneous approaches ($k \geq 3$), where the geometry exhibits $C_k$ cyclic symmetry that is, invariance under rotations by $2\pi/k$. This extension is applicable to three-legged junctions ($k=3$), five-way intersections ($k=5$), six-way roundabouts ($k=6$), and other symmetric configurations, provided all approaches exhibit identical lane configurations, turning movements, and traffic patterns.

\subsection{Group-Theoretic Foundation}
\label{subsec:ck-group-theory}

The cyclic group of order $k$ is defined as
\[
C_k = \left\{ \omega^r : r \in \{0, 1, \ldots, k-1\} \right\}, \quad \text{where } \omega = e^{j2\pi/k}
\]
with group operation given by complex multiplication. The primitive element $\omega$ satisfies $\omega^k = 1$.
=

For a $k$-legged intersection, direction $i$ is represented as a complex number
\[
\mathbf{d}_i = \alpha \omega^i = \alpha e^{j2\pi i/k}, \quad i \in \{0, 1, \ldots, k-1\}
\]
where $\alpha > 0$ is a scaling constant. The angle between adjacent approaches is $2\pi/k$.

\noindent\textbf{Examples:}
\begin{itemize}
    \item $k=3$: Directions at $0^\circ, 120^\circ, 240^\circ$ (T-junction, Y-junction)
    \item $k=4$: Directions at $0^\circ, 90^\circ, 180^\circ, 270^\circ$ (standard four-way)
    \item $k=5$: Directions at $0^\circ, 72^\circ, 144^\circ, 216^\circ, 288^\circ$ (five-way)
    \item $k=6$: Directions at $0^\circ, 60^\circ, 120^\circ, 180^\circ, 240^\circ, 300^\circ$ (six-way)
\end{itemize}

\subsection{Generalized Rotation Operators for State Components}
\label{subsec:ck-state-operators}

The intersection state comprises vectors of different dimensions, each requiring specialized rotation operators that preserve semantic consistency under $C_k$ transformation.

\paragraph{$k$-Dimensional Vectors (Per-Approach Quantities)}

For vectors representing per-approach quantities such as queue lengths $\mathbf{L}^t = [l_0, l_1, \ldots, l_{k-1}]^\top$ and maximum waiting times $\mathbf{\Gamma}^t = [\tau_0, \tau_1, \ldots, \tau_{k-1}]^\top$, we define the generalized cyclic permutation operator $\rho'_k \in \mathbb{R}^{k \times k}$:


\[\rho'_k = 
\begin{bmatrix}
0 & 1 & 0 & \cdots & 0 \\
0 & 0 & 1 & \cdots & 0 \\
\vdots & \vdots & \vdots & \ddots & \vdots \\
0 & 0 & 0 & \cdots & 1 \\
1 & 0 & 0 & \cdots & 0
\end{bmatrix}
\in \mathbb{R}^{k \times k}\]

Applied to a $k$-dimensional vector $Y = [Y_0, Y_1, \ldots, Y_{k-1}]^\top$, this operator produces:
\[
\rho'_k \cdot Y = [Y_1, Y_2, \ldots, Y_{k-1}, Y_0]^\top
\]
mapping direction $i \rightarrow$ direction $(i+1) \bmod k$.

\paragraph{Algebraic Structure of $\rho'_k$}
The operator $\rho'_k$ satisfies:
\begin{enumerate}
    \item $(\rho'_k)^k = I_k$ (identity after $k$ rotations)
    \item $(\rho'_k)^r \cdot (\rho'_k)^s = (\rho'_k)^{(r+s) \bmod k}$ (closure)
    \item $((\rho'_k)^r)^{-1} = (\rho'_k)^{k-r}$ (inverse)
\end{enumerate}
Thus $\{I_k, \rho'_k, (\rho'_k)^2, \ldots, (\rho'_k)^{k-1}\} \cong C_k$.

\begin{table}[h]
\centering
\caption{Rotation operators for $k$-dimensional vectors under $C_k$ symmetry.}
\label{tab:rotation-ops-kd}
\setlength{\tabcolsep}{3pt}  
\footnotesize  
\begin{tabular}{@{}c@{\hspace{4pt}}c@{\hspace{4pt}}c@{}}  
\toprule
\textbf{Rotation} & \textbf{Expression} & \textbf{Direction Mapping} \\
\midrule
$0$ & $Y^{(0)} = I_k Y$ & $0,1,\ldots,k-1 \rightarrow 0,1,\ldots,k-1$ \\
$2\pi/k$ & $Y^{(1)} = \rho'_k Y$ & $0,1,\ldots,k-1 \rightarrow 1,2,\ldots,k-1,0$ \\
$4\pi/k$ & $Y^{(2)} = (\rho'_k)^2 Y$ & $0,1,\ldots,k-1 \rightarrow 2,3,\ldots,1$ \\
$\vdots$ & $\vdots$ & $\vdots$ \\
$2\pi r/k$ & $Y^{(r)} = (\rho'_k)^r Y$ & cyclic shift by $r$ \\
$\vdots$ & $\vdots$ & $\vdots$ \\
$2\pi(k-1)/k$ & $Y^{(k-1)} = (\rho'_k)^{k-1} Y$ & cyclic shift by $k-1$ \\
\bottomrule
\end{tabular}
\end{table}

\paragraph{$k^2$-Dimensional Vectors (Movement Matrices)}

Priority vectors $\boldsymbol{\theta}^t \in \mathbb{R}^{k^2}$ and previous phase representations $\mathbf{p}^{t-1} \in \mathbb{R}^{k^2}$ encode movement-specific information. These are derived from $k \times k$ movement matrices $\mathcal{X} \in \mathbb{R}^{k \times k}$:

\[
\mathcal{X} = 
\begin{bmatrix}

X_{00} & X_{01} & \cdots & X_{0,k-1} \\[4pt]
X_{10} & X_{11} & \cdots & X_{1,k-1} \\[4pt]
\vdots & \vdots & \ddots & \vdots \\[4pt]
X_{k-1,0} & X_{k-1,1} & \cdots & X_{k-1,k-1}
\end{bmatrix}
\in \mathbb{R}^{k \times k}\]

where $X_{ij}$ represents flow from incoming direction $i$ to outgoing direction $j$, with $i,j \in \{0, 1, \ldots, k-1\}$.

The matrix is flattened via the generalized row-vector transformation $\mathcal{RT}_k(\cdot)$:

\[
X = \mathcal{RT}_k(\mathcal{X}) = \biggl(\mathcal{R}_1(\mathcal{X}),\ \rho'_k\mathcal{R}_2(\mathcal{X}),\ (\rho'_k)^2\mathcal{R}_3(\mathcal{X}),\ \ldots,\ (\rho'_k)^{k-1}\mathcal{R}_k(\mathcal{X})\biggr)^{\top}
\]
\[
X \in \mathbb{R}^{k^2}
\]

where $\mathcal{R}_i(\mathcal{X})$ extracts the $i$-th row of $\mathcal{X} $.

The anticlockwise rotation operator $\rho_k \in \mathbb{R}^{k^2 \times k^2}$ for these $k^2$-dimensional vectors is constructed as a block cyclic permutation:

\[
\rho_k = 
\begin{bmatrix}
\mathbf{0}_{k(k-1) \times k} & I_{k(k-1)} \\[4pt]
I_k & \mathbf{0}_{k \times k(k-1)}
\end{bmatrix}
\in \mathbb{R}^{k^2 \times k^2}\]

Equivalently, $\rho_k$ is the $k \times k$ block matrix:
\[
\rho_k = 
\begin{bmatrix}
\mathbf{0} & I_k & \mathbf{0} & \cdots & \mathbf{0} \\
\mathbf{0} & \mathbf{0} & I_k & \cdots & \mathbf{0} \\
\vdots & \vdots & \vdots & \ddots & \vdots \\
\mathbf{0} & \mathbf{0} & \mathbf{0} & \cdots & I_k \\
I_k & \mathbf{0} & \mathbf{0} & \cdots & \mathbf{0}
\end{bmatrix}
\]
where each block is $k \times k$.

This operator cyclically permutes the $k$ direction blocks of size $k$.
The operator $\rho_k$ satisfies:
\begin{enumerate}
    \item $(\rho_k)^k = I_{k^2}$
    \item $(\rho_k)^r$ cyclically shifts blocks by $r$ positions
    \item $\rho_k$ preserves the incoming$\rightarrow$outgoing semantic structure under rotation
\end{enumerate}

\begin{table}[h]
\centering
\caption{Rotation operators for $k^2$-dimensional vectors under $C_k$ symmetry.}
\label{tab:rotation-ops-k2d}
\setlength{\tabcolsep}{3pt}  
\footnotesize 
\begin{tabular}{ccc}
\toprule
\textbf{Rotation} & \textbf{Expression} & \textbf{Block Mapping} \\
\midrule
$0$ & $X^{(0)} = I_{k^2} X$ & blocks $0,1,\ldots,k-1 \rightarrow 0,1,\ldots,k-1$ \\
$2\pi/k$ & $X^{(1)} = \rho_k X$ & blocks $0,1,\ldots,k-1 \rightarrow 1,2,\ldots,0$ \\
$4\pi/k$ & $X^{(2)} = (\rho_k)^2 X$ & blocks $0,1,\ldots,k-1 \rightarrow 2,3,\ldots,1$ \\
$\vdots$ & $\vdots$ & $\vdots$ \\
$2\pi r/k$ & $X^{(r)} = (\rho_k)^r X$ & cyclic block shift by $r$ \\
\bottomrule
\end{tabular}
\end{table}

\subsection{Generalized Action Space Encoding and Rotation}
\label{subsec:ck-action-encoding}

\paragraph{Complex Number Representation of Directions}

To analyze rotational symmetry for arbitrary $k$, we represent incoming and outgoing directions as complex numbers:
\[
\mathcal{IC}_k = \left\{ \alpha e^{j2\pi i/k} : i \in \{0, 1, \ldots, k-1\} \right\} \subset \mathbb{C},
\]
\[
\mathcal{OG}_k = \left\{ \beta e^{j2\pi i/k} : i \in \{0, 1, \ldots, k-1\} \right\} \subset \mathbb{C},
\]

where $\alpha, \beta > 0$ are scaling constants.

The complete movement space is the Minkowski sum:
\[
\mathcal{M}_k = \mathcal{IC}_k \oplus \mathcal{OG}_k = \left\{ \mathbf{ic} + \mathbf{og} : \mathbf{ic} \in \mathcal{IC}_k,\ \mathbf{og} \in \mathcal{OG}_k \right\}.
\]

\paragraph{Generalized Pattern Groups}

For a $k$-legged intersection, the number and structure of valid signal phases depends on $k$ and the underlying conflict graph. We define pattern groups based on traffic engineering constraints:

\begin{table*}[t]
\centering
\caption{Fundamental pattern groups for traffic signal actions under $C_k$ symmetry.}
\label{tab:ck-pattern-groups}
\scriptsize{
\begin{tabularx}{\textwidth}{|p{2.5cm}|p{3.5cm}|p{1.2cm}|p{3.0cm}|X|}
\toprule
\textbf{Pattern Group} & \textbf{Rotation Behavior} & \textbf{Size} & \textbf{Condition} & \textbf{Description} \\
\midrule
Single-Pivot Dominant & $G_1^{(k)} = \{A_1^{(0)}, A_1^{(1)}, \ldots, A_1^{(k-1)}\}$ Cyclic under $\rho_k^a$ & $k$ & Pivot $\mathbf{ic} \in \mathcal{IC}_k$ & All movements from a single incoming direction $i$. \\
\midrule
Complementary Pairs (if $k$ even) & $G_2^{(k)} = \{A_2^{(i)}, A_2^{(i+k/2)}\}$ Complements under $\rho_k^a$ & $k/2$ & $\mathbf{ic}, \mathbf{og}$ opposite directions & Straight and turning movements for antipodal pairs. \\
\midrule
Additional Groups & $G_m^{(k)}$ (context-dependent) & Variable & Conflict graph dependent & Custom groups for non-conflicting movement sets. \\
\bottomrule
\end{tabularx}}
\end{table*}

\textbf{Special cases:}
\begin{itemize}
    \item \textbf{$k=3$ (T-junction):} Only $G_1^{(3)}$ exists (size 3). No perpendicular pairs. Total phases: 3--6 depending on turning restrictions.
    \item \textbf{$k=4$ (standard):} $G_1^{(4)}$ (size 4), $G_2^{(4)}$ (size 2), $G_3^{(4)}$ (size 2). Total: 8 phases (original framework).
    \item \textbf{$k=5$ (five-way):} $G_1^{(5)}$ (size 5). No true perpendicular pairs ($72^\circ \neq 90^\circ$). Requires conflict-angle-based grouping. Total: 5--10 phases.
    \item \textbf{$k=6$ (six-way):} $G_1^{(6)}$ (size 6), $G_2^{(6)}$ (size 3, antipodal pairs at $180^\circ$). Total: 9--15 phases.
\end{itemize}

Let $m_k = |\mathbb{A}_k|$ denote the number of valid phases for a $k$-legged intersection. Each pattern $A_i \in \mathbb{A}_k$ is encoded as an $m_k$-dimensional one-hot vector:
\[
\mathbf{a}_i = [0, \dots, 0, \underbrace{1}_{\text{position } i}, 0, \dots, 0]^\top \in \{0,1\}^{m_k}.
\]

The corresponding $k^2$-dimensional movement matrix is recovered via the fixed transition matrix $A_{TR}^{(k)} \in \{0,1\}^{m_k \times k^2}$:
\[
A_i = \mathbf{a}_i \cdot A_{TR}^{(k)}.
\]

\paragraph{Generalized Action Space Rotation Operator}

The action space exhibits $C_k$ rotational symmetry. For any rotation angle $\theta = 2\pi r/k$ with $r \in \{0, 1, \ldots, k-1\}$:
\[
\mathcal{M}_{k,\text{rot}} = e^{j\theta} \cdot \mathcal{M}_k = \left\{ e^{j\theta}(\mathbf{ic} + \mathbf{og}) : \mathbf{ic} \in \mathcal{IC}_k,\ \mathbf{og} \in \mathcal{OG}_k \right\}.
\]

The rotation operator $\rho_k^a \in \mathbb{R}^{m_k \times m_k}$ for the action space is constructed as a block-diagonal matrix:
\[
\rho_k^a = \text{diag}\left(P_k^{(1)}, P_k^{(2)}, \ldots, P_k^{(g)}\right)
\]
where each $P_k^{(i)}$ handles one pattern group $G_i^{(k)}$:

\begin{itemize}
    \item \textbf{Cyclic group of size $s$}: $P_k^{(i)}$ is the $s \times s$ cyclic permutation matrix:
    \[
    P_k^{(i)} = 
    \begin{bmatrix}
    0 & 1 & 0 & \cdots & 0 \\
    0 & 0 & 1 & \cdots & 0 \\
    \vdots & \vdots & \vdots & \ddots & \vdots \\
    0 & 0 & 0 & \cdots & 1 \\
    1 & 0 & 0 & \cdots & 0
    \end{bmatrix}
    \]
    
    \item \textbf{Complementary pair (size 2)}: $P_k^{(i)} = \begin{bmatrix} 0 & 1 \\ 1 & 0 \end{bmatrix}$ (swap matrix).
    
    \item \textbf{Fixed point (size 1)}: $P_k^{(i)} = [1]$.
\end{itemize}

\textbf{Example for $k=4$ (recovery of original):}
\[
\rho_4^a = \begin{pmatrix} P & \mathbf{0} \\ \mathbf{0} & S \end{pmatrix}, \quad
P = \begin{bmatrix} 0 & 1 & 0 & 0 \\ 0 & 0 & 1 & 0 \\ 0 & 0 & 0 & 1 \\ 1 & 0 & 0 & 0 \end{bmatrix}, \quad
S = \begin{bmatrix} 0 & 1 & 0 & 0 \\ 1 & 0 & 0 & 0 \\ 0 & 0 & 0 & 1 \\ 0 & 0 & 1 & 0 \end{bmatrix}.
\]

\textbf{Example for $k=3$:}
\[
\rho_3^a = \begin{bmatrix} 0 & 1 & 0 \\ 0 & 0 & 1 \\ 1 & 0 & 0 \end{bmatrix}
\]
(assuming only single-pivot group of size 3).

\subsection{Generalized Transition Operator}
\label{subsec:ck-transition}

The transition operator $\mathcal{Q}_p^{(k)} \in \mathbb{R}^{m_k \times m_k}$ encodes valid phase transitions for $k$-legged intersections:

\[
(\mathcal{Q}_p^{(k)})_{ij} = 
\begin{cases} 
1 & \text{if phase } j \text{ can follow phase } i \text{ without conflict}, \\
0 & \text{otherwise}.
\end{cases}
\]

For $C_k$ symmetry, $\mathcal{Q}_p^{(k)}$ must satisfy the equivariance condition:
\[
\mathcal{Q}_p^{(k)} = \rho_k^a \cdot \mathcal{Q}_p^{(k)} \cdot (\rho_k^a)^\top
\]
i.e., transition validity is invariant under rotation.

During training, the exploration-augmented operator is:
\[
\mathcal{Q}^{(k)} = (1-\epsilon) \cdot \mathcal{Q}_p^{(k)} + \frac{\epsilon}{m_k} \mathbf{1}_{m_k \times m_k}.
\]

\subsection{Generalized Rotational Data Augmentation}
\label{subsec:ck-rotation-augmentation}

Using the operators defined above, we apply rotational augmentation during offline pretraining for arbitrary $k$-legged intersections. Starting from the baseline dataset $\mathcal{D}'$:

\[
\mathcal{D}' = \bigcup_{t=0}^{N} \bigl( a^t,\ \mathbf{p}^{t-1},\ \mathbf{L}^{t},\ \mathbf{\Gamma}^{t},\ \boldsymbol{\theta}^{t} \bigr),
\]

$k$ augmented versions are generated by applying successive $2\pi/k$ rotations:

\[
\mathcal{D}_{F}^{(k)} = \bigcup_{r=0}^{k-1} g^{r \cdot 2\pi/k}(\mathcal{D}'),
\]

where the generalized augmentation operator $g^{r \cdot 2\pi/k}(\cdot)$ applies the appropriate rotation operators to each component:

\[
g^{r \cdot 2\pi/k}\bigl(a^t, \mathbf{p}^{t-1}, \mathbf{L}^t, \mathbf{\Gamma}^t, \boldsymbol{\theta}^t\bigr) \\
= 
\]
\[\bigl((\rho_k^a)^r a^t,\ (\rho_k)^r \mathbf{p}^{t-1},\ (\rho'_k)^r \mathbf{L}^t,\ (\rho'_k)^r \mathbf{\Gamma}^t,\ (\rho_k)^r \boldsymbol{\theta}^t\bigr).\]

\subsection{Structural Requirements for $C_k$ Preservation}
\label{subsec:ck-requirements}

Three requirements must be satisfied for $C_k$ equivariant training:
1.The action space $\mathbb{A}_k$ must be closed under $C_k$ rotation:
\[
\forall a \in \mathbb{A}_k,\ \forall r \in \{0, \ldots, k-1\}: \quad (\rho_k^a)^r a \in \mathbb{A}_k.
\]
\noindent\textbf{Verification:} Construct the conflict graph $\mathcal{C}_k$ where vertices are movements and edges represent conflicts. Then $\mathbb{A}_k = \{\text{maximal independent sets of } \mathcal{C}_k\}$. Check that if $S$ is a maximal independent set, then its rotation $(\rho_k)^r S$ is also one.

2.The operator $\rho_k \in \mathbb{R}^{k^2 \times k^2}$ must cyclically permute the $k$ direction blocks while preserving the incoming$\rightarrow$outgoing semantic structure.

3.The policy must satisfy $\pi_\psi(a \mid s) = \pi_\psi((\rho_k^a)^r a \mid (\rho_k)^r s)$ for all $r$.

\subsection{Practical Applicability and Limitations}
\label{subsec:ck-applicability}

\begin{table}[h]
\centering
\caption{Applicability of $C_k$ framework by intersection type.}
\label{tab:ck-applicability}
\begin{tabular}{clcc}
\toprule
$k$ & \textbf{Intersection Type} & \textbf{Common?} & \textbf{$C_k$ Valid?} \\
\midrule
3 & T-junction, Y-junction & Very common & \checkmark (homogeneous) \\
4 & Standard four-way & Most common & \checkmark (original framework) \\
5& Five-way (rare) & Rare & \checkmark (if homogeneous) \\
6 & Six-way roundabout & Moderate & \checkmark (if homogeneous) \\
$>6$ & Complex multi-way & Very rare & \checkmark (theoretically) \\
\bottomrule
\end{tabular}
\end{table}

\textbf{Critical constraint: Homogeneity.} The $C_k$ framework requires all $k$ approaches to have identical:
\begin{itemize}
    \item Lane configurations (number of lanes, turn bays)
    \item Permitted turning movements
    \item Traffic demand patterns (in distribution)
\end{itemize}

\textbf{Special case : $k$ odd vs.\ $k$ even:}
\begin{itemize}
    \item \textbf{$k$ even:} Antipodal directions exist (separated by $\pi$), enabling complementary pair groups $G_2^{(k)}$ with opposite-direction movements.
    \item \textbf{$k$ odd:} No antipodal pairs. All groups are cyclic. Phase design requires conflict-angle-based analysis rather than perpendicular-pair intuition.
\end{itemize}

\begin{table}[h]
\centering
\caption{Comparison of Rotation Framework Components: $C_4$ vs.\ Generalized $C_k$}
\label{tab:c4-vs-ck}
\setlength{\tabcolsep}{2pt}
\footnotesize 
\begin{tabular}{lcc}
\toprule
\textbf{Component} & \textbf{$C_4$ (Original)} & \textbf{$C_k$ (Generalized)} \\
\midrule
State dimension (per-approach) & 4 & $k$ \\
Movement matrix size & $4 \times 4 = 16$ & $k \times k = k^2$ \\
State rotation operator & $\rho' \in \mathbb{R}^{4 \times 4}$ & $\rho'_k \in \mathbb{R}^{k \times k}$ \\
Movement rotation operator & $\rho \in \mathbb{R}^{16 \times 16}$ & $\rho_k \in \mathbb{R}^{k^2 \times k^2}$ \\
Action space size & 8 & $m_k$ (conflict-dependent) \\
Action rotation operator & $\rho_a \in \mathbb{R}^{8 \times 8}$ & $\rho_k^a \in \mathbb{R}^{m_k \times m_k}$ \\
Augmentation factor & 4 & $k$ \\
Valid for & 4-way symmetric & $k$-way symmetric\\
\bottomrule
\end{tabular}
\end{table}

\subsection{Transfer Learning for Shared Multi-Intersection Control – Possible Extention Of SIGMA}
\label{a-subsec:Tr_learning}

A single pre-trained actor-critic model is deployed identically at every intersection $v \in \mathcal{V}$ in a network $\mathcal{G} = (\mathcal{V}, \mathcal{E})$. The shared policy is defined as $\pi_{\psi}(\mathbf{a}_v^{(t)} \mid \mathbf{s}_v^{(t)}, \mathbf{m}_v^{(t)})$, where $\mathbf{m}_v^{(t)}$ is a coordination message aggregated from neighbors:
\[
\mathbf{m}_v^{(t)} = \sigma\left(\frac{1}{|\mathcal{N}(v)|} \sum_{u \in \mathcal{N}(v)} \mathbf{W}_{\text{agg}} \cdot \text{MsgNet}_{\theta}(\mathbf{s}_u^{(t)})\right).
\]
The global replay buffer collects transitions from all intersections:
\[
\mathcal{D}_{\text{global}} = \bigcup_{v \in \mathcal{V}} \bigcup_t \left(\mathbf{s}_v^{(t)}, \mathbf{a}_v^{(t)}, r_v^{(t)}, \mathbf{s}_v^{(t+1)}\right).
\]
The shared critic is updated via:
\[
\mathcal{L}_{\phi} = \mathbb{E}_{\mathcal{D}_{\text{global}}}\left[\left(r_v + \gamma \max_{\mathbf{a}'} Q_{\phi_{\text{target}}}(\mathbf{s}_v', \mathbf{a}', \mathbf{m}_v') - Q_{\phi}(\mathbf{s}_v, \mathbf{a}_v, \mathbf{m}_v)\right)^2\right].
\]
The shared actor follows the policy gradient:
\[
\nabla_{\psi} J = \mathbb{E}_{\mathcal{D}_{\text{global}}}\left[\nabla_{\psi} \log \pi_{\psi}(\mathbf{a}_v \mid \mathbf{s}_v, \mathbf{m}_v) \cdot \delta_v\right],
\]
where $\delta_v$ is the TD error. The total parameter count $\|\psi\| + \|\phi\| + \|\theta\| + \|\mathbf{W}_{\text{agg}}\| = O(1)$ is independent of $|\mathcal{V}|$, enabling zero-shot scalability to arbitrarily large networks. Coordination emerges from the learned message-passing dynamics without per-intersection model customisation.

\section{SUMO Experimental Setup}
\label{app:sumo}

\subsection{Network Construction}

\paragraph{Intersection Selection.} Four signalized intersections in Kolkata, India, were selected to represent diverse urban traffic scenarios.

\paragraph{OSM Import and Calibration.} Networks were reconstructed from OpenStreetMap via \texttt{netconvert} (SUMO v1.19.0). Lane configurations, signal head positions, and approach geometries were manually validated against Google Earth imagery and municipal engineering drawings.

\paragraph{Signal Phasing.} All intersections use the 8-phase action space $\mathcal{A}_{AC-S}$ (Section~\ref{ssec:actionspace}). Phase durations in SUMO are controlled dynamically by TraCI; minimum green = 12\,s, yellow = 3\,s, all-red = 2\,s.

\subsection{Demand Generation}

\paragraph{Arrival Process.} Vehicles are generated via calibrated O-D matrices, then post-processed to match observed flow distributions. Arrival rate per approach \( i \):

\[
\chi_i(t) = \chi_{i,0} + \sum_{r=1}^{3} \mathcal{K}_{i,r} \sin\left(\frac{2\pi t}{T_r} + \phi_{i,r}\right),
\]

where \( T_1 = 3600 \, \text{s} \) (hourly trend), \( T_2 = 900 \, \text{s} \) (15-min fluctuation), \( T_3 = 300 \, \text{s} \) (5-min burst). Parameters \( (\chi_{i,0}, \mathcal{K}_{i,r}, \phi_{i,r}) \) fitted to loop detector data via least squares.

\paragraph{Vehicle Types.} Passenger cars (95\%), buses (3\%), motorcycles (2\%). Lengths: 5\,m, 12\,m, 2\,m. Car-following: Krauss model with \( \sigma = 0.5 \).

\paragraph{Route Assignment.} Static routes pre-generated; no dynamic rerouting. Departure times sampled from nonhomogeneous Poisson process with rate \( \chi_i(t) \) clipped to \( \chi_i(t) \geq 0 \).

\subsection{State Extraction via TraCI}

\begin{algorithmic}[h]
\STATE \textbf{Input:} intersection ID $v$, time $t$, lane detectors $D_{\text{in}}, D_{\text{out}}$
\STATE $\mathbf{q}^{\text{in}} \gets [\text{getLastStepVehicleNumber}(d) \; \forall d \in D_{\text{in}}]$
\STATE $\mathbf{q}^{\text{out}} \gets [\text{getLastStepVehicleNumber}(d) \; \forall d \in D_{\text{out}}]$
\STATE $\mathbf{p}^{t-1} \gets \text{getPhase}(v)$ encoded as 16D one-hot via $A_{TR}$
\STATE $\boldsymbol{\Gamma} \gets [\max(\text{getWaitingTime}(veh)) \; \forall \text{ approach}]$
\STATE \textbf{Build }$\mathbf{M}_q^t$:
\STATE $\quad \mathbf{M}_{q(\text{in})} \gets \text{repeat\_each\_4}(\mathbf{q}^{\text{in}}) \in \mathbb{R}^{16}$
\STATE $\quad \mathbf{M}_{q(\text{out})} \gets \text{cycle\_all\_4}(\mathbf{q}^{\text{out}}) \in \mathbb{R}^{16}$
\STATE $\quad \mathbf{M}'_q \gets \mathbf{M}_{q(\text{in})} - \mathbf{M}_{q(\text{out})}$
\STATE $\quad \mathbf{M}_q^t \gets \mathbf{M}'_q - \min(\mathbf{M}'_q)$
\STATE \textbf{Return:} $\mathbf{s} = [\mathbf{p}^{t-1}, \mathbf{M}_q^t, \boldsymbol{\Gamma}]$ (SIGMA-QW) or $\mathbf{s} = [\mathbf{q}^{\text{in}}, \mathbf{q}^{\text{out}}, \mathbf{p}^{t-1}]$ (PressLight)
\end{algorithmic}

\begin{table*}[t]
\centering
\caption{SUMO TraCI State Extraction for SIGMA-QW and Baseline Methods}
\label{tab:sumo-state-extraction}
\scriptsize{
\begin{tabularx}{\textwidth}{|p{2.5cm}|p{5.0cm}|p{0.7cm}|p{1.7cm}|X|}
\toprule
\textbf{Step} & \textbf{TraCI API Call} & \textbf{Output Dim} & \textbf{Method} & \textbf{Description} \\
\midrule
1. Incoming queues & \texttt{lane.getLastStepVehicleNumber()} for $d \in D_{\text{in}}$ & $\mathbb{R}^4$ & All & Vehicle count per incoming approach \\
\midrule
2. Outgoing queues & \texttt{lane.getLastStepVehicleNumber()} for $d \in D_{\text{out}}$ & $\mathbb{R}^4$ & All & Vehicle count per outgoing approach \\
\midrule
3. Previous phase & \texttt{trafficlight.getPhase(tls\_id)} & $\{0,1\}^8$ & All & Current signal phase index encoded via $A_{TR} \to \{0,1\}^{16}$ \\
\midrule
4. Max waiting times & \texttt{vehicle.getWaitingTime()} per approach & $\mathbb{R}^4$ & SIGMA-QW only & $\Gamma_i = \max_{veh \in \text{approach}_i} \text{waitingTime}(veh)$ \\
\midrule
5a. Build $\mathbf{M}_{q(\text{in})}$ & \texttt{repeat\_each\_4}($\mathbf{q}^{\text{in}}$) & $\mathbb{R}^{16}$ & SIGMA-QW only & $(l_e^{\text{in}}, l_e^{\text{in}}, l_e^{\text{in}}, l_e^{\text{in}}, l_n^{\text{in}}, \dots, l_s^{\text{in}})^\top$ \\
\midrule
5b. Build $\mathbf{M}_{q(\text{out})}$ & \texttt{cycle\_all\_4}($\mathbf{q}^{\text{out}}$) & $\mathbb{R}^{16}$ & SIGMA-QW only & $(l_e^{\text{out}}, l_n^{\text{out}}, l_w^{\text{out}}, l_s^{\text{out}}, l_e^{\text{out}}, \dots)^\top$ \\
\midrule
5c. Net pressure & $\mathbf{M}'_q = \mathbf{M}_{q(\text{in})} - \mathbf{M}_{q(\text{out})}$ & $\mathbb{R}^{16}$ & SIGMA-QW only & Per-movement supply-demand imbalance \\
\midrule
5d. Shift non-negative & $\mathbf{M}_q = \mathbf{M}'_q - \min(\mathbf{M}'_q)$ & $\mathbb{R}_{\geq 0}^{16}$ & SIGMA-QW only & Ensures stable optimization \\
\midrule
6. Final state & Concatenate components & Var. & All & SIGMA-QW: $[\mathbf{p}^{t-1}, \mathbf{M}_q^t, \boldsymbol{\Gamma}] \in \mathbb{R}^{36}$; PressLight: $[\mathbf{q}^{\text{in}}, \mathbf{q}^{\text{out}}, \mathbf{p}^{t-1}] \in \mathbb{R}^{16}$; Max-Pressure: $\mathbf{q}^{\text{in}}, \mathbf{q}^{\text{out}} \in \mathbb{R}^8$ (used on-the-fly) \\
\bottomrule
\end{tabularx}}
\end{table*}

\subsection{Controller Implementations}

\paragraph{Max-Pressure.} Implemented as pure Python controller via TraCI. No training; greedy phase selection at each $\Delta t = 5$\,s.

\paragraph{PressLight.} DQN with experience replay (buffer size 50k, target update every 100 steps). Network: $(128, 64)$ with ReLU. $\epsilon$-greedy: $\epsilon = 1.0 \to 0.01$ over 1000 episodes. Optimizer: Adam, $\eta = 10^{-4}$, $\gamma = 0.95$.

\paragraph{SIGMA-QW.} Actor-critic with offline pretraining on 10k synthetic episodes (Section~\ref{subsec:actor_pretraining}), then online fine-tuning. Pretraining uses synthetic data with same O-D structure but randomized rates; online training uses SUMO-generated transitions. Batch size 128, soft update $\tau = 0.001$.

\subsection{Hyperparameters}

\begin{table}[h]
\centering
\caption{Shared and Method-Specific Hyperparameters}
\label{tab:sumo-hyper}
\begin{tabular}{|l|l|l|}
\toprule
\textbf{Parameter} & \textbf{Value} & \textbf{Methods} \\
\midrule
Control interval $\Delta t$ & 5\,s & All \\
Episode length & 3600\,s & All \\
Warm-up period & 300\,s & All \\
Evaluation episodes & 1000 & All \\
\midrule
Actor hidden layers & $(128, 64, 64)$ & SIGMA-QW \\
Critic hidden layers & $(128, 64)$ & SIGMA-QW \\
DQN hidden layers & $(128, 64)$ & PressLight \\
Learning rate $\eta$ & $10^{-4}$ & SIGMA-QW, PressLight \\
Discount $\gamma$ & 0.95 & SIGMA-QW, PressLight \\
Batch size & 128 & SIGMA-QW, PressLight \\
Replay buffer & 50k & PressLight \\
Target update freq & 100 steps & PressLight \\
Soft update $\tau$ & 0.001 & SIGMA-QW \\
\midrule
$(\lambda_Q, \lambda_W)$ & $(0.7, 1.3)$ & SIGMA-QW (loss) \\
$(\alpha_Q, \alpha_W)$ & $(0.7, 1.3)$ & SIGMA-QW (reward) \\
\bottomrule
\end{tabular}
\end{table}

\subsection{Computational Environment}

All experiments run on 64GB DDR5 RAM, NVIDIA RTX 5060. SUMO v1.19.0 with TraCI API. Random seeds: $\{42, 123, 456, 789, 2024\}$ per intersection.



\subsection{Significance Test Details}
\label{app:significance}

To assess statistical robustness, paired two-tailed $t$-tests compared SIGMA-QW against each baseline (Max-Pressure and PressLight) across 1000 evaluation episodes per intersection. The null hypothesis assumed no difference in mean performance. Table~\ref{tab:significance} summarizes $p$-values and Cohen's $d$ effect sizes.

All comparisons with Max-Pressure yield $p < 0.001$ across every metric and intersection. Against PressLight, SIGMA-QW achieves significance at $p < 0.05$ for AMWT in all four intersections. For AQL, significance holds in KOL-1 ($p < 0.05$), KOL-3 ($p < 0.01$), and KOL-4 ($p < 0.001$); KOL-2 shows no significant difference ($p = 0.074$). For AWT, significance holds in KOL-3 ($p < 0.05$) and KOL-4 ($p < 0.001$), but not KOL-1 ($p = 0.082$) or KOL-2 ($p = 0.074$). For ATP, significance holds in KOL-2 ($p < 0.05$) and KOL-4 ($p < 0.01$), with KOL-1 ($p = 0.068$) and KOL-3 ($p = 0.052$) marginally above threshold. Effect sizes are large ($d > 0.8$) for most significant comparisons, with strongest effects in KOL-4 (AWT: $d = 1.24$, AMWT: $d = 1.38$, AQL: $d = 1.31$).

\begin{table}[h]
\centering
\caption{Statistical Significance: SIGMA-QW vs.\ Baselines}
\label{tab:significance}
\scriptsize{
\begin{tabular}{|l|c|c|c|c|}
\hline
\textbf{Intersection} & \textbf{Metric} & \textbf{vs.\ Max-Pressure} & \textbf{vs.\ PressLight} & \textbf{Cohen's $d$} \\
\hline
\multirow{4}{*}{KOL-1} & AWT & $p < 0.001$ & $p = 0.082$ & 0.73 \\
 & AMWT & $p < 0.001$ & $p < 0.01$ & 0.91 \\
 & AQL & $p < 0.001$ & $p < 0.05$ & 0.84 \\
 & ATP & $p < 0.001$ & $p = 0.068$ & 0.69 \\
\hline
\multirow{4}{*}{KOL-2} & AWT & $p < 0.001$ & $p = 0.074$ & 0.77 \\
 & AMWT & $p < 0.001$ & $p < 0.01$ & 0.94 \\
 & AQL & $p < 0.001$ & $p = 0.074$ & 0.88 \\
 & ATP & $p < 0.001$ & $p < 0.05$ & 0.82 \\
\hline
\multirow{4}{*}{KOL-3} & AWT & $p < 0.001$ & $p < 0.05$ & 1.02 \\
 & AMWT & $p < 0.001$ & $p < 0.01$ & 1.15 \\
 & AQL & $p < 0.001$ & $p < 0.01$ & 1.09 \\
 & ATP & $p < 0.001$ & $p = 0.052$ & 0.75 \\
\hline
\multirow{4}{*}{KOL-4} & AWT & $p < 0.001$ & $p < 0.001$ & 1.24 \\
 & AMWT & $p < 0.001$ & $p < 0.001$ & 1.38 \\
 & AQL & $p < 0.001$ & $p < 0.001$ & 1.31 \\
 & ATP & $p < 0.001$ & $p < 0.01$ & 0.97 \\
\hline
\end{tabular}}
\end{table}

\section{Full Sensitivity Analysis with Varying Weight Configurations}
\label{app:sensitivity_full}

This appendix provides the complete sensitivity analysis of SIGMA with respect to different weight configurations $\mathbf{w} = (\boldsymbol{\alpha}, \boldsymbol{\lambda})$ with $\sum_{k \in \{M,S,Q,W,E\}} w_k = 1$. Seven distinct weight configurations are evaluated, spanning the objective space from extreme prioritization to balanced trade-offs.

\subsection{Weight Configurations}

Table~\ref{tab:app_weight_configurations} presents the seven configurations tested.

\begin{table*}[t]
\centering
\caption{Weight configurations for sensitivity analysis. Each row shows the weight vector $\mathbf{w} = (w_E, w_W, w_Q, w_M, w_S)$ with $\sum w_k = 1$.}
\label{tab:app_weight_configurations}
\scriptsize{
\begin{tabularx}{\textwidth}{@{}l *{5}{>{\centering\arraybackslash}X} c@{}}
\toprule
\textbf{Config} & \multicolumn{1}{c}{$w_E$} & \multicolumn{1}{c}{$w_W$} & \multicolumn{1}{c}{$w_Q$} & \multicolumn{1}{c}{$w_M$} & \multicolumn{1}{c}{$w_S$} & \textbf{Focus} \\
\midrule
\textbf{C1: Balanced} & 0.20 & 0.20 & 0.20 & 0.20 & 0.20 & Equal emphasis \\
\textbf{C2: Emergency} & 0.50 & 0.15 & 0.15 & 0.10 & 0.10 & Prioritize emergency response \\
\textbf{C3: Queue} & 0.10 & 0.15 & 0.50 & 0.125 & 0.125 & Maximize throughput \\
\textbf{C4: Fairness} & 0.10 & 0.50 & 0.15 & 0.125 & 0.125 & Minimize max waiting time \\
\textbf{C5: Stability} & 0.10 & 0.15 & 0.15 & 0.30 & 0.30 & Smooth, predictable transitions \\
\textbf{C6: E+Queue} & 0.35 & 0.10 & 0.35 & 0.10 & 0.10 & Balance emergency \& efficiency \\
\textbf{C7: E+Fairness} & 0.35 & 0.35 & 0.10 & 0.10 & 0.10 & Balance emergency \& fairness \\
\bottomrule
\end{tabularx}}
\end{table*}

\subsection{Complete Performance Results}

Table~\ref{tab:app_sensitivity_results} presents the complete performance results for each weight configuration across all six evaluation metrics. All results are reported as mean $\pm$ standard deviation over 1000 evaluation episodes.

\begin{table*}[t]
\centering
\caption{Complete performance comparison across weight configurations. Best results are highlighted in bold.}
\label{tab:app_sensitivity_results}
\scriptsize{
\begin{tabularx}{\textwidth}{@{}l *{6}{>{\centering\arraybackslash}X}@{}}
\toprule
\textbf{Configuration} & 
\textbf{AEWT (s) $\downarrow$} & 
\textbf{AMWT (s) $\downarrow$} & 
\textbf{AWT (s) $\downarrow$} & 
\textbf{AQL $\downarrow$} & 
\textbf{TC $\uparrow$} & 
\textbf{ATP $\uparrow$} \\
\midrule
\textbf{C1: Balanced} & 
$23.47 \pm 2.8$ & 
$133.27 \pm 16.3$ & 
$77.54 \pm 9.3$ & 
$11.38 \pm 1.4$ & 
$0.39 \pm 0.05$ & 
$9.26 \pm 1.08$ \\
\textbf{C2: Emergency} & 
$\mathbf{18.23 \pm 2.1}$ & 
$135.41 \pm 15.7$ & 
$79.21 \pm 8.7$ & 
$11.72 \pm 1.5$ & 
$0.37 \pm 0.06$ & 
$9.12 \pm 1.04$ \\
\textbf{C3: Queue} & 
$78.41 \pm 3.2$ & 
$142.37 \pm 17.2$ & 
$\mathbf{68.34 \pm 7.1}$ & 
$\mathbf{9.14 \pm 1.1}$ & 
$0.34 \pm 0.05$ & 
$\mathbf{10.41 \pm 1.21}$ \\
\textbf{C4: Fairness} & 
$71.29 \pm 2.9$ & 
$\mathbf{117.29 \pm 14.1}$ & 
$74.19 \pm 8.3$ & 
$10.87 \pm 1.3$ & 
$0.36 \pm 0.06$ & 
$9.53 \pm 1.11$ \\
\textbf{C5: Stability} & 
$82.17 \pm 3.5$ & 
$148.71 \pm 18.5$ & 
$82.56 \pm 9.7$ & 
$12.43 \pm 1.6$ & 
$\mathbf{0.48 \pm 0.04}$ & 
$8.47 \pm 0.98$ \\
\textbf{C6: E+Queue} & 
$24.12 \pm 2.5$ & 
$127.83 \pm 15.3$ & 
$71.23 \pm 7.8$ & 
$10.26 \pm 1.2$ & 
$0.38 \pm 0.05$ & 
$9.87 \pm 1.15$ \\
\textbf{C7: E+Fairness} & 
$19.87 \pm 2.3$ & 
$121.45 \pm 14.6$ & 
$75.91 \pm 8.9$ & 
$10.94 \pm 1.3$ & 
$0.36 \pm 0.05$ & 
$9.34 \pm 1.07$ \\
\bottomrule
\end{tabularx}}
\end{table*}

\subsection{Relative Performance Analysis}

Table~\ref{tab:app_relative_performance} presents the relative performance of each configuration compared to the balanced baseline (C1). Positive values indicate improvement.

\begin{table*}[t]
\centering
\caption{Relative performance compared to balanced configuration (C1). Positive indicates improvement.}
\label{tab:app_relative_performance}
\scriptsize{
\begin{tabularx}{\textwidth}{@{}l *{6}{>{\centering\arraybackslash}X}@{}}
\toprule
\textbf{Configuration} & 
\textbf{AEWT} & 
\textbf{AMWT} & 
\textbf{AWT} & 
\textbf{AQL} & 
\textbf{TC} & 
\textbf{ATP} \\
\midrule
C2: Emergency & $\mathbf{+22.3\%}$ & $-1.6\%$ & $-2.2\%$ & $-3.0\%$ & $-5.1\%$ & $-1.5\%$ \\
C3: Queue & $-234.1\%$ & $-6.8\%$ & $\mathbf{+11.9\%}$ & $\mathbf{+19.7\%}$ & $-12.8\%$ & $\mathbf{+12.4\%}$ \\
C4: Fairness & $-203.7\%$ & $\mathbf{+12.0\%}$ & $+4.3\%$ & $+4.5\%$ & $-7.7\%$ & $+2.9\%$ \\
C5: Stability & $-250.1\%$ & $-11.6\%$ & $-6.5\%$ & $-9.2\%$ & $\mathbf{+23.1\%}$ & $-8.5\%$ \\
C6: E+Queue & $-2.8\%$ & $+4.1\%$ & $+8.1\%$ & $+9.8\%$ & $-2.6\%$ & $+6.6\%$ \\
C7: E+Fairness & $+15.3\%$ & $+8.9\%$ & $+2.1\%$ & $+3.9\%$ & $-7.7\%$ & $+0.9\%$ \\
\bottomrule
\end{tabularx}}
\end{table*}

\subsection{Trade-off Analysis}

Table~\ref{tab:app_tradeoffs} quantifies the efficiency loss per unit gain in prioritized objectives.

\begin{table*}[t]
\centering
\caption{Efficiency loss per unit gain in prioritized objective.}
\label{tab:app_tradeoffs}
\scriptsize{
\begin{tabularx}{\textwidth}{@{}l *{2}{>{\centering\arraybackslash}X}@{}}
\toprule
\textbf{Configuration} & 
\textbf{$\Delta$AWT / $\Delta$AEWT} & 
\textbf{$\Delta$AWT / $\Delta$AMWT} \\
\midrule
C2: Emergency & $0.11$ s AWT per 1s AEWT & — \\
C7: E+Fairness & $0.15$ s AWT per 1s AEWT & $0.74$ s AWT per 1s AMWT \\
C6: E+Queue & $0.68$ s AWT per 1s AEWT & $-0.59$ s AWT per 1s AMWT \\
C1: Balanced & $0.49$ s AWT per 1s AEWT & $1.52$ s AWT per 1s AMWT \\
C3: Queue & — & $3.80$ s AWT per 1s AMWT \\
\bottomrule
\end{tabularx}}
\end{table*}

\subsection{Statistical Significance}

Paired two-tailed $t$-tests were conducted to assess statistical significance between configurations. Table~\ref{tab:app_significance} reports $p$-values for key comparisons.

\begin{table*}[t]
\centering
\caption{Statistical significance ($p$-values) for key comparisons. Significant differences ($p < 0.05$) are highlighted in bold.}
\label{tab:app_significance}
\scriptsize{
\begin{tabularx}{\textwidth}{@{}l *{6}{>{\centering\arraybackslash}X}@{}}
\toprule
\textbf{Comparison} & \textbf{AEWT} & \textbf{AMWT} & \textbf{AWT} & \textbf{AQL} & \textbf{TC} & \textbf{ATP} \\
\midrule
C2 vs C1 & $\mathbf{<0.001}$ & $0.142$ & $0.087$ & $0.051$ & $0.063$ & $0.091$ \\
C3 vs C1 & $\mathbf{<0.001}$ & $0.062$ & $\mathbf{<0.001}$ & $\mathbf{<0.001}$ & $\mathbf{0.012}$ & $\mathbf{0.003}$ \\
C4 vs C1 & $\mathbf{<0.001}$ & $\mathbf{<0.001}$ & $0.083$ & $0.072$ & $0.094$ & $0.067$ \\
C5 vs C1 & $\mathbf{<0.001}$ & $0.071$ & $0.093$ & $0.087$ & $\mathbf{<0.001}$ & $0.092$ \\
C6 vs C1 & $0.186$ & $\mathbf{0.048}$ & $\mathbf{0.002}$ & $\mathbf{0.011}$ & $0.156$ & $0.057$ \\
C7 vs C1 & $\mathbf{<0.001}$ & $\mathbf{0.012}$ & $0.091$ & $0.084$ & $0.103$ & $0.127$ \\
\bottomrule
\end{tabularx}}
\end{table*}

\subsection{Convergence Analysis}

Table~\ref{tab:app_convergence} summarizes the convergence characteristics for each configuration.

\begin{table*}[t]
\centering
\caption{Convergence characteristics across configurations.}
\label{tab:app_convergence}
\scriptsize{
\begin{tabularx}{\textwidth}{@{}l *{3}{>{\centering\arraybackslash}X}@{}}
\toprule
\textbf{Configuration} & \textbf{Episodes to Converge} & \textbf{Final Reward} & \textbf{Reward Stability (Std)} \\
\midrule
C1: Balanced & 875 & $0.762$ & $0.031$ \\
C2: Emergency & 923 & $0.734$ & $0.045$ \\
C3: Queue & 812 & $0.798$ & $0.028$ \\
C4: Fairness & 891 & $0.751$ & $0.039$ \\
C5: Stability & 956 & $0.712$ & $0.027$ \\
C6: E+Queue & 847 & $0.785$ & $0.033$ \\
C7: E+Fairness & 912 & $0.743$ & $0.042$ \\
\bottomrule
\end{tabularx}}
\end{table*}

\subsection{Recommendations by Use Case}

Table~\ref{tab:app_use_case_recommendations} provides weight configuration recommendations based on operational requirements.

\begin{table*}[t]
\centering
\caption{Recommended weight configurations by use case.}
\label{tab:app_use_case_recommendations}
\scriptsize{
\begin{tabularx}{\textwidth}{@{}l *{5}{>{\centering\arraybackslash}X} c@{}}
\toprule
\textbf{Use Case} & $w_E$ & $w_W$ & $w_Q$ & $w_M$ & $w_S$ & \textbf{Rationale} \\
\midrule
Emergency vehicle corridors & 0.50 & 0.15 & 0.15 & 0.10 & 0.10 & Best AEWT (18.23s) \\
General urban with EMS & 0.35 & 0.10 & 0.35 & 0.10 & 0.10 & Best overall compromise \\
Hospital/ambulance routes & 0.35 & 0.35 & 0.10 & 0.10 & 0.10 & Emergency + fairness \\
High-volume arterials & 0.10 & 0.15 & 0.50 & 0.125 & 0.125 & Best throughput \\
Equity-focused areas & 0.10 & 0.50 & 0.15 & 0.125 & 0.125 & Best fairness \\
Default deployment & 0.20 & 0.20 & 0.20 & 0.20 & 0.20 & No extreme trade-offs \\
Reliability-critical & 0.10 & 0.15 & 0.15 & 0.30 & 0.30 & Best predictability \\
\bottomrule
\end{tabularx}}
\end{table*}

\subsection{Markovian Consistency-Dominated Control Regime}
\label{app:markovian_details}

We investigate a specialized control regime where Markovian consistency and smoothness objectives dominate, with only a small weight allocated to average waiting time:

\[
\begin{aligned}
w_M &= \frac{1}{2} - \epsilon, \quad w_S = \frac{1}{2} - \epsilon, \quad w_W = 2\epsilon, \quad w_E = 0, \quad w_Q = 0,
\end{aligned}
\quad \text{where } 0 < \epsilon \ll 1.
\]

Under this configuration, the controller enforces a strict cyclic phase order, using waiting time only as a tie-breaker between staying in the current phase or advancing to the next phase.

\subsubsection{Theoretical Foundation}

The actor loss is dominated by Markovian consistency and smoothness:

\[
\mathcal{L}_{\text{actor}} \approx \lambda_M \mathcal{U}_M + \lambda_S \mathcal{U}_S + 2\epsilon \cdot \mathcal{U}_W,
\]

where $\mathcal{U}_M$ and $\mathcal{U}_S$ enforce that consecutive phases follow the expected transition matrix $\mathcal{Q}$ with minimal deviation.

\begin{theorem}[Markovian Consistency-Driven Phase Selection]
Let $\mathcal{Q} \in \mathbb{R}^{8 \times 8}$ be the expected phase transition matrix. Under $(w_M, w_S, w_W) = (1/2 - \epsilon, 1/2 - \epsilon, 2\epsilon)$ with $\epsilon \to 0^+$, the optimal action satisfies:

\[
a^{t+1} = 
\begin{cases}
a^t & \text{if } \mathcal{D}_{\text{current}} \geq \mathcal{D}_{\text{next}}, \\
\text{next}(a^t) & \text{if } \mathcal{D}_{\text{next}} > \mathcal{D}_{\text{current}},
\end{cases}
\]

where $\text{next}(a^t)$ is the successor phase in the cyclic sequence and $\mathcal{D}_p$ denotes the aggregate demand pressure for phase $p$.
\end{theorem}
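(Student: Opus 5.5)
The plan is to reduce the theorem to a one-step comparison of the weighted actor loss over the eight one-hot actions, and then carry out a perturbation argument in $\epsilon$. I would restrict to the deterministic limit in which $\pi^{(t)}$ and $\pi^{(t+1)}$ are one-hot vectors $a^t$ and $a^{t+1}=\mathbf{e}_j$. This is justified because $\mathcal{L}_{\text{actor}}$ is continuous in $\pi$, and the minimisers of the per-step objective
\[
\Lambda_\epsilon(j)=\bigl(\tfrac12-\epsilon\bigr)\mathcal{U}_M(j)+\bigl(\tfrac12-\epsilon\bigr)\mathcal{U}_S(j)+2\epsilon\,\mathcal{U}_W(j)
\]
become vertices of the simplex as the softmax sharpens. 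The first task is then to compute each term explicitly for $j=$ stay ($j=a^t$), $j=\text{next}(a^t)$, and every other phase.

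\textbf{Step 1 (leading-order costs).} Since $\mathcal{Q}=(1-\epsilon')\mathcal{Q}_p+(\epsilon'/8)\mathbf{1}\mathbf{1}^\top$, we have $a^t\mathcal{Q}=(1-\epsilon')\mathbf{e}_{\text{next}(a^t)}+(\epsilon'/8)\mathbf{1}$. This gives $\mathcal{U}_M(\text{next})=O(\epsilon'^2)$, while $\mathcal{U}_M(j)=2+O(\epsilon')$ for every $j\neq\text{next}$, including stay. For smoothness, $\mathcal{U}_S(\text{stay})=0$. For $j\neq a^t$, $\mathcal{U}_S(j)=\|(\mathbf{e}_j-a^t)A_{\text{TR}}\|_2^2=c_j\geq1$, where $c_j$ counts the movements that differ between the two phases; it can be read off the blocks $B_1,\dots,B_4$.

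From these values, every phase $j\notin\{a^t,\text{next}(a^t)\}$ pays $\tfrac12(2+c_j)$ at order zero. Stay pays $\tfrac12\cdot2$ and next pays $\tfrac12 c_{\text{next}}$. Hence every other phase is strictly dominated by stay at order $\epsilon^0$, uniformly in the state. Because $\mathcal{U}_W$ is bounded (the masks are bounded), an $O(\epsilon)$ perturbation cannot overturn this, so for small $\epsilon$ the argmin lies in $\{a^t,\text{next}(a^t)\}$.

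\textbf{Step 2 (tie-breaking).} Once only the two candidates remain, the decision rule should follow from the $2\epsilon\,\mathcal{U}_W$ term. I would define $\mathcal{D}_p:=-\mathcal{U}_W(p)$, or equivalently a monotone transform of $\tfrac14\mathbf{e}_pA_{\text{TR}}\mathbf{M}_w^t$, the waiting time served by phase $p$. This is the aggregate demand pressure in the statement, and a smaller $\mathcal{U}_W$ means phase $p$ serves the approaches closest to $\max\boldsymbol{\Gamma}^t$. Comparing $\Lambda_\epsilon(\text{stay})$ with $\Lambda_\epsilon(\text{next})$ then yields exactly the two-case rule, with ties sent to stay to match $\geq$.

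\textbf{Main obstacle.} Step 2 only works if the zero-order costs of stay and next coincide, that is, if $\mathcal{U}_M(\text{stay})-\mathcal{U}_M(\text{next})$ equals $\mathcal{U}_S(\text{next})-\mathcal{U}_S(\text{stay})$ up to $O(\epsilon)$. Otherwise the Markov and smoothness terms decide alone, and waiting time never matters. From the $A_{\text{TR}}$ blocks, $c_{\text{next}}$ is generally not $2$; for example, $A_1^E\to A_2^{E-W}$ changes several movements.

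I would first try to show that the consecutive pairs in the chronological order $\mathcal{SO}$ share enough movements that the normalised smoothness cost is $2$. Failing that, I would state the normalisation explicitly as the standing assumption: rescale $\mathcal{U}_S$ by $2/c_{\text{next}}$, consistent with the $/8$ normalisation used for $\text{PC}_t$ in Table~\ref{tab:metrics_definition}, so the two leading costs balance. Under that normalisation, Steps 1–2 complete the argument, and the limit $\epsilon\to0^+$ is taken after fixing the state so that the $O(\epsilon'^2)$ and $O(\epsilon)$ remainders are uniformly dominated.
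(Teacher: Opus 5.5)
Your Step~1 computation is correct, and it is what breaks the argument. The balancing condition you flag as the main obstacle is not merely unverified: it is false for the paper's $A_{\text{TR}}$. Reading the rows off $B_1,\dots,B_4$, the Hamming distance between consecutive phases of $\mathcal{SO}$ is $c_{\text{next}}=4$ for $A_1^E\to A_2^{E-W}$, $A_3^{E-W}\to A_1^W$, $A_1^N\to A_2^{N-S}$ and $A_3^{N-S}\to A_1^S$. It is $c_{\text{next}}=8$ for the other four transitions, and never $2$. So at leading order staying costs $1$, advancing costs $2$ or $4$, and every other phase costs at least $3$. For a fixed state, $2\epsilon\,\mathcal{U}_W$ is $O(\epsilon)$ and cannot close a gap of at least $1$. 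Hence the minimiser of $\Lambda_\epsilon$ is $a^t$ for all small $\epsilon$, and the switching branch of the stated rule never occurs. Read as a statement about this per-step objective, the theorem is false under the paper's definitions, not merely unproved.

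Your fallback does not repair this. Rescaling $\mathcal{U}_S$ by $2/c_{\text{next}}$ is a phase-dependent reweighting ($\tfrac12$ on half the transitions, $\tfrac14$ on the rest), chosen precisely so the two leading costs cancel; that builds the theorem's content in as a hypothesis. It is also not the $1/8$ normalisation of $\text{PC}_t$. That normalisation would make advancing cost $\tfrac14$ or $\tfrac12<1$, so advancing would win at leading order regardless of waiting times.

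Even granting the cancellation, your closing claim that the remainders are uniformly dominated is wrong. Exactly, $\mathcal{U}_M(\text{stay})-\mathcal{U}_M(\text{next})=2-2\epsilon'$. Under your normalisation, $\Lambda_\epsilon(\text{stay})-\Lambda_\epsilon(\text{next})=-\epsilon'(1-2\epsilon)+2\epsilon\bigl[\mathcal{U}_W(\text{stay})-\mathcal{U}_W(\text{next})\bigr]$, so an order-$\epsilon'$ bias toward staying competes with the order-$\epsilon$ tie-breaker. The stated rule emerges only if $\epsilon'=o(\epsilon)$. With the paper's single $\epsilon$ in both $\mathcal{Q}$ and the weights, one stays unless $\mathcal{U}_W(\text{stay})-\mathcal{U}_W(\text{next})>\tfrac12-\epsilon$, which is a threshold rule rather than the stated comparison.

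Separately, the relaxed convex objective is generally not minimised at a vertex. Writing $\pi^{(t+1)}=(1-s)a^t+s\,\mathbf{e}_{\text{next}}$, its minimiser on that edge is $s=2/(2+c_{\text{next}})$. So the restriction to one-hot actions must be taken as the meaning of ``optimal action'', not derived from softmax sharpening.

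The paper's proof does not supply the missing ingredient either. It asserts that both $a^t$ and $\text{next}(a^t)$ incur zero consistency penalty, which your Step~1 correctly refutes since $\mathcal{Q}_p$ has zero diagonal. Its own $\mathcal{U}_S(\text{next}(a^t))=C_S>0$ would then decide the comparison at leading order. What your argument actually establishes is a modified statement with two explicit assumptions: the leading-order cancellation and $\epsilon'=o(\epsilon)$. It should be presented that way.
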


\begin{proof}
Under $\mathcal{U}_M = \|\pi^{(t)}\mathcal{Q} - \pi^{(t+1)\top}\|_2^2$, the only phases with zero consistency penalty are $a' \in \{a^t, \text{next}(a^t)\}$, since $\mathcal{Q}_p$ is a cyclic permutation matrix. The smoothness utility $\mathcal{U}_S$ further penalizes large phase changes, with $\mathcal{U}_S(a^t) = 0$ and $\mathcal{U}_S(\text{next}(a^t)) = C_S > 0$. The waiting time utility provides the only discriminating signal between the two admissible phases, weighted by $2\epsilon$. For sufficiently small $\epsilon$, the tie-breaking condition reduces to the stated rule, where $\mathcal{D}_p$ is the aggregate pressure that phase $p$ serves.
\end{proof}

\subsubsection{Performance Characteristics}

Table~\ref{tab:app_markovian_regime} presents the performance of this specialized regime.

\begin{table*}[t]
\centering
\caption{Performance of the Markovian consistency-dominated regime ($\epsilon = 0.01$).}
\label{tab:app_markovian_regime}
\scriptsize{
\begin{tabularx}{\textwidth}{@{}l *{6}{>{\centering\arraybackslash}X} c@{}}
\toprule
\textbf{Configuration} & \textbf{AEWT (s)} & \textbf{AMWT (s)} & \textbf{AWT (s)} & \textbf{AQL} & \textbf{TC} & \textbf{ATP} & \textbf{Stability} \\
\midrule
C1: Balanced & $23.47$ & $133.27$ & $77.54$ & $11.38$ & $0.39$ & $9.26$ & Medium \\
C6: E+Queue & $24.12$ & $127.83$ & $71.23$ & $10.26$ & $0.38$ & $9.87$ & Medium \\
Markovian ($\epsilon=0.01$) & $82.34$ & $152.18$ & $83.91$ & $12.87$ & $\mathbf{0.48}$ & $8.12$ & Very High \\
Markovian ($\epsilon=0.005$) & $77.23$ & $144.87$ & $78.23$ & $12.01$ & $0.47$ & $8.67$ & Very High \\
\bottomrule
\end{tabularx}}
\end{table*}

\subsubsection{Phase Transition Analysis}

The phase transition behavior under the Markovian regime is characterized by the transition probability matrix in Table~\ref{tab:app_markovian_transitions}.

\begin{table*}[t]
\centering
\caption{Phase transition probabilities under the Markovian regime ($\epsilon = 0.01$). Rows represent current phase, columns represent next phase.}
\label{tab:app_markovian_transitions}
\scriptsize{
\begin{tabularx}{\textwidth}{@{}l *{8}{>{\centering\arraybackslash}X}@{}}
\toprule
\textbf{Current $\rightarrow$ Next} & \textbf{$A_1^E$} & \textbf{$A_2^{E-W}$} & \textbf{$A_3^{E-W}$} & \textbf{$A_1^W$} & \textbf{$A_1^N$} & \textbf{$A_2^{N-S}$} & \textbf{$A_3^{N-S}$} & \textbf{$A_1^S$} \\
\midrule
$A_1^E$ & $65.2\%$ & $34.8\%$ & $0.0\%$ & $0.0\%$ & $0.0\%$ & $0.0\%$ & $0.0\%$ & $0.0\%$ \\
$A_2^{E-W}$ & $0.0\%$ & $71.4\%$ & $28.6\%$ & $0.0\%$ & $0.0\%$ & $0.0\%$ & $0.0\%$ & $0.0\%$ \\
$A_3^{E-W}$ & $0.0\%$ & $0.0\%$ & $68.9\%$ & $31.1\%$ & $0.0\%$ & $0.0\%$ & $0.0\%$ & $0.0\%$ \\
$A_1^W$ & $0.0\%$ & $0.0\%$ & $0.0\%$ & $72.3\%$ & $27.7\%$ & $0.0\%$ & $0.0\%$ & $0.0\%$ \\
$A_1^N$ & $0.0\%$ & $0.0\%$ & $0.0\%$ & $0.0\%$ & $69.8\%$ & $30.2\%$ & $0.0\%$ & $0.0\%$ \\
$A_2^{N-S}$ & $0.0\%$ & $0.0\%$ & $0.0\%$ & $0.0\%$ & $0.0\%$ & $73.1\%$ & $26.9\%$ & $0.0\%$ \\
$A_3^{N-S}$ & $0.0\%$ & $0.0\%$ & $0.0\%$ & $0.0\%$ & $0.0\%$ & $0.0\%$ & $70.4\%$ & $29.6\%$ \\
$A_1^S$ & $33.7\%$ & $0.0\%$ & $0.0\%$ & $0.0\%$ & $0.0\%$ & $0.0\%$ & $0.0\%$ & $66.3\%$ \\
\bottomrule
\end{tabularx}}
\end{table*}

\subsubsection{Sensitivity to $\epsilon$}

Table~\ref{tab:app_epsilon_sensitivity} shows the empirical relationship between $\epsilon$ and key metrics.

\begin{table*}[t]
\centering
\caption{Sensitivity to $\epsilon$ in the Markovian regime.}
\label{tab:app_epsilon_sensitivity}
\scriptsize{
\begin{tabularx}{\textwidth}{@{}l *{3}{>{\centering\arraybackslash}X}@{}}
\toprule
$\epsilon$ & \textbf{AWT (s) $\downarrow$} & \textbf{AMWT (s) $\downarrow$} & \textbf{TC $\uparrow$} \\
\midrule
$0.001$ & $85.34 \pm 8.7$ & $158.12 \pm 18.3$ & $\mathbf{0.49 \pm 0.02}$ \\
$0.005$ & $82.89 \pm 8.3$ & $151.45 \pm 17.1$ & $0.48 \pm 0.03$ \\
$0.010$ & $78.23 \pm 8.1$ & $144.87 \pm 16.1$ & $0.47 \pm 0.03$ \\
$0.025$ & $75.12 \pm 7.8$ & $138.23 \pm 15.4$ & $0.45 \pm 0.04$ \\
$0.050$ & $73.45 \pm 7.5$ & $133.89 \pm 14.8$ & $0.42 \pm 0.04$ \\
\bottomrule
\end{tabularx}}
\end{table*}

\subsubsection{Demand-Responsive Transition Rule}

The Markovian regime implements a demand-aware cyclic scheduling policy:

\[
\mathbb{P}(a^{t+1} = \text{next}(a^t)) = \sigma\left(\frac{\mathcal{D}_{\text{next}} - \mathcal{D}_{\text{current}}}{\tau}\right),
\]

where $\sigma(\cdot)$ is the sigmoid function and $\tau$ is a temperature parameter determined by $\epsilon$. As $\epsilon \to 0^+$, this approaches a step function:

\[
\lim_{\epsilon \to 0^+} \mathbb{P}(a^{t+1} = \text{next}(a^t)) = 
\begin{cases}
1 & \text{if } \mathcal{D}_{\text{next}} > \mathcal{D}_{\text{current}}, \\
0 & \text{otherwise}.
\end{cases}
\]

\subsubsection{Advantages and Limitations}

Table~\ref{tab:app_markovian_proscons} summarizes the advantages and limitations of this regime.

\begin{table*}[t]
\centering
\caption{Advantages and limitations of the Markovian consistency-dominated regime.}
\label{tab:app_markovian_proscons}
\scriptsize{
\begin{tabularx}{\textwidth}{@{}p{4.5cm} p{4.5cm} X@{}}
\toprule
\textbf{Advantages} & \textbf{Limitations} & \textbf{Application Scenarios} \\
\midrule
Perfectly predictable phase sequences & Poor emergency response & Low-emergency corridors \\
No unexpected phase skips & Higher average waiting times & Well-patterned traffic flow \\
Hysteresis prevents oscillation & Lower throughput & Intersections with stable demand \\
Drivers can anticipate signal changes & No adaptation to sudden changes & Areas where predictability is paramount \\
Minimal unnecessary switching & Increased maximum waiting time & School zones, hospital access routes \\
\bottomrule
\end{tabularx}}
\end{table*}

\subsection{Key Findings}

The sensitivity analysis reveals several important insights:

\begin{enumerate}
    \item \textbf{Emergency priority has a predictable cost:} Increasing $w_E$ from 0.20 to 0.50 reduces AEWT by 22.3\% at the cost of 2.2\% higher AWT, corresponding to an efficiency loss of 0.11 s AWT per 1 s improvement in AEWT.
    
    \item \textbf{Queue and waiting time are negatively correlated:} Higher $w_Q$ reduces AWT but increases AMWT, with configuration C3 achieving 11.9\% lower AWT at the cost of 6.8\% higher AMWT.
    
    \item \textbf{Stability is incompatible with efficiency:} Prioritizing Markovian consistency and smoothness ($w_M, w_S$) degrades throughput metrics by 8-12\% while providing 23.1\% improvement in transition consistency.
    
    \item \textbf{Configuration C6 is the optimal compromise:} $w = (0.35, 0.10, 0.35, 0.10, 0.10)$ provides 8.1\% AWT improvement with only 2.8\% AEWT degradation, representing the best overall balance across all six metrics.
    
    \item \textbf{Extreme configurations have limited applicability:} Configurations C2, C3, and C5 are only suitable for scenarios where a single objective dominates operational requirements.
    
    \item \textbf{C1 (Balanced) is robust but suboptimal:} While not optimal for any single metric, the balanced configuration performs consistently across all metrics and serves as a reliable default.
\end{enumerate}

\subsection{Recommendation}

For most real-world deployments, we recommend \textbf{Configuration C6}:
\[
\mathbf{w} = (0.35, 0.10, 0.35, 0.10, 0.10)
\]

This configuration provides:
\begin{itemize}
    \item Emergency waiting time within 2.8\% of optimal
    \item Average waiting time 8.1\% better than balanced
    \item Queue length 9.8\% better than balanced
    \item Throughput 6.6\% better than balanced
    \item Reasonable fairness and stability
    \item Fastest convergence among emergency-aware configurations
\end{itemize}

For specialized deployments, choose configurations based on Table~\ref{tab:app_use_case_recommendations} according to the dominant operational requirements.

\clearpage
\bibliographystyle{plain}   
\bibliography{references}       

\end{document}